\title{Graphons, mergeons, and so on!}
\author{
  Justin Eldridge \quad Mikhail Belkin \quad Yusu Wang\\
  The Ohio State University\\
  \texttt{\{eldridge, mbelkin, yusu\}@cse.ohio-state.edu}
}
\date{}
\begin{document}

\maketitle

\begin{abstract}

In this work we develop a theory of hierarchical clustering for graphs. Our
modeling assumption is that graphs are sampled from a graphon, which is a
powerful and general model for generating graphs and analyzing large networks.
Graphons are a  far richer class of graph models than stochastic blockmodels,
the primary setting for recent progress in the statistical theory of graph
clustering. We define what it means for an algorithm to produce the ``correct"
clustering, give sufficient conditions in which a method is statistically
consistent, and provide an explicit algorithm satisfying these properties.

\end{abstract}

\section{Introduction}

A fundamental problem in the theory of clustering is that of defining a cluster.
There is no single answer to this seemingly simple question. The right approach
depends on the nature of the data and the proper  modeling assumptions.  In a
statistical setting where the objects to be clustered come from some underlying
probability distribution, it is natural to define clusters in terms of the
distribution itself. The task of a clustering, then, is twofold -- to identify
the appropriate cluster structure of the distribution and to recover that
structure from a finite sample.  Thus we would like to say that a clustering is
good if it is in some sense close to the ideal structure of the underlying
distribution, and that a clustering method is \emph{consistent} if it produces
clusterings which converge to the true clustering,  given larger and larger
samples. Proving the consistency of a clustering method deepens our
understanding of it, and provides justification for using the method in the
appropriate setting.

In this work, we consider the setting in which the objects to be clustered are
the vertices of a graph sampled from a \emph{graphon} -- a very general random
graph model of significant recent interest.  We develop a statistical theory of
graph clustering in the graphon model; To the best of our knowledge, this is the
first general consistency framework developed for such a rich family of random
graphs.  The specific contributions of this paper are threefold.  First, we
define the clusters of a graphon.  Our definition results in a graphon having a
tree of clusters, which we call its \term{graphon cluster tree}. We introduce an
object called the \term{mergeon} which is a particular representation of the
graphon cluster tree that encodes the heights at which clusters merge. Second,
we develop a notion of consistency for graph clustering algorithms in which a
method is said to be consistent if its output converges to the graphon cluster
tree. Here the graphon setting poses subtle yet fundamental challenges which
differentiate it from classical clustering models, and which must be carefully
addressed. Third, we prove the existence of consistent clustering algorithms.
In particular, we provide sufficient conditions under which a graphon estimator
leads to a consistent clustering method. We then identify a specific practical
algorithm which satisfies these conditions, and in doing so present a simple
graph clustering algorithm which provably recovers the graphon cluster tree. 

\textbf{Related work.}
Graphons are objects of significant recent interest in graph theory, statistics,
and machine learning. The theory of graphons is rich and diverse; A graphon can
be interpreted as a generalization of a weighted graph with uncountably many
nodes, as the limit of a sequence of finite graphs, or, more importantly for the
present work, as a very general model for generating unweighted, undirected
graphs. Conveniently, any graphon can be represented as a symmetric,
measurable function $W : [0,1]^2 \to [0,1]$, and it is this representation that
we use throughout this paper.

The graphon as a graph limit was introduced in recent years by
\cite{Lovasz2006-dq}, \cite{Borgs2008-lv}, and others. The interested reader is
directed to the book by \citet{Lovasz2012-ww} on the subject. There has also been
a considerable recent effort to produce consistent estimators of the graphon,
including the work of \cite{Wolfe2013-xi}, \cite{Chan2014-js},
\cite{Airoldi2013-eu}, \cite{Rohe2011-jb}, and others. We will analyze a simple
modification of the graphon estimator proposed by \cite{Zhang2015-ik} and show
that it leads to a graph clustering algorithm which is a consistent estimator of
the graphon cluster tree.

Much of the previous statistical theory of graph clustering methods assumes
that graphs are generated by the so-called \emph{stochastic blockmodel}. The
simplest form of the model generates a graph with $n$ nodes by assigning each
node, randomly or deterministically, to one of two communities. An edge between
two nodes is added with probability $\alpha$ if they are from the same
community and with probability $\beta$ otherwise. A graph clustering method is
said to achieve \emph{exact recovery} if it identifies the true community
assignment of every node in the graph with high probability as $n \to \infty$.
The blockmodel is a special case of a graphon model, and our notion of
consistency will imply exact recovery of communities.

Stochastic blockmodels are widely studied, and it is known that, for example,
spectral methods like that of \cite{McSherry2001-be} are able to recover the
communities exactly as $n \to \infty$, provided that $\alpha$ and $\beta$
remain constant, or that the gap between them does not shrink too quickly. For
a summary of consistency results in the blockmodel, see \cite{Abbe_undated-lp},
which also provides information-theoretic thresholds for the conditions under
which exact recovery is possible.  In a related direction,
\cite{Balakrishnan2011-wb} examines the ability of spectral clustering to
withstand noise in a hierarchical block model.

\textbf{The density setting.} The problem of defining the underlying cluster
structure of a probability distribution goes back to \citet{hartigan_1981} who
considered the setting in which the objects to be clustered are points sampled
from a density $f : \mathcal X \to \positivereals$. In this case, the
\emph{high density clusters} of $f$ are defined to be the connected components
of the upper level sets $\{ x : f(x) \geq \lambda \}$ for any $\lambda > 0$.
The set of all such clusters forms the so-called \emph{density cluster tree}.
\citet{hartigan_1981} defined a notion of consistency for the density cluster
tree, and proved that single-linkage clustering is \emph{not} consistent. In
recent years, \cite{chaudhuri_2010} and \cite{kpotufe_2011} have demonstrated
methods which \emph{are} Hartigan consistent. \cite{eldridge2015-kr} introduced
a distance between a clustering of the data and the density cluster tree,
called the \term{merge distortion metric}. A clustering method is said to be
\term{consistent} if the trees it produces converge in merge distortion to
density cluster tree. It is shown that convergence in merge distortion is
stronger than Hartigan consistency, and that the method of
\cite{chaudhuri_2010} is consistent in this stronger sense.

In the present work, we will be motivated by the approach taken in
\cite{hartigan_1981} and \cite{eldridge2015-kr}. We note, however, that there
are significant and fundamental differences between the density case and the
graphon setting.  Specifically, it is possible for two graphons to be
equivalent in the same way that two graphs are: up to a relabeling of the
vertices. As such, a graphon $W$ is a representative of an equivalence class of
graphons modulo appropriately defined relabeling. It is therefore necessary to
define the clusters of $W$ in a way that does not depend upon the particular
representative used. A similar problem occurs in the density setting when we
wish to define the clusters not of a single density function, but rather of a
\emph{class} of densities which are equal almost everywhere;
\citet{Steinwart2011-uk} provides an elegant solution. But while the domain of
a density is equipped with a meaningful metric -- the mass of a ball around a
point $x$ is the same under two equivalent densities -- the ambient metric on
the vertices of a graphon is not useful.  As a result, approaches such as that
of \cite{Steinwart2011-uk} do not directly apply to the graphon case, and we
must carefully produce our own.  Additionally, we will see that the procedure
for sampling a graph from a graphon involves latent variables which are in
principle unrecoverable from data.  These issues have no analogue in the
classical density setting, and present very distinct challenges.

\textbf{Miscellany.} 
\iftoggle{arxiv}{
    For simplicity,%
}{
    Due to space constraints,%
}
most of the (rather involved) technical details are in the appendix. We will use
$[n]$ to denote the set $\{ 1, \ldots, n\}$, $\symdiff$ for the symmetric
difference, $\mu$ for the Lebesgue measure on $[0,1]$, and bold letters to
denote random variables. 

\section{The graphon model}

In order to discuss the statistical properties of a graph clustering algorithm,
we must first model the process by which graphs are generated. Formally, a
\term{random graph model} is a sequence of random variables $\rv{G}_1, \rv{G}_2,
\ldots$ such that the range of $\rv{G}_n$ consists of undirected, unweighted
graphs with node set $[n]$, and the distribution of $\rv{G}_n$ is invariant
under relabeling of the nodes -- that is, isomorphic graphs occur with equal
probability. A random graph model of considerable recent interest is the
\term{graphon model}, in which the distribution over graphs is determined by a
symmetric, measurable function $W : [0,1]^2 \to [0,1]$ called a \term{graphon}.
Informally, a graphon $W$ may be thought of as the weight matrix of an infinite
graph whose node set is the continuous unit interval, so that $W(x,y)$
represents the weight of the edge between nodes $x$ and $y$.

Interpreting $W(x,y)$ as a probability suggests the following graph sampling
procedure: To draw a graph with $n$ nodes, we first select $n$ points $\rv{x}_1,
\ldots, \rv{x}_n$ at random from the uniform distribution on $[0,1]$ -- we can
think of these $\rv{x_i}$ as being random ``nodes'' in the graphon. We
then sample a random graph $\rv{G}$ on node set $[n]$ by admitting the edge
$(i,j)$ with probability $W(\rv{x}_i, \rv{x}_j)$; by convention, self-edges are
not sampled. It is important to note that while we begin by drawing a set of
nodes $\{\rv{x_i}\}$ from the graphon, the graph as given to us is labeled by
integers. Therefore, the correspondence between node $i$ in the graph and node
$\rv{x_i}$ in the graphon is latent.

It can be shown that this sampling procedure defines a distribution on finite
graphs, such that the probability of graph $G = ([n], E)$ is given by

\begin{minipage}{\textwidth}
\begin{equation}
    \label{eqn:graphon_integral}
    \prob_W(\rv{G} = G) = \int_{[0,1]^n} 
    \prod_{(i,j) \in E} W(x_i, x_j) 
    \prod_{(i,j) \not \in E} \left[ 1 - W(x_i, x_j) \right]
    \prod_{i \in [n]} dx_i.
\end{equation}
\end{minipage}

For a fixed choice of $x_1, \ldots, x_n \in [0,1]$, the integrand represents the
likelihood that the graph $G$ is sampled when the probability of the edge
$(i,j)$ is assumed to be $W(x_i, x_j)$. By integrating over all possible choices
of $x_1, \ldots, x_n$, we obtain the probability of the graph.

    \begin{wrapfigure}{r}{.16\textwidth}
    \centering
    \vspace{-1em}
    \begin{subfigure}[t]{.15\textwidth}
        \centering
        \includegraphics[width=\textwidth]{./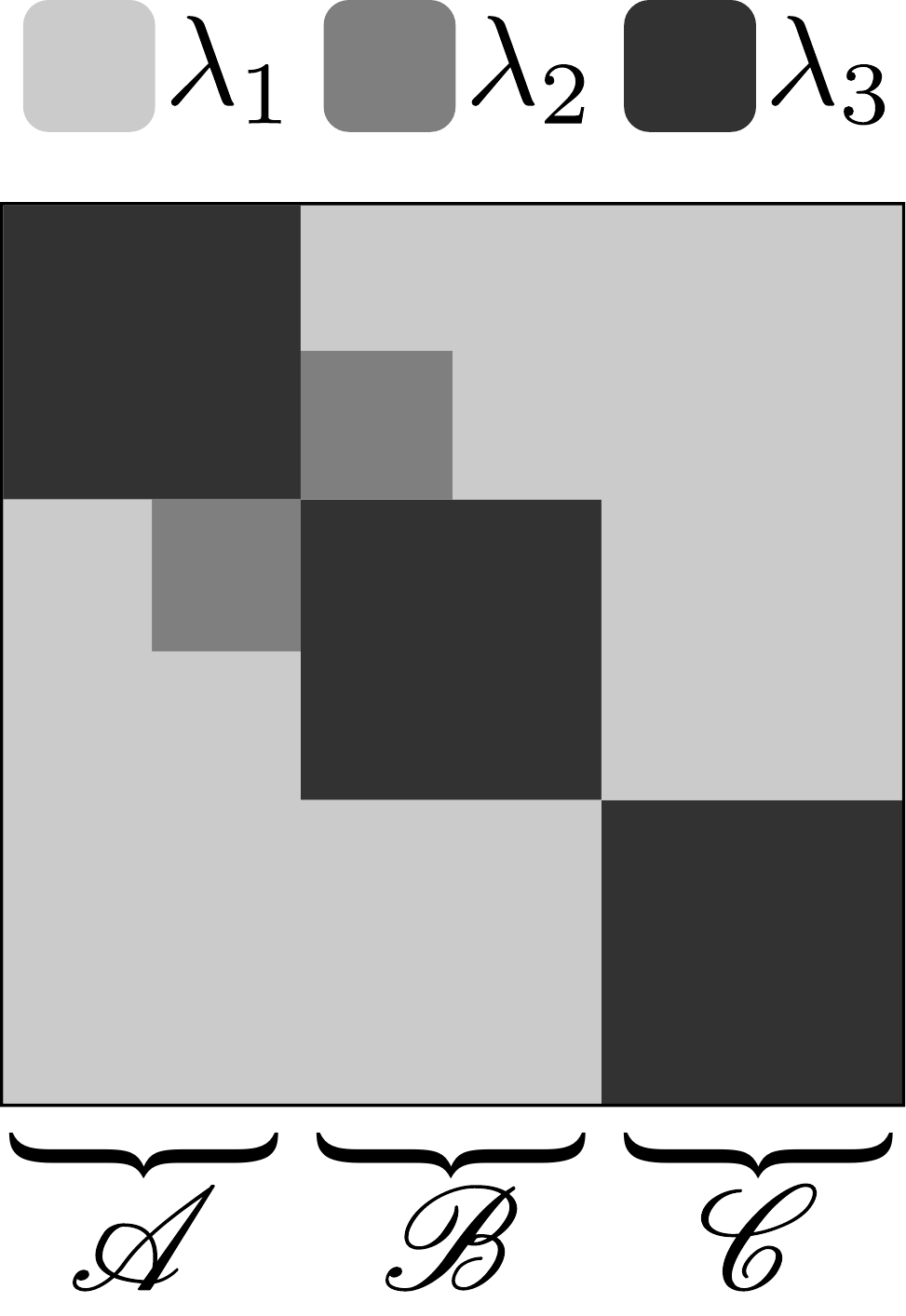}
        \caption{
        \label{fig:example_graphon:w}
        Graphon $W$.}
    \end{subfigure}\\[.5em]
    \begin{subfigure}[t]{.15\textwidth}
        \centering
        \includegraphics[width=\textwidth]{./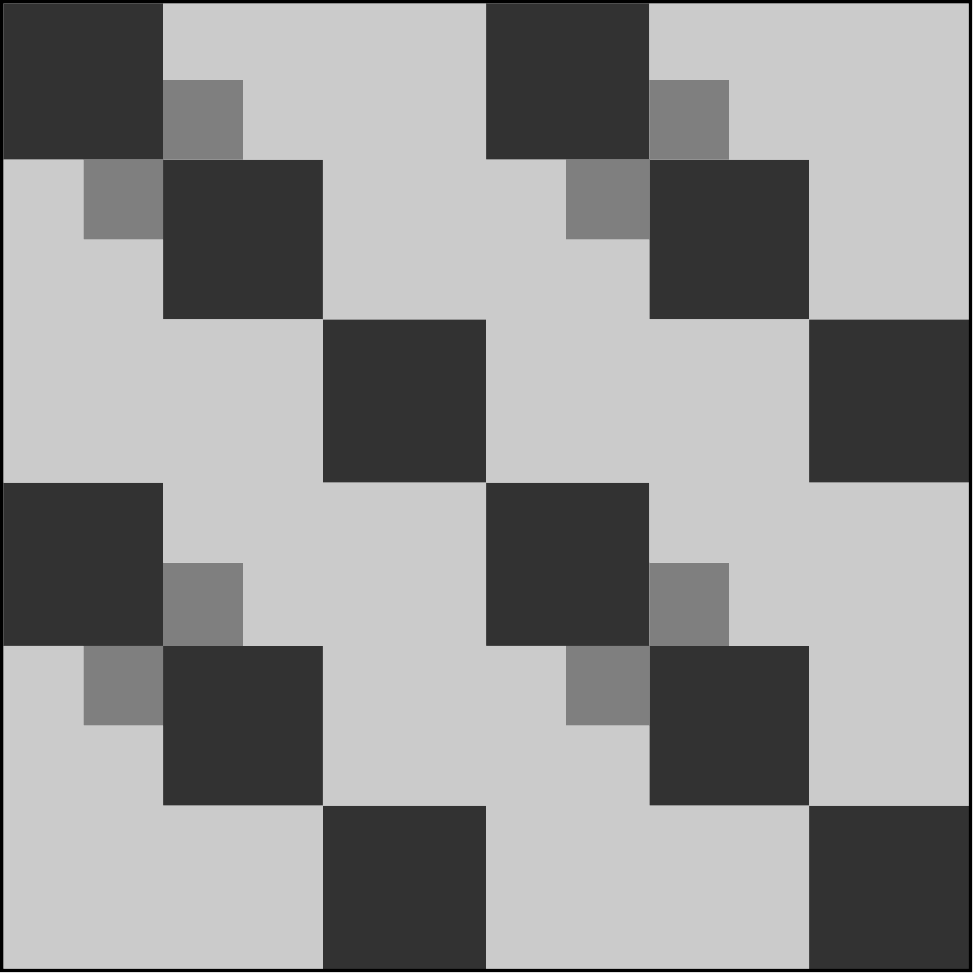}
        \caption{
        \label{fig:example_graphon:w_phi}
        \hbadness=10000
        $W^\varphi$ weakly isomorphic to $W$.}
     \end{subfigure}\\[.5em]
    \begin{subfigure}[t]{.15\textwidth}
        \centering
        \includegraphics[width=\textwidth]{./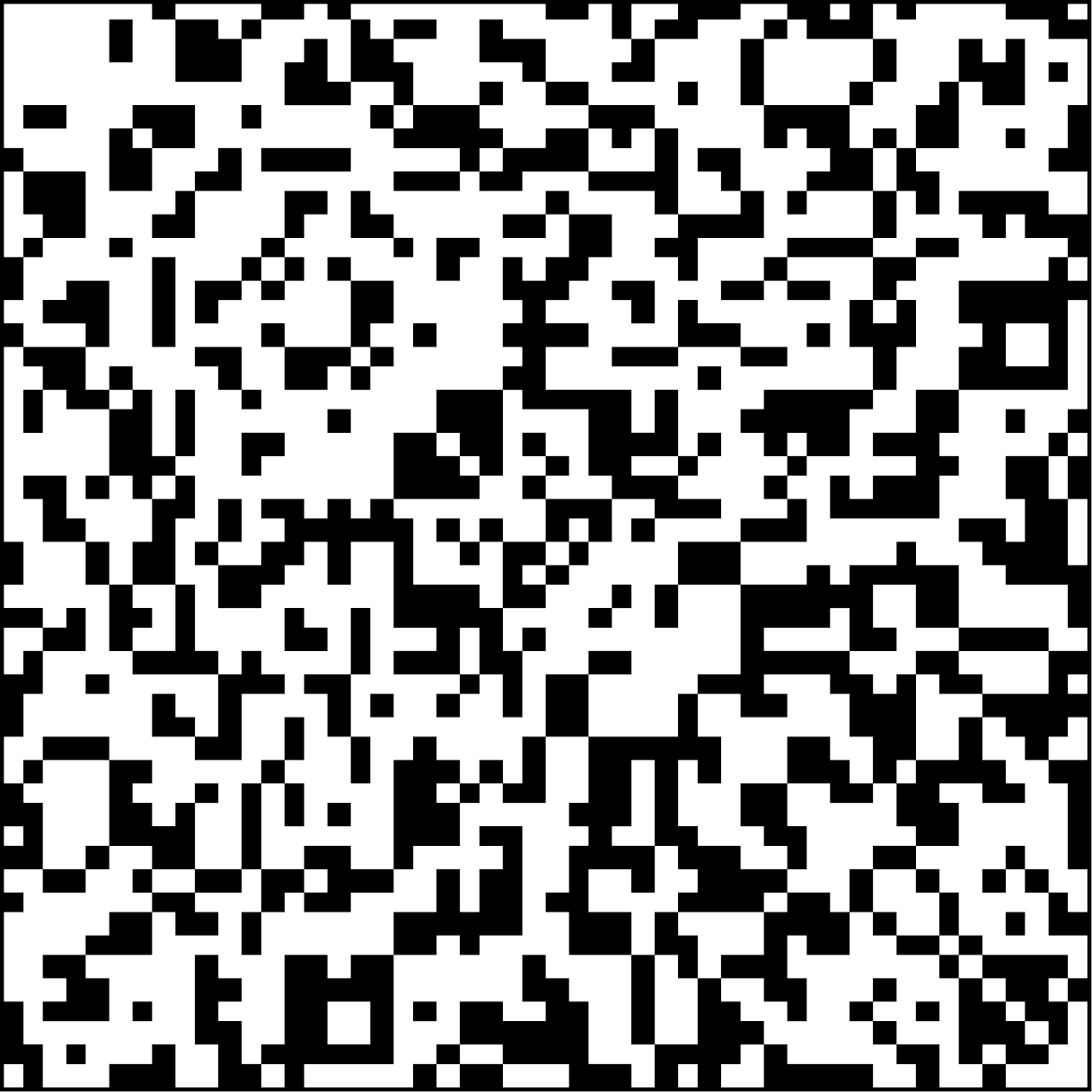}
        \caption{
        \label{fig:example_graphon:w_sampled}
        \hbadness=10000
        An instance of a graph  adjacency sampled from $W$.}
    \end{subfigure}
    \caption{\label{fig:example_graphons}}
        \vspace{-3em}
\end{wrapfigure}

A very general class of random graph models may be represented as graphons. In
particular, a random graph model $\rv{G}_1, \rv{G}_2, \ldots$ is said to be
\term{consistent} if the random graph $\rv{F}_{k-1}$ obtained by deleting node
$k$ from $\rv{G}_k$ has the same distribution as $\rv{G}_k$. A random graph
model is said to be \term{local} if whenever $S,T \subset [k]$ are disjoint, the
random subgraphs of $\rv{G}_k$ induced by $S$ and $T$ are independent random
variables. A result of \citet{Lovasz2006-dq} is that any consistent, local
random graph model is equivalent to the distribution on graphs defined by
$\prob_W$ for some graphon $W$; the converse is true as well. That is, any such
random graph model is equivalent to a graphon.

A particular random graph model is not uniquely defined by a graphon -- it is
clear from Equation~\ref{eqn:graphon_integral} that two graphons $W_1$ and $W_2$
which are equal almost everywhere (i.e., differ on a set of measure zero) define
the same distribution on graphs. In fact, the distribution defined by $W$ is
unchanged by ``relabelings'' of $W$'s nodes. More formally, if $\Sigma$ is the
sigma-algebra of Lebesgue measurable subsets of $[0,1]$ and $\mu$ is the
Lebesgue measure, we say that a relabeling function $\varphi : ([0,1], \Sigma)
\to ([0,1], \Sigma)$ is \term{measure preserving} if for any measurable set $A
\in \Sigma$, $\lebesguemeasure(\varphi^{-1}(A)) = \lebesguemeasure(A)$. We
define the relabeled graphon $W^\varphi$ by $W^\varphi(x,y) = W(\varphi(x),
\varphi(y))$.  By analogy with finite graphs, we say that graphons $W_1$ and
$W_2$ are \term{weakly isomorphic} if they are equivalent up to relabeling,
i.e., if there exist measure preserving maps $\varphi_1$ and $\varphi_2$ such
that $W_1^{\varphi_1} = W_2^{\varphi_2}$ almost everywhere. Weak isomorphism is
an equivalence relation, and most of the important properties of a graphon in
fact belong to its equivalence class. For instance, a powerful result of
\cite{Lovasz2012-ww} is that two graphons define the same random graph model if
and only if they are weakly isomorphic.

An example of a graphon $W$ is shown in Figure~\ref{fig:example_graphon:w}. It
is conventional to plot the graphon as one typically plots an adjacency matrix:
with the origin in the upper-left corner. Darker shades correspond to higher
values of $W$. Figure~\ref{fig:example_graphon:w_phi} depicts a graphon
$W^\varphi$ which is weakly isomorphic to $W$. In particular, $W^\varphi$ is the
relabeling of $W$ by the measure preserving transformation $\varphi(x) = 2x \mod
1$. As such, the graphons shown in
Figures~\ref{fig:example_graphon:w}~and~\ref{fig:example_graphon:w_phi} define
the same distribution on graphs. Figure~\ref{fig:example_graphon:w_sampled}
shows the adjacency matrix $A$ of a graph of size $n=50$ sampled from the
distribution defined by the equivalence class containing $W$ and $W^\varphi$.
Note that it is in principle not possible to determine from $A$ alone which
graphon $W$ or $W^\varphi$ it was sampled from, or to what node in $W$ a
particular column of $A$ corresponds to.

\section{The graphon cluster tree}

We now identify the cluster structure of a graphon.  We will define a graphon's
clusters such that they are analogous to the maximally-connected components of a
finite graph. It turns out that the collection of all clusters has hierarchical
structure; we call this object the \term{graphon cluster tree}. We propose that
the goal of clustering in the graphon setting is the recovery of the graphon
cluster tree.

\textbf{Connectedness and clusters.} Consider a finite weighted graph. It is
natural to cluster the graph into connected components. In fact, because of the
weighted edges, we can speak of the clusters of the graph at various levels.
More precisely, we say that a set of nodes $A$ is \term{internally connected} --
or, from now on, just \term{connected} -- at level $\lambda$ if for every pair
of nodes in $A$ there is a path between them such that every node along the path
is also in $A$, and the weight of every edge in the path is at least $\lambda$.
Equivalently, $A$ is \term{connected} at level $\lambda$ if and only if for
every partitioning of $A$ into disjoint, non-empty sets $A_1$ and $A_2$ there is
an edge of weight $\lambda$ or greater between $A_1$ and $A_2$.  The clusters at
level $\lambda$ are then the largest connected components at level $\lambda$.

A graphon is, in a sense, an infinite weighted graph, and we will define the
clusters of a graphon using the example above as motivation. In doing so, we
must be careful to make our notion robust to changes of the graphon on a set of
zero measure, as such changes do not affect the graph distribution defined by
the graphon. We base our definition on that of \citet{Janson2008-qn}, who
defined what it means for a graphon to be connected as a whole. We extend the
definition in \cite{Janson2008-qn} to speak of the connectivity of subsets of
the graphon's nodes at a particular height. Our definition is directly analogous
to the notion of internal connectedness in finite graphs.

\begin{defn}[Connectedness]
    Let $W$ be a graphon, and let $A \subset [0,1]$ be a set of positive
    measure.  We say that $A$ is \term{disconnected at level $\lambda$} if there
    exists a measurable $S \subset A$ such that $0 < \lebesguemeasure(S) <
    \lebesguemeasure(A)$, and $W < \lambda$ almost everywhere on $S \times (A
    \setminus S)$.  Otherwise, we say that $A$ is \term{connected at level
    $\lambda$.}
\end{defn}

We now identify the clusters of a graphon; as in the finite case, we will frame
our definition in terms of maximally-connected components. We begin by gathering
all subsets of $[0,1]$ which should belong to some cluster at level $\lambda$.
Naturally, if a set is connected at level $\lambda$, it should be in a cluster
at level $\lambda$; for technical reasons, we will also say that a set which is
connected at all levels $\lambda' < \lambda$ (though perhaps not at $\lambda$)
should be contained in a cluster at level $\lambda$, as well. That is, for any
$\lambda$, the collection $\upperlevel$ of sets which should be contained in
some cluster at level $\lambda$ is
$
    \upperlevel = \{ \,A \in \Sigma : \mu(A) > 0 \text{ and $A$ is connected at
    every level $\lambda' < \lambda$} \}.
$
Now suppose $A_1,A_2 \in \upperlevel$, and that there is a set
$A \in \upperlevel$ such that $A \supset A_1 \cup A_2$. Naturally, the cluster
to which $A$ belongs should also contain $A_1$ and $A_2$, since both are subsets
of $A$.  We will therefore consider $A_1$ and $A_2$ to be equivalent, in the
sense that they should be contained in the same cluster at level $\lambda$. More
formally, we define a relation $\clusteredwith_\lambda$ on $\upperlevel$ by
$
    A_1 \clusteredwith_\lambda A_2 \Longleftrightarrow 
    \exists A \in \upperlevel 
    \;\text{s.t.}\;
    A \supset A_1 \cup A_2.
$
It can be verified that $\clusteredwith_\lambda$ is an equivalence relation on
$\upperlevel$; see Claim~\ref{claim:connected_equivalence} in
Appendix~\ref{apx:proofs}.

\newcommand{\eqclass}{\mathscr A}

Each equivalence class $\eqclass$ in the quotient space $\upperlevel /
{\clusteredwith_\lambda}$.  consists of connected sets which should intuitively
be clustered together at level $\lambda$. Naturally, we will define the clusters
to be the largest elements of each class; in some sense, these are the
maximally-connected components at level $\lambda$. More precisely, suppose
$\eqclass$ is such an equivalence class.  It is clear that in general no single
member $A \in \eqclass$ can contain all other members of $\eqclass$, since
adding a null set (i.e., a set of measure zero) to $A$ results in a larger set
$A'$ which is nevertheless still a member of $\eqclass$.  However, we can find a
member $A^* \in \eqclass$ which contains all but a null set of every other set
in $\eqclass$.  More formally, we say that $A^*$ is an \term{essential maximum}
of the class $\eqclass$ if $A^* \in \eqclass$ and for every $A \in \eqclass$,
$\mu(A \setminus A^*) = 0$.  $A^*$ is of course not unique, but it is unique up
to a null set; i.e., for any two essential maxima $A_1, A_2$ of $\eqclass$, we
have $\mu(A_1 \symdiff A_2) = 0$.  We will write the set of essential maxima of
$\eqclass$ as $\essmax \eqclass$; the fact that the essential maxima are
well-defined is proven in Claim~\ref{claim:essmax_cluster} in
Appendix~\ref{apx:proofs}. We then define clusters as the maximal members of
each equivalence class in ${\upperlevel}/{\clusteredwith_\lambda}$:

\begin{defn}[Clusters]
    \label{defn:clusters}
    The set of clusters at level $\lambda$ in $W$, written
    $\clustertree_W(\lambda)$, is defined to be the countable collection
    $
        \clustertree_W(\lambda) = \left\{
            \; \essmax \eqclass : \eqclass \in {\upperlevel
        /}{\clusteredwith_\lambda}
        \right\}.
    $
\end{defn}

Note that a cluster $\cluster$ of a graphon is not a subset of the unit interval
per se, but rather an \term{equivalence class} of subsets which differ only by
null sets. It is often possible to treat clusters as sets rather than
equivalence classes, and we may write $\mu(\cluster)$, $\cluster \cup
\cluster'$, etc., without ambiguity. In addition, if $\varphi : [0,1] \to [0,1]$
is a measure preserving transformation, then $\varphi^{-1}(\cluster)$ is
well-defined.

For a concrete example of our notion of a cluster, consider the graphon $W$
depicted in Figure~\ref{fig:example_graphon:w}. $A$, $B$, and $C$ represent sets
of the graphon's nodes. By our definitions there are three clusters at level
$\lambda_3$: $\clusterstyle{A}$, $\clusterstyle{B}$, and $\clusterstyle{C}$.
Clusters $\clusterstyle{A}$ and $\clusterstyle{B}$ merge into a cluster
$\clusterstyle{A} \cup \clusterstyle{B}$ at level $\lambda_2$, while
$\clusterstyle{C}$ remains a separate cluster.  Everything is joined into a
cluster $\clusterstyle{A} \cup \clusterstyle{B} \cup \clusterstyle{C}$ at level
$\lambda_1$.

We have taken care to define the clusters of a graphon in such a way as to be
robust to changes of measure zero to the graphon itself. In fact, clusters are
also robust to measure preserving transformations. The proof of this result is
non-trivial, and comprises \Cref{apx:mptclusters}.

\begin{restatable}{claim}{claimmptclusters}
    \label{claim:mptclusters}
    Let $W$ be a graphon and $\varphi$ a measure preserving transformation. Then
    $\cluster$ is a cluster of $W^\varphi$ at level $\lambda$ if and only if
    there exists a cluster $\cluster'$ of $W$ at level $\lambda$ such that
    $\cluster = \varphi^{-1}(\cluster')$.
\end{restatable}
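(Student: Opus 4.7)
My plan is to show that preimage under $\varphi$ respects each ingredient of Definition~\ref{defn:clusters}---connectedness, the collection $\upperlevel$, the relation $\clusteredwith_\lambda$, and essential maxima---thereby inducing a bijection between clusters of $W$ and clusters of $W^\varphi$. The workhorse is a connectedness lemma: a measurable $A \subset [0,1]$ with $\mu(A) > 0$ is connected at level $\lambda$ in $W$ if and only if $\varphi^{-1}(A)$ is connected at level $\lambda$ in $W^\varphi$. One direction is routine: a disconnecting set $S \subset A$ in $W$ pulls back to the disconnecting set $\varphi^{-1}(S) \subset \varphi^{-1}(A)$ in $W^\varphi$, using that $\varphi \times \varphi$ is measure-preserving on $[0,1]^2$ and therefore preserves null sets under preimage.

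The main obstacle is the converse: starting from a disconnecting set $T \subset \varphi^{-1}(A)$ in $W^\varphi$, produce one for $A$ in $W$. The difficulty is that $T$ need not be a preimage under $\varphi$. I would overcome this by passing to a conditional probability via disintegration. Since $[0,1]$ is a standard Borel probability space, $\mu$ disintegrates along $\varphi$ as $\mu = \int \mu_x \, d\mu(x)$ with each $\mu_x$ supported on the fiber $\varphi^{-1}(\{x\})$; define $h : A \to [0,1]$ by $h(x) := \mu_x(T)$. Because $W^\varphi = W \circ (\varphi \times \varphi)$ is $\sigma(\varphi) \otimes \sigma(\varphi)$-measurable, a change of variables gives
\[
0 = \int_{T \times (\varphi^{-1}(A) \setminus T)} \mathbf{1}_{W^\varphi \geq \lambda}\, d\mu^2 = \int_{A \times A} h(x_1)\bigl(1 - h(x_2)\bigr)\, \mathbf{1}_{W(x_1, x_2) \geq \lambda}\, d\mu(x_1)\, d\mu(x_2).
\]
Symmetrizing in $x_1 \leftrightarrow x_2$ and using nonnegativity, both $h(x_1)(1-h(x_2))$ and $h(x_2)(1-h(x_1))$ vanish almost everywhere on $(A \times A) \cap \{W \geq \lambda\}$, which forces $h(x_1) = h(x_2) \in \{0,1\}$ there. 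Consequently, for any $t \in (0,1)$, the level set $S_t := \{x \in A : h(x) \leq t\}$ satisfies $W < \lambda$ almost everywhere on $S_t \times (A \setminus S_t)$. Since $\int_A h \, d\mu = \mu(T) \in (0, \mu(A))$, either some $t \in (0,1)$ yields $0 < \mu(S_t) < \mu(A)$, producing the required disconnection of $A$, or $h$ is essentially constant on $A$ with value in $(0,1)$, in which case the constraint forces $W < \lambda$ almost everywhere on $A \times A$ and $A$ is disconnected by any nontrivial bipartition.

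With the connectedness lemma in hand, I introduce the projection $\pi(B) := \{x : \mu_x(B) > 0\}$ for $B \in \upperlevel$ with respect to $W^\varphi$. A brief disintegration computation shows $B \subset \varphi^{-1}(\pi(B))$ modulo a null set, and mimicking the easy direction of the lemma shows that any disconnection of $\pi(B)$ in $W$ lifts to a disconnection of $B$ in $W^\varphi$, so $\pi(B) \in \upperlevel$ for $W$. The forward direction of the claim follows: if $\cluster'$ is a cluster of $W$ at level $\lambda$, then $\cluster := \varphi^{-1}(\cluster')$ lies in $\upperlevel$ for $W^\varphi$, and for any $B$ in its equivalence class with witness $D \supset B \cup \cluster$ one has $\pi(D) \supset \pi(B) \cup \cluster'$ with $\pi(D) \in \upperlevel$ for $W$; maximality of $\cluster'$ then forces $\mu(\pi(B) \setminus \cluster') = 0$, hence $\mu(B \setminus \cluster) = 0$ since $B \subset \varphi^{-1}(\pi(B))$ modulo null. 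Conversely, given a cluster $\cluster$ of $W^\varphi$, taking $\cluster'$ to be the essential maximum of $\pi(\cluster)$'s $W$-equivalence class yields a cluster of $W$ whose preimage lies in the same $W^\varphi$-equivalence class as $\cluster$ and contains $\cluster$ up to null, so maximality of $\cluster$ gives $\cluster = \varphi^{-1}(\cluster')$ up to a null set.
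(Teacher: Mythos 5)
Your proposal is correct, and it takes a genuinely different route from the paper. The paper's proof works through the machinery of \emph{twin points} (pairs $x,x'$ with $W(x,\cdot)=W(x',\cdot)$ a.e.) and the associated twin measure space: for each cluster one constructs a representative $C$ that does not separate twins, which forces the stability $\varphi^{-1}(\varphi(C))=C$, and one shows via Claim~\ref{claim:completion_disconnected} that a disconnection of such a set can always be realized by a set that itself does not separate twins, so that disconnections push forward and pull back along $\varphi$. Your approach instead invokes Rokhlin disintegration $\mu=\int\mu_x\,d\mu(x)$ along $\varphi$ and attacks the same obstruction---a disconnecting set $T\subset\varphi^{-1}(A)$ that is not a preimage---by averaging: setting $h(x)=\mu_x(T)$, the disconnection condition plus symmetry of $W$ forces $h(x_1)=h(x_2)\in\{0,1\}$ a.e.\ on $\{W\ge\lambda\}\cap(A\times A)$, after which a level-set argument manufactures a disconnection of $A$ in $W$ (with the degenerate case $h\equiv\mathrm{const}\in(0,1)$ forcing $W<\lambda$ a.e.\ on $A\times A$). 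The resulting connectedness lemma (``$A$ connected at level $\lambda$ in $W$ iff $\varphi^{-1}(A)$ connected at level $\lambda$ in $W^\varphi$'') is a cleaner and more symmetric statement than the two one-directional claims in the paper, and your projection $\pi(B)=\{x:\mu_x(B)>0\}$ gives a tidy substitute for the paper's ``family'' construction $\family_W C$. The trade-off is that you import the disintegration theorem for Lebesgue spaces as a black box (a perfectly standard tool, but more measure-theoretic infrastructure than the paper exposes), whereas the paper's twin-space approach, while longer, is self-contained given the twin-graphon material it already cites from \cite{Lovasz2012-ww} and makes explicit which representatives of each cluster are canonical. Both proofs are sound; yours is shorter once the disintegration machinery is granted.
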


\textbf{Cluster trees and mergeons.}
The set of all clusters of a graphon at any level has hierarchical structure in
the sense that, given any pair of distinct clusters $\cluster_1$ and
$\cluster_2$, either one is ``essentially'' contained within the other, i.e.,
$\cluster_1 \subset \cluster_2$, or
$\cluster_2 \subset \cluster_1$, 
or they are ``essentially'' disjoint, i.e.,
$\mu(\cluster_1 \cap \cluster_2) = 0$, as is proven by
Claim~\ref{claim:connected_union_connected} in \Cref{apx:proofs}.  Because of
this hierarchical structure, we call the set $\clustertree_W$ of all clusters
from any level of the graphon $W$ the \term{graphon cluster tree} of $W$. It is
this tree that we hope to recover by applying a graph clustering algorithm to a
graph sampled from $W$.

\begin{wrapfigure}{r}{.16\textwidth}
    \centering
    \vspace{-1em}
    \begin{subfigure}{.15\textwidth}
        \centering
        \includegraphics[width=\textwidth]{./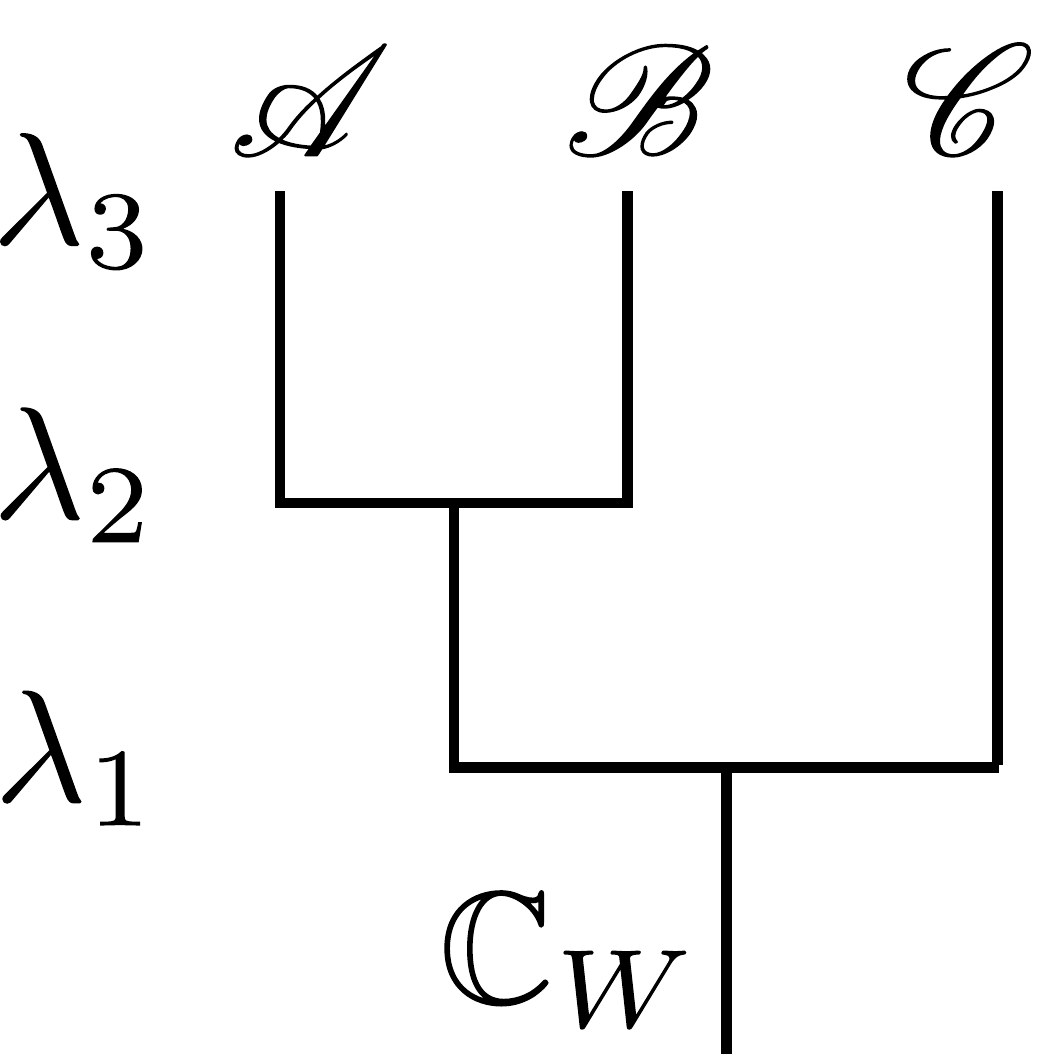}
        \caption{
        \label{fig:mergeon:cluster_tree}
        Cluster tree $\clustertree_W$ of $W$.}
    \end{subfigure}\\[.5em]
    \begin{subfigure}{.15\textwidth}
        \centering
        \includegraphics[width=\textwidth]{./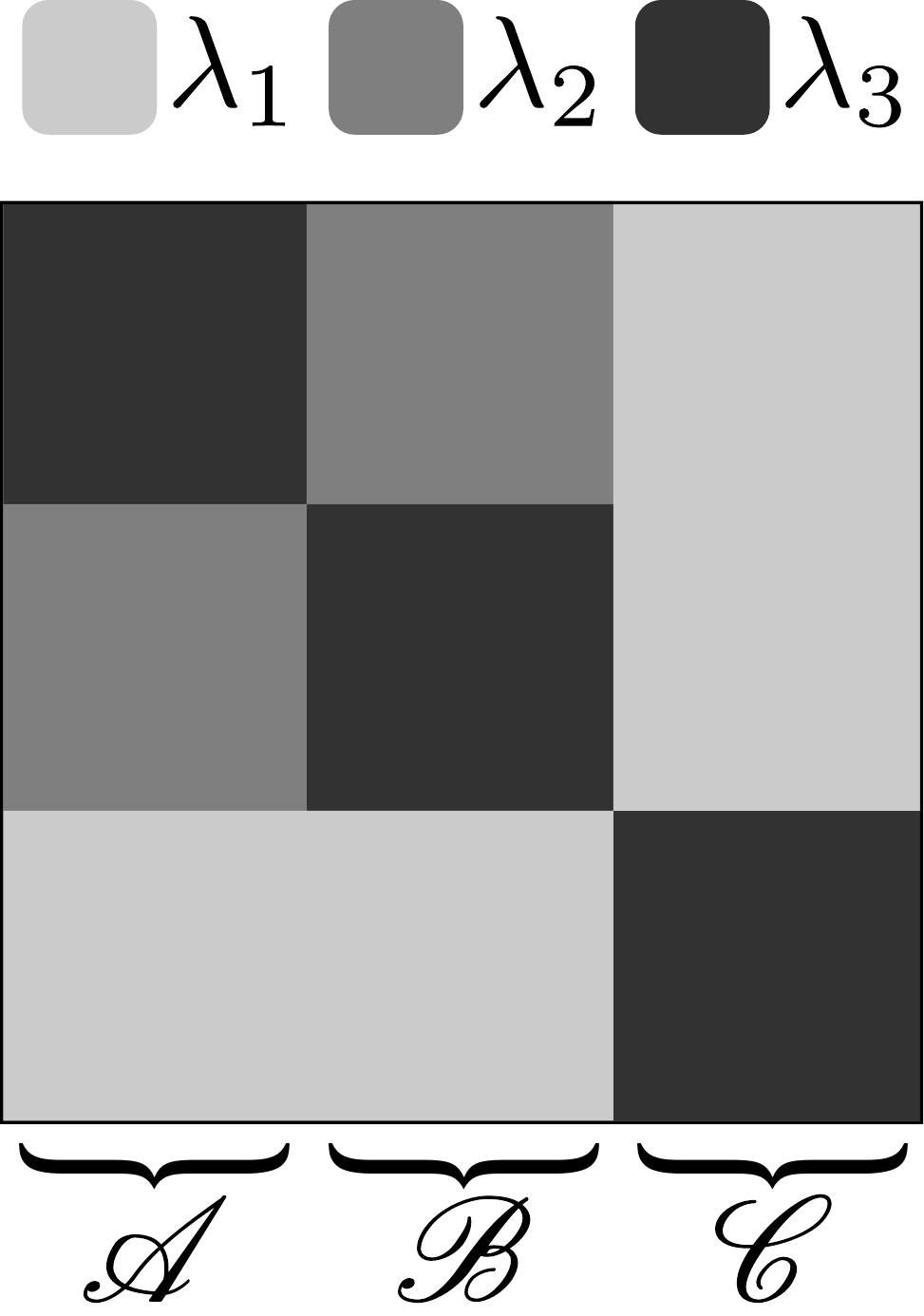}
        \caption{
        \label{fig:mergeon:mergeon}
        Mergeon $M$ of $\clustertree_W$.}
    \end{subfigure}
    \caption{\label{fig:mergeon}}
    \vspace{-3em}
\end{wrapfigure}

We may naturally speak of the height at which pairs of distinct clusters merge
in the cluster tree.  For instance, let $\cluster_1$ and $\cluster_2$ be
distinct clusters of $\clustertree$. We say that the \term{merge height} of
$\cluster_1$ and $\cluster_2$ is the level $\lambda$ at which they are joined
into a single cluster, i.e., 
$
    \max \{ \lambda : \cluster_1 \cup \cluster_2 \in \clustertree(\lambda)
    \}.
$
However, while the merge height of clusters is well-defined, the merge height of
individual points is not. This is because the cluster tree is not a collection
of sets, but rather a collection of equivalence classes of sets, and so a point
does not belong to any one cluster more than any other. Note that this is
distinct from the classical density case considered in \cite{hartigan_1981},
\cite{chaudhuri_2010}, and \cite{Abbe_undated-lp}, where the merge height of any
pair of points is well-defined.

Nevertheless, consider a measurable function $M : [0,1]^2 \to [0,1]$ which
assigns a merge height to every pair of points. While the value of $M$ on any
given pair is arbitrary, the value of $M$ on sets of positive measure is
constrained. Intuitively, if $\cluster$ is a cluster at level $\lambda$, then we
must have $M \geq \lambda$ almost everywhere on $\cluster \times \cluster$. If
$M$ satisfies this constraint for every cluster $\cluster$ we call $M$ a
\term{mergeon} for $\clustertree$, as it is a graphon which determines a
particular choice for the merge heights of every pair of points in $[0,1]$. More
formally:

\begin{defn}[Mergeon]
    \label{defn:mergeon}
    Let $\clustertree$ be a cluster tree.
    A \term{mergeon}\footnote{The definition given here involves a slight abuse
    of notation. For a precise -- but more technical -- version, see
    \Cref{apx:precise_mergeon}.} of $\clustertree$ is a graphon $M$ such that for
all
    $\lambda \in [0,1]$,
    $
        M^{-1}[\lambda,1]
        =
        \bigcup_{\cluster \in \clustertree_W(\lambda)} 
        \cluster \times \cluster,
    $
    where $M^{-1}[\lambda,1] = \{(x,y) \in [0,1]^2 : M(x,y) \geq
    \lambda \}$. 
\end{defn}

An example of a mergeon and the cluster tree it represents is shown in
Figure~\ref{fig:mergeon}. In fact, the cluster tree depicted is that of the
graphon $W$ from Figure~\ref{fig:example_graphon:w}. The mergeon encodes the
height at which clusters $\clusterstyle{A}$, $\clusterstyle{B}$, and
$\clusterstyle{C}$ merge. In particular, the fact that $M = \lambda_2$
everywhere on $\clusterstyle{A} \times \clusterstyle{B}$ represents the merging
of $\clusterstyle{A}$ and $\clusterstyle{B}$ at level $\lambda_2$ in $W$.

It is clear that in general there is no unique mergeon representing a graphon
cluster tree, however, the above definition implies that two mergeons
representing the same cluster tree are equal almost everywhere. Additionally, we
have the following two claims, whose proofs are in \Cref{apx:proofs}.

\begin{restatable}{claim}{claimmergeonequivalence}
    Let $\clustertree$ be a cluster tree, and suppose $M$ is a mergeon
    representing $\clustertree$. Then $\cluster \in \clustertree(\lambda)$ if
    and only if $\cluster$ is a cluster in $M$ at level $\lambda$. In other
    words, the cluster tree of $M$ is also $\clustertree$.
\end{restatable}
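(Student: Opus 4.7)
My plan exploits the rigid block structure that the mergeon condition forces on $M^{-1}[\lambda,1]$: since the clusters in $\clustertree(\lambda)$ are pairwise essentially disjoint (Claim~\ref{claim:connected_union_connected}), the identity $M^{-1}[\lambda,1] = \bigcup_{\cluster \in \clustertree(\lambda)} \cluster \times \cluster$ displays the superlevel set as an essentially disjoint union of cluster squares. This should pin down the cluster tree of $M$ completely.

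The technical backbone is the following key lemma, which I would establish first: for any $\lambda$ and any measurable $A \subseteq [0,1]$ with $\mu(A) > 0$, $A$ is connected at level $\lambda$ in $M$ if and only if $A$ is essentially contained in a single cluster $\cluster \in \clustertree(\lambda)$. The easy direction follows because $M \geq \lambda$ a.e.\ on $\cluster \times \cluster$. For the converse, if $A$ is not essentially contained in any one cluster, then $A$ either meets two distinct clusters $\cluster_1,\cluster_2$ in positive measure, or meets some cluster $\cluster_0$ and also $[0,1]\setminus\bigcup_{\cluster}\cluster$ in positive measure; taking $S = A\cap\cluster_1$ (respectively $S = A\cap\cluster_0$) yields a rectangle $S\times (A\setminus S)$ whose intersection with every cluster square is null, so $M < \lambda$ a.e.\ on $S\times(A\setminus S)$, contradicting connectedness.

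For the forward direction of the claim, fix $\cluster\in\clustertree(\lambda)$. At each $\lambda'<\lambda$, $\cluster$ is essentially contained in its unique ancestor cluster $\cluster'_{\lambda'}\in\clustertree(\lambda')$, so the key lemma gives $\cluster$ connected at level $\lambda'$ in $M$, placing $\cluster$ in the $M$-analogue of $\upperlevel$. For essential maximality, let $B$ be another member of this set with $B\supset\cluster$. Applying the key lemma at each rational $\lambda'<\lambda$ places $B$ essentially inside $\cluster'_{\lambda'}$, hence $B\subseteq \cluster^* := \bigcap_{\lambda'\in\mathbb{Q}\cap[0,\lambda)} \cluster'_{\lambda'}$ essentially. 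Passing to squares and using monotonicity,
\[
    \cluster^*\times\cluster^*
    \;\subseteq\;
    \bigcap_{\lambda'\in\mathbb{Q}\cap[0,\lambda)} M^{-1}[\lambda',1]
    \;=\; M^{-1}[\lambda,1]
    \;=\; \bigcup_{\cluster_0\in\clustertree(\lambda)} \cluster_0\times\cluster_0.
\]
A Fubini slicing argument anchored at a.e.\ $x_0\in\cluster$ -- using that a.e.\ point lies in at most one cluster by essential disjointness -- then forces $\cluster^*$ essentially into the cluster containing $x_0$, which is $\cluster$ itself. Hence $B\subseteq\cluster$ essentially, establishing maximality.

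The backward direction runs in parallel: if $\cluster$ is a cluster of $M$ at level $\lambda$, the same chain-and-intersection argument yields $\cluster \subseteq \cluster_0$ essentially for some $\cluster_0\in\clustertree(\lambda)$; by the forward direction $\cluster_0$ is also a cluster of $M$ at level $\lambda$, so the tree property (Claim~\ref{claim:connected_union_connected} applied to $M$) forces $\cluster = \cluster_0 \in \clustertree(\lambda)$. The main obstacle is the essential-maximality step -- specifically, showing that the intersection of ancestors $\cluster^*$ does not outgrow $\cluster$. The slicing argument, enabled by $\bigcap_{\lambda'<\lambda} M^{-1}[\lambda',1] = M^{-1}[\lambda,1]$ together with the essential disjointness of cluster squares, is precisely what avoids any delicate measure-theoretic limit of the sets $\cluster'_{\lambda'}$.
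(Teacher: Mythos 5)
Your proposal is correct, and it takes a genuinely different route from the paper's. The paper proves this claim by first showing $C$ (a representative of $\cluster$) is connected at level $\lambda$ in $M$, hence contained in some cluster $C'$ of $M$; it then pushes $C'$ into a cluster of $W$ at every $\lambda' < \lambda$ via an auxiliary lemma (Claim~\ref{claim:mergeon_connected_contained_in_cluster}), applies a separate ``cluster completion'' lemma (Claim~\ref{claim:cluster_completion}) to land $C'$ inside a $W$-cluster at $\lambda$, and finally sandwiches $C \subset C' \subset C''$ with $C$ and $C''$ both $W$-clusters at $\lambda$ to force equality. Your proof instead establishes a clean iff ``key lemma'' ($A$ is connected at $\lambda$ in $M$ iff $A$ is essentially contained in a single cluster of $\clustertree(\lambda)$), whose nontrivial direction you dispatch in a few lines via contrapositive — a considerable simplification over the paper's page-long proof of the corresponding one-sided Claim~\ref{claim:mergeon_connected_contained_in_cluster}, because you choose the disconnecting set $S = A \cap \cluster_1$ explicitly and let the essential disjointness of cluster squares do the work. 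You then prove essential maximality directly with a chain-and-intersection followed by a Fubini slicing argument on $\cluster^* \times \cluster^* \subseteq M^{-1}[\lambda,1] = \bigcup_{\cluster_0}\cluster_0\times\cluster_0$, which replaces the paper's appeal to Claim~\ref{claim:cluster_completion}. In short: the paper's proof is modular and factors out reusable lemmas, while yours is self-contained, more mergeon-centric (it exploits the rigid block structure of $M^{-1}[\lambda,1]$ at every step), and noticeably shorter end-to-end. One small omission: your case analysis in the key lemma's converse should explicitly include the case where $A$ meets no cluster at all in positive measure (then $M < \lambda$ a.e.\ on $A\times A$ and any split works); this is trivial but needed to be exhaustive.
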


\begin{restatable}{claim}{claimmptmergeon}
    Let $W$ be a graphon and $M$ a mergeon of the cluster tree of $W$. If
    $\varphi$ is a measure preserving transformation, then $M^\varphi$ is a
    mergeon of the cluster tree of $W^\varphi$.
\end{restatable}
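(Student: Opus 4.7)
The plan is to verify the mergeon defining equation for $M^\varphi$ directly by unwinding the definitions and applying Claim~\ref{claim:mptclusters}, which tells us exactly how clusters transform under $\varphi$. First I would fix $\lambda \in [0,1]$ and rewrite the super-level set of $M^\varphi$ as a pullback:
\begin{equation*}
    (M^\varphi)^{-1}[\lambda,1] = \{(x,y) : M(\varphi(x),\varphi(y)) \geq \lambda\} = (\varphi \times \varphi)^{-1}\bigl(M^{-1}[\lambda,1]\bigr).
\end{equation*}
Since $M$ is a mergeon of the cluster tree of $W$, the hypothesis gives $M^{-1}[\lambda,1] = \bigcup_{\cluster \in \clustertree_W(\lambda)} \cluster \times \cluster$, so distributing the preimage yields
\begin{equation*}
    (M^\varphi)^{-1}[\lambda,1] = \bigcup_{\cluster \in \clustertree_W(\lambda)} \varphi^{-1}(\cluster) \times \varphi^{-1}(\cluster).
\end{equation*}

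Next I would invoke Claim~\ref{claim:mptclusters}: the clusters of $W^\varphi$ at level $\lambda$ are precisely the sets of the form $\varphi^{-1}(\cluster)$ for $\cluster \in \clustertree_W(\lambda)$, and this correspondence is a bijection on equivalence classes. Substituting, we obtain $(M^\varphi)^{-1}[\lambda,1] = \bigcup_{\cluster' \in \clustertree_{W^\varphi}(\lambda)} \cluster' \times \cluster'$, which is exactly the defining condition for $M^\varphi$ to be a mergeon of $\clustertree_{W^\varphi}$.

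The main subtlety, and what I expect to be the genuine obstacle, lies in handling the ``slight abuse of notation'' flagged in the footnote to Definition~\ref{defn:mergeon}: clusters are equivalence classes of sets modulo null sets, so the identity $M^{-1}[\lambda,1] = \bigcup \cluster \times \cluster$ holds only up to a set of Lebesgue measure zero in $[0,1]^2$. When we pull back by $\varphi \times \varphi$, we must verify that the exceptional null set remains null under the pullback; this follows because $\varphi$ is measure preserving on $[0,1]$, hence $\varphi \times \varphi$ is measure preserving on $[0,1]^2$, so preimages of null sets are null. I would therefore state the computation above modulo null sets, citing the precise formulation in \Cref{apx:precise_mergeon} and the fact that $\varphi^{-1}$ commutes with countable unions and with products, in order to pass cleanly between representative sets and equivalence classes. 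With that technicality dispatched, the two displayed equations together with Claim~\ref{claim:mptclusters} give the result.
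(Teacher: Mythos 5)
Your proof is correct and follows essentially the same route as the paper's: pull back the super-level set via $\varphi\times\varphi$, distribute the preimage over the union of $\cluster\times\cluster$, and invoke Claim~\ref{claim:mptclusters} to identify $\varphi^{-1}(\cluster)$ with the clusters of $W^\varphi$. Your explicit treatment of the null-set technicality (using that $\varphi\times\varphi$ is measure preserving, so preimages of null sets remain null) is a welcome bit of extra care that the paper leaves implicit.
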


\section{Notions of consistency}

We have so far defined the sense in which a graphon has hierarchical cluster
structure.  We now turn to the problem of determining whether a clustering
algorithm is able to recover this structure when applied to a graph sampled from
a graphon. Our approach is to define a distance between the infinite graphon
cluster tree and a finite clustering. We will then define consistency by
requiring that a consistent method converge to the graphon cluster tree in this
distance for all inputs minus a set of vanishing probability.

\textbf{Merge distortion.} A \term{hierarchical clustering} $\clustering$ of a
set $S$ -- or, from now on, just a \term{clustering} of $S$ -- is hierarchical
collection of subsets of $S$ such that $S \in \clustering$ and for all $C,C' \in
\clustering$, either $C \subset C'$, $C' \subset C$, or $C \cap C' = \emptyset$.
Suppose $\clustering$ is a clustering of a finite set $S$ consisting of graphon
nodes; i.e, $S \subset [0,1]$. How might we measure the distance between this
clustering and a graphon cluster tree $\clustertree$?  Intuitively, the two
trees are close if every pair of points in $S$ merges in $\clustering$ at about
the same level as they merge in $\clustertree$. But this informal description
faces two problems: First, $\clustertree$ is a collection of equivalence classes
of sets, and so the height at which any pair of points merges in $\clustertree$
is not defined. Recall, however, that the cluster tree has an alternative
representation as a \term{mergeon}. A mergeon \emph{does} define a merge height
for every pair of nodes in a graphon, and thus provides a solution to this first
issue.  Second, the clustering $\clustering$ is not equipped with a height
function, and so the height at which any pair of points merges in $\clustering$
is also undefined.  Following \cite{eldridge2015-kr}, our approach is to
\emph{induce} a merge height function on the clustering using the mergeon in the
following way: 

\begin{defn}[Induced merge height]
    \label{defn:induced_merge_height}
    Let $M$ be a mergeon, and suppose $S$ is a finite subset of $[0,1]$. Let
    $\clustering$ be a clustering of $S$.  The \term{merge height
    function on $\clustering$ induced by $M$} is defined by
    $
        \hat{M}_\clustering(s,s') = \min_{u,v \in \clustering(s,s')} M(u,v),
    $
    for every $s,s' \in S \times S$, where $\clustering(s,s')$ denotes the
    smallest cluster $C \in \clustering$ which contains both $s$ and $s'$.
\end{defn}

We measure the distance between a clustering $\clustering$ and the cluster
tree $\clustertree$ using the \term{merge distortion}:

\begin{defn}
    Let $M$ be a mergeon, $S$ a finite subset of $[0,1]$, and $\clustering$ a
        clustering of $S$. The \term{merge distortion} is defined by
        $\mergedist_S(M, \hat{M}_\clustering) =
        \max_{s,s' \in S,\,s\neq s'}
        |M(s,s') - \hat{M}_\clustering(s,s')|
        $.
\end{defn}

Defining the induced merge height and merge distortion in this way leads to an
especially meaningful interpretation of the merge distortion. In particular, if
the merge distortion between $\clustering$ and $\clustertree$ is $\epsilon$,
then any two clusters of $\clustertree$ which are separated at level $\lambda$
but merge below level $\lambda - \epsilon$ are correctly separated in the
clustering $\clustering$. A similar result guarantees that a cluster in
$\clustertree$ is connected in $\clustering$ at within $\epsilon$ of the correct
level. For a precise statement of these results, see
\Cref{claim:connected_separated} in 
\Cref{apx:merge_distortion_structure}.

\textbf{The label measure.} 
We will use the merge distortion to measure the distance between
$\clustering$, a hierarchical
clustering of a graph, and $\clustertree$, the graphon cluster tree. Recall,
however, that the nodes of a graph sampled from a graphon have integer labels.
That is, $\clustering$ is a clustering of $[n]$, and not of a subset of $[0,1]$.
Hence, in order to apply the merge distortion, we must first relabel the nodes
of the graph, placing them in direct correspondence to nodes of the graphon,
i.e., points in $[0,1]$.

\begin{wrapfigure}{r}{.16\textwidth}
    \vspace{-1em}
    \centering
    \begin{subfigure}{0.15\textwidth}
        \centering
        \includegraphics[width=\textwidth]{./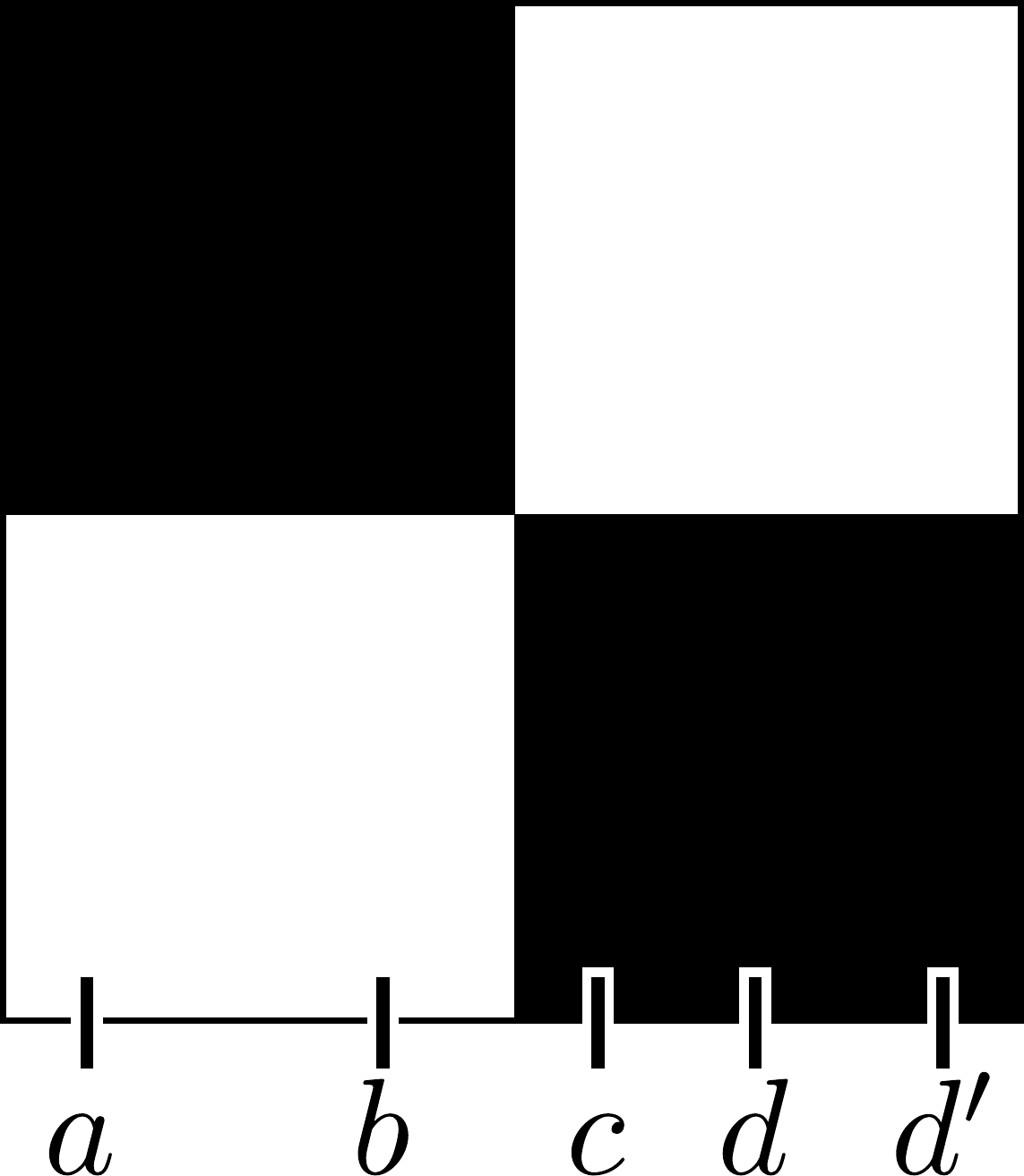}
    \end{subfigure}\\
    \vspace{1em}
    \begin{subfigure}[c]{0.15\textwidth}
        \centering
        \includegraphics[width=.9\textwidth]{./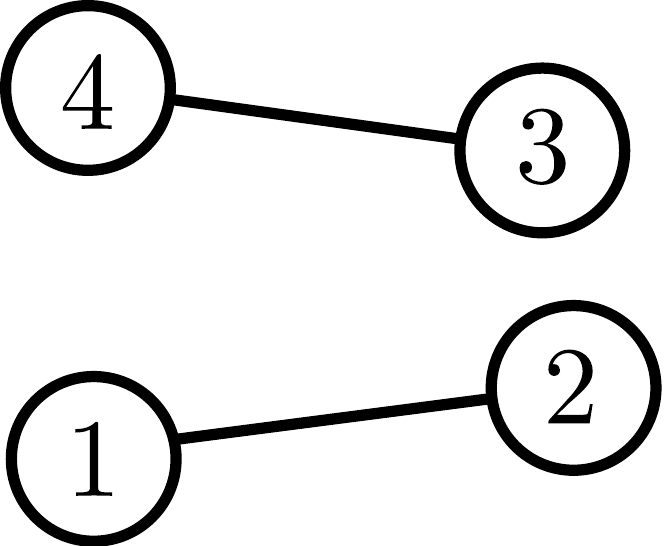}
    \end{subfigure}
    \vspace{-1.5em}
\end{wrapfigure}

Recall that we sample a graph of size $n$ from a graphon $W$ by first drawing
$n$ points $\rv{x_1}, \ldots, \rv{x_n}$ uniformly at random from the unit
interval. We then generate a graph on node set $[n]$ by connecting nodes $i$ and
$j$ with probability
$W(\rv{x_i}, \rv{x_j})$.  However, the nodes of the sampled graph are not
labeled by $\rv{x_1}, \ldots, \rv{x_n}$, but rather by the integers $1, \ldots,
n$. Thus we may think of $\rv{x_i}$ as being the ``true'' latent label of
node $i$.  In general the latent node labeling is not recoverable from data, as
is demonstrated by the figure to the right. We might suppose that the graph
shown is sampled from the graphon above it, and that node 1 corresponds to $a$,
node 2 to $b$, node 3 to $c$, and node $4$ to $d$.  However, it is just as
likely that node 4 corresponds to $d'$, and so neither labeling is more
``correct''. It is clear, though, that some labelings are less likely than
others.  For instance, the existence of the edge $(1,2)$ makes it impossible
that 1 corresponds to $a$ and 2 to $c$, since $W(a,c)$ is zero.

Therefore, given a graph $G = ([n], E)$ sampled from a graphon, there are many
possible relabelings of $G$ which place its nodes in correspondence with nodes
of the graphon, but some are more likely than others. The merge distortion
depends which labeling of $G$ we assume, but, intuitively, a good clustering of
$G$ will have small distortion with respect to highly probable labelings, and
only have large distortion on improbable labelings.  Our approach is to assign a
probability to every pair $(G,S)$ of a graph and possible labeling.  We will
thus be able to measure the probability mass of the set of pairs for which a
method performs poorly, i.e., results in a large merge distortion.

More formally, let $\allgraphs_n$ denote the set of all undirected, unweighted
graphs on node set $[n]$, and let $\Sigma^n$ be the sigma-algebra of
Lebesgue-measurable subsets of $[0,1]^n$. A graphon $W$ induces a unique product
measure $\labelmeasure$ defined on the product sigma-algebra $2^{\allgraphs_n}
\times \Sigma^n$ such that for all $\mathcal{G} \in 2^{\allgraphs_n}$ and
$\mathcal{S} \in \Sigma^n$:
\[
    \resizebox{.95\textwidth}{!}{$
    \labelmeasure(\mathcal G \times \mathcal S) = 
    \sum_{G \in \mathcal G} \left( \int_{\mathcal S} \likelihood_W(S|G) \, dS
    \right),
    \;
    \text{\small where}
    \;
    \displaystyle
    \likelihood_W(S \mid G) = 
    \prod_{(i,j) \in E(G)} W(x_i, x_j) 
    \prod_{(i,j) \not \in E(G)} \left[ 1 - W(x_i, x_j) \right]$,
    }
\]
where $E(G)$ represents the edge set of the graph $G$. We recognize
$\likelihood_W(S \mid G)$ as the integrand in
Equation~\ref{eqn:graphon_integral} for the probability of a graph as determined
by a graphon. If $G$ is fixed, integrating $\likelihood_W(S \mid G)$ over all
$S \in [0,1]^n$ gives the probability of $G$ under the model defined by $W$.

We may now formally define our notion of consistency. First, some notation: If
$\clustering$ is a clustering of $[n]$ and $S = (x_1, \ldots, x_n)$, write
$\clustering\relabeledby S$ to denote the relabeling of $\clustering$ by $S$, in
which $i$ is replaced by $x_i$ in every cluster.  Then if $f$ is a hierarchical
graph clustering method, $f(G)\relabeledby S$ is a clustering of $S$, and
$\inducedheights_{f(G)\relabeledby S}$ denotes the merge function induced on
$f(G)\relabeledby S$ by $M$.
\begin{defn}[Consistency]
    \label{defn:label_measure}
    Let $W$ be a graphon and $M$ be a mergeon of $W$. A hierarchical graph
    clustering method $f$ is said to be a \term{consistent} estimator of the
    graphon cluster tree of $W$ if for any fixed $\epsilon > 0$, as $n \to
    \infty$,
    $
        \labelmeasure\left(
            \left\{
                (G,S) : \mergedist_S(M, \inducedheights_{f(G)\relabeledby S}) > \epsilon
            \right\}
        \right) \to 0.
        $
\end{defn}

The choice of mergeon for the graphon $W$ does not affect consistency, as any
two mergeons of the same graphon differ on a set of measure zero. Furthermore,
consistency is with respect to the random graph model, and not to any particular
graphon representing the model. The following claim, the proof of which
is in \Cref{apx:proofs}, makes this precise.

\begin{restatable}{claim}{claimconsistencympt}
    Let $W$ be a graphon and $\varphi$ a measure preserving transformation. A
    clustering method $f$ is a consistent estimator of the graphon cluster tree
    of $W$ if and only if it is a consistent estimator of the graphon cluster
    tree of $W^\varphi$.
\end{restatable}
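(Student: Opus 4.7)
The plan is to show that the label measure and merge distortion both transform compatibly under $\varphi$, so that the bad event for $W^\varphi$ has the same $\labelmeasure_{W^\varphi}$-mass as the corresponding bad event for $W$ has under $\labelmeasure_W$. Fix a mergeon $M$ of $W$; by the previous claim, $M^\varphi$ is a mergeon of $W^\varphi$, and by a remark already made, the particular choice of mergeon does not matter for consistency, so we may use $M^\varphi$ on the $W^\varphi$ side. Given $S = (x_1,\ldots,x_n) \in [0,1]^n$, write $\varphi(S) := (\varphi(x_1),\ldots,\varphi(x_n))$.

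The first key step is to verify the pointwise identity
\[
    \mergedist_S\!\bigl(M^\varphi,\hat{M}^\varphi_{f(G)\relabeledby S}\bigr)
    \;=\;
    \mergedist_{\varphi(S)}\!\bigl(M,\hat{M}_{f(G)\relabeledby \varphi(S)}\bigr).
\]
For the first term inside the max, $M^\varphi(x_i,x_j)=M(\varphi(x_i),\varphi(x_j))$ by definition. For the second, the smallest cluster of $f(G)\relabeledby S$ containing $x_i$ and $x_j$ is the $\varphi^{-1}$-image of the smallest cluster of $f(G)\relabeledby \varphi(S)$ containing $\varphi(x_i),\varphi(x_j)$, because both are obtained from the same abstract cluster in $f(G)\subseteq 2^{[n]}$; therefore Definition~\ref{defn:induced_merge_height} gives $\hat{M}^\varphi_{f(G)\relabeledby S}(x_i,x_j) = \hat{M}_{f(G)\relabeledby \varphi(S)}(\varphi(x_i),\varphi(x_j))$. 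Taking a max over $i\neq j$ yields the identity.

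The second key step is a change-of-variables computation for the label measure. Since $L_{W^\varphi}(S\mid G) = L_W(\varphi(S)\mid G)$ (directly from the definition of $W^\varphi$ and the product form of $L$), and $\varphi$ is measure preserving on $[0,1]$, Fubini together with the $n$-fold application of the change-of-variables formula gives
\[
    \int_{[0,1]^n} h(\varphi(S))\, L_W(\varphi(S)\mid G)\,dS
    \;=\;
    \int_{[0,1]^n} h(S')\,L_W(S'\mid G)\,dS'
\]
for any measurable $h:[0,1]^n\to\positivereals$. Applying this with $h(S') = \mathbb{1}\bigl[\mergedist_{S'}(M,\hat{M}_{f(G)\relabeledby S'}) > \epsilon\bigr]$ and summing over $G$, the identity from the previous paragraph lets me conclude
\[
    \labelmeasure_{W^\varphi}\!\bigl(\{(G,S):\mergedist_S(M^\varphi,\hat{M}^\varphi_{f(G)\relabeledby S})>\epsilon\}\bigr)
    \;=\;
    \labelmeasure_W\!\bigl(\{(G,S'):\mergedist_{S'}(M,\hat{M}_{f(G)\relabeledby S'})>\epsilon\}\bigr).
\]
Thus for every fixed $\epsilon>0$ the left side converges to $0$ as $n\to\infty$ iff the right side does, which is exactly the desired two-way implication.

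The only real subtlety is making the change of variables rigorous for the $n$-dimensional product Lebesgue measure. The cleanest way is to note that each $x_i$ is independently uniform on $[0,1]$ and $\varphi$ measure-preserving means $\varphi(x_i)$ is again uniform, so the push-forward of the product measure under $(x_1,\ldots,x_n)\mapsto(\varphi(x_1),\ldots,\varphi(x_n))$ equals the product measure itself; together with the pointwise equality of the integrands in $L_{W^\varphi}$ and $L_W\circ\varphi$, this gives the displayed identity without needing $\varphi$ to be invertible.
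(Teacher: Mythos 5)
Your argument is correct and matches the paper's proof essentially step for step: you establish the pointwise identity $\mergedist_S(M^\varphi,\hat{M}^\varphi_{f(G)\relabeledby S})=\mergedist_{\varphi(S)}(M,\hat{M}_{f(G)\relabeledby\varphi(S)})$, combine it with $\likelihood_{W^\varphi}(S\mid G)=\likelihood_W(\varphi(S)\mid G)$, and invoke measure preservation to equate the two label-measures of the bad events. The only cosmetic difference is that you phrase the last step as a push-forward identity over all of $[0,1]^n$ rather than the paper's identity $\int_{\varphi^{-1}(A)}f(\varphi(x))\,d\mu=\int_A f\,d\mu$ applied to the sections $F_\varphi(G)=\varphi^{-1}(F(G))$; these are the same fact.
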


\textbf{Consistency and the blockmodel.} If a graph clustering method is
consistent in the sense defined above, it is also consistent in the stochastic
blockmodel; i.e., it ensures strict recovery of the communities with high
probability as the size of the graphs grow large. For instance, suppose $W$ is a
stochastic blockmodel graphon with $\alpha$ along the block-diagonal and $\beta$
everywhere else. $W$ has two clusters at level $\alpha$, merging into one
cluster at level $\beta$. When the merge distortion between the graphon cluster
tree and a clustering is less than $\alpha - \beta$, which will eventually be
the case with high probability if the method is consistent, the two clusters are
totally disjoint in $\clustering$; this implication is made precise by
\Cref{claim:connected_separated} in \Cref{apx:merge_distortion_structure}.

\section{Consistent algorithms}

We now demonstrate that consistent clustering methods exist. We present two
results: First, we show that any method which is capable of consistently
estimating the probability of each edge in a random graph leads to a consistent
clustering method. We then analyze a modification of an existing algorithm to
show that it consistently estimates edge probabilities. As a corollary, we
identify a graph clustering method which satisfies our notion of consistency.
Our results will be for graphons which are piecewise Lipschitz (or weakly
isomorphic to a piecewise Lipschitz graphon):

\begin{restatable}[Piecewise Lipschitz]{defn}{defnlipschitz}
    \label{defn:block_partition}
    We say that $\mathcal B = \{ B_1, \ldots, B_k \}$ is a \term{block
    partition} if each $B_i$ is an open, half-open, or closed interval in
    $[0,1]$ with positive measure, $B_i \cap B_j$ is empty whenever $i \neq
    j$, and $\bigcup \mathcal B = [0,1]$.  We say that a graphon $W$ is
    \term{piecewise $\lipschitz$-Lipschitz} if there exists a set of blocks
    $\mathcal B$ such that for any $(x,y)$ and $(x',y')$ in $B_i \times
    B_j$, $|W(x,y) - W(x',y')| \leq \lipschitz (|x - x'| + |y -y'|)$.
\end{restatable}

Our first result concerns methods which are able to consistently estimate edge
probabilities in the following sense.  Let $\rv{S} = (\rv{x_1}, \ldots,
\rv{x_n})$ be an ordered set of $n$ uniform random variables drawn from the unit
interval. Fix a graphon $W$, and let $\rv{P}$ be the random matrix whose $ij$
entry is given by $W(\rv{x_i}, \rv{x_j})$. We say that $\rv{P}$ is the random
\term{edge probability matrix}.  Assuming that $W$ has structure, it is possible
to estimate $\edgeprobrv$ from a single graph sampled from $W$. We say that an
estimator $\hat{\rv{P}}$ of $\edgeprobrv$ is \term{consistent} in max-norm
if, for any $\epsilon > 0$, $\lim_{n \to \infty} \prob(\max_{i \neq j}
|\rv{P}_{ij} - \hat{\rv{P}}_{ij}| > \epsilon) = 0$. The following non-trivial
theorem, whose proof comprises \Cref{apx:estprob}, states that any estimator
which is consistent in this sense leads to a consistent clustering algorithm:

\begin{restatable}{theorem}{thmconditions}
    \label{thm:conditions}
    Let $W$ be a piecewise $\lipschitz$-Lipschitz graphon.
    Let $\hat{\rv{P}}$ be a consistent estimator of $\rv{P}$ in max-norm.
    Let $f$ be the clustering method which performs single-linkage clustering
    using $\hat{\rv{P}}$ as a similarity matrix.  Then $f$ is a consistent
    estimator of the graphon cluster tree of $W$.
\end{restatable}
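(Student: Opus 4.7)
The plan is to first reduce the problem to a deterministic question about single-linkage applied to the true edge probability matrix, then reformulate the merge-distortion bound as a containment between single-linkage clusters and graphon clusters, and finally verify this containment by exploiting the piecewise Lipschitz structure to control both internal connectivity and external separation.

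First I would use the max-norm consistency of $\hat{\rv{P}}$ to restrict to the event $\|\hat{\rv{P}} - \rv{P}\|_\infty < \delta$, which holds with probability tending to one under $\labelmeasure$. Single-linkage merge heights are max-min values over paths and hence $1$-Lipschitz in the sup-norm of the similarity matrix, so a standard stability argument allows one to deduce the bound for $\hat{\rv{P}}$ from the corresponding bound for $\rv{P}$ with an $\epsilon/2$ loss. It therefore suffices to show that the ideal single-linkage clustering $\clustering$ of $\rv{S} = (\rv{x}_1, \ldots, \rv{x}_n)$ has merge distortion less than $\epsilon/2$ with respect to $M$ with high probability, where the randomness of $\rv{S}$ is simply uniform on $[0,1]^n$.

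Since $\inducedheights_\clustering(s,t) = \min_{u,v \in \clustering(s,t)} M(u,v) \leq M(s,t)$ always, only the lower bound $\inducedheights_\clustering(s,t) \geq M(s,t) - \epsilon/2$ requires work. I would reformulate this as a two-sided estimate $|h^{\mathrm{SL}}(s,t) - M(s,t)| \leq \epsilon/4$ on the single-linkage merge heights of all sampled pairs, which by a short combinatorial argument (using hierarchy of the graphon cluster tree) gives the desired bound on $\inducedheights_\clustering$. This two-sided bound decomposes into an \emph{internal connectivity} condition, namely $h^{\mathrm{SL}}(s,t) \geq M(s,t) - \epsilon/4$, asserting that within each true cluster single-linkage joins sample points at close to the correct level, and an \emph{external separation} condition, $h^{\mathrm{SL}}(s,t) \leq M(s,t) + \epsilon/4$, asserting that single-linkage does not merge sample points from distinct clusters at too high a level.

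To establish internal connectivity, I would partition each block $B_i$ into sub-intervals of width $\Theta(n^{-1/2})$ and apply a Chernoff/union-bound argument so that with high probability every sub-rectangle of $[0,1]^2$ contains a sample point; then, combining the internal-connectedness of the true cluster at every level below its merge height with the $\lipschitz$-Lipschitz bound on $W$ inside blocks, I would chain sample points into a path whose minimum $W$-value is at least $M(s,t) - \epsilon/4$. For separation, I would use that $W$ is essentially bounded above by $M(s,t)$ on the product of the two distinct level-$M(s,t)$ clusters containing $s$ and $u$, and promote this a.e.\ bound to a pointwise bound on a slightly shrunken set via the Lipschitz property, arranging that the complement has measure $o(1/n^2)$ so that with probability tending to one no sample pair has a spuriously high $W$-value across the cluster boundary. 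The main obstacle, consuming the bulk of the technical effort, is precisely this translation between the almost-everywhere connectedness and disconnectedness properties that define graphon clusters and the pointwise values $W(\rv{x}_i, \rv{x}_j)$ that drive the single-linkage construction; the piecewise Lipschitz hypothesis is exactly what makes this translation possible.
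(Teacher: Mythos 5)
Your proposal follows essentially the same high-level route as the paper's proof: condition on the good event $\|\hat{\rv{P}} - \rv{P}\|_\infty$ small (which has probability tending to one by assumption), introduce a refinement of the Lipschitz block partition, use a sampling-density/union-bound argument to ensure every refined block contains a sample, chain sample points within clusters to lower-bound the single-linkage merge heights, and argue separation across clusters to upper-bound them. The observation that the single-linkage max-min value is $1$-Lipschitz in $\|\cdot\|_\infty$ on the similarity matrix is built into the paper's \Cref{claim:mergefixed}, and your "combinatorial argument" translating the two-sided merge-height bound into a merge-distortion bound is exactly the content of \Cref{claim:mergedist}. So the skeleton matches.

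Two deviations are worth flagging. First, the paper takes the refinement scale $\Delta$ to be a constant of order $\epsilon/\lipschitz$, independent of $n$, rather than $\Theta(n^{-1/2})$; since the merge-estimate error ends up scaling like $O(\Delta\lipschitz + \|\hat{\rv{P}} - \rv{P}\|_\infty)$, one only needs $\Delta$ small relative to $\epsilon$, and a fixed $\Delta$ makes the sampling-density argument cleaner. Second, and more substantively, your separation argument differs from the paper's. The paper's upper bound (in \Cref{claim:mergefixed}, using \Cref{claim:path1} and \Cref{claim:boundmergeon}) operates entirely at the block level: a single-linkage path with all edges $\geq \lambda$ lifts, via the Lipschitz property, to a $(\lambda - O(\Delta\lipschitz))$-chain of blocks, which forces the union of blocks to be connected and hence the mergeon to be $\geq \lambda - O(\Delta\lipschitz)$ a.e.\ between the endpoint blocks; comparing against \Cref{claim:boundmergeon} bounds $\lambda$. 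Your version — bound $W$ a.e.\ by $M(s,t)$ across the boundary of the two clusters containing $s$ and the offending neighbor, then upgrade to a pointwise bound — has the right intuition but skips over two issues. (a) Sampled nodes need not lie in any cluster at a given level $\lambda$: the union of clusters at level $\lambda$ can have measure strictly less than one, so the "two distinct clusters containing $s$ and $u$" you invoke may not both exist, and you would still need the block-level mechanism to handle such nodes. (b) For a piecewise Lipschitz graphon the a.e.\ bound on $W$ already gives a pointwise bound on block interiors (the exceptional set has measure exactly zero, not merely $o(1/n^2)$), so the "slightly shrunken set / $o(1/n^2)$" device is both unnecessary and somewhat misleading about how the Lipschitz hypothesis enters. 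Routing the upper bound through blocks, as the paper does, handles both of these uniformly and is the part of the argument your sketch would most need to tighten.
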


\begin{wrapfigure}{r}{0.5\textwidth}
    \iftoggle{arxiv}{
        \vspace{-1.5em}
    }{
        \vspace{-2.3em}
    }
\begin{minipage}{0.5\textwidth}
  \begin{algorithm}[H]
    \caption{
    \label{fig:algorithm}
    Clustering by nbhd. smoothing}
    \begin{algorithmic}
        \REQUIRE Adjacency matrix $A$, $C \in (0,1)$
        \STATE \emph{%
            \% Step 1: Compute the estimated edge\\
            \% probability matrix $\hat P$ using neighborhood\\
            \% smoothing algorithm based on \cite{Zhang2015-ik}
        }
        \STATE $n \gets \operatorname{\textsc{Size}}(A)$
        \STATE $h \gets C\sqrt{(\log n) / n}$
        \FOR{$i \neq j \in [n] \times [n]$}
             \STATE{$\hat A \gets A$ after setting row/column $j$ to zero}
             \FOR{$i' \in [n] \setminus \{i,j\}$}
             \STATE $d_j(i,i') \gets \max_{k \neq i,i',j} 
                 |(\hat A^2/n)_{ik} - (\hat A^2/n)_{i'k}|$
             \ENDFOR
             \STATE $q_{ij} \gets $ $h$th quantile of $\{d _j(i,i') : i' \neq i,j \}$
             \STATE $N_{ij} \gets \{ i' \neq i,j : d_j(i,i') \leq q_{ij}(h) \}$
         \ENDFOR
         \FOR{$(i,j) \in [n]\times[n]$}
             \STATE $\hat{P}_{ij} \gets \frac{1}{2}\left( 
                 \frac{1}{N_{ij}} \sum_{i' \in N_{ij}} A_{i'j} + 
                 \frac{1}{N_{ji}} \sum_{j' \in N_{ji}} A_{ij'}
             \right)$
         \ENDFOR
         \STATE \emph{\% Step 2: Cluster $\hat P$ with single linkage}
         \STATE $\clustering \gets$ the single linkage clusters of $\hat P$
         \RETURN $\clustering$
    \end{algorithmic}
  \end{algorithm}
\end{minipage}
\vspace{-1em}
\end{wrapfigure}

Estimating the matrix of edge probabilities has been a direction of recent
research, however we are only aware of results which show consistency in mean
squared error; That is, the literature contains estimators for which
$\nicefrac{1}{n^2}\|\rv{P} - \hat{\rv{P}}\|_F^2$ tends to zero in probability.
One practical method is the neighborhood smoothing algorithm of
\cite{Zhang2015-ik}.  The method constructs for each node $i$ in the graph
$\rv{G}$ a neighborhood of nodes $\rv{N_i}$ which are similar to $i$ in the
sense that for every $i' \in \rv{N_i}$, the corresponding column $\rv{A_{i'}}$
of the adjacency matrix is close to $\rv{A_i}$ in a particular distance.  $\rv{A_{ij}}$
is clearly not a good estimate for the probability of the edge $(i,j)$, as it is
either zero or one, however, if the graphon is piecewise Lipschitz, the average
$\rv{A_{i'j}}$ over $i' \in \rv{N_{ij}}$ will intuitively tend to the true
probability.  Like others, the method of \cite{Zhang2015-ik} is proven to be
consistent in mean squared error.  Since Theorem~\ref{thm:conditions} requires
consistency in max-norm, we analyze a slight modification of this algorithm
and show that it consistently estimates $\edgeprobrv$ in this stronger sense.
The technical details are in \Cref{apx:smoothing}.

\begin{theorem}
    \label{thm:linftyestimator}
    If the graphon $W$ is piecewise Lipschitz, the modified neighborhood
    smoothing algorithm in \Cref{apx:smoothing} is a consistent estimator of
    $\rv{P}$ in max-norm.
\end{theorem}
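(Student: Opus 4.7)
The plan is to establish max-norm concentration of $\hat P_{ij}$ around $P_{ij} = W(\rv{x_i}, \rv{x_j})$ by conditioning on the latent variables $\rv{x_1}, \ldots, \rv{x_n}$ and performing a bias--variance decomposition of the neighborhood-smoothed estimator. Under this conditioning the off-diagonal adjacency entries are independent Bernoullis, so standard concentration inequalities apply. The critical feature of the modified algorithm is the zeroing of row and column $j$ before computing $d_j(i,i')$: this makes $N_{ij}$ measurable with respect to $\hat A$, whose entries involve no edge incident to $j$, so the variables $\{A_{i'j}\}_{i' \neq i,j}$ averaged in the final step remain conditionally independent of $N_{ij}$. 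Without this independence the variance step below fails, which is presumably why the original method of \cite{Zhang2015-ik} is proven consistent only in mean squared error and not in max-norm.

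First I would establish uniform control of the row-distance statistics. Given the latents, the conditional expectation of $(\hat A^2/n)_{ik}$ equals $n^{-1}\sum_{\ell \neq j} W(\rv{x_i}, \rv{x_\ell}) W(\rv{x_k}, \rv{x_\ell})$, so $\mathbb{E}\, d_j(i,i')$ is bounded above by $L\,|\rv{x_i}-\rv{x_{i'}}|$ whenever $\rv{x_i}$ and $\rv{x_{i'}}$ lie in the same Lipschitz block. A Bernstein bound for $(\hat A^2/n)_{ik}$, which is a sum of $O(n)$ bounded conditionally independent terms, combined with a union bound over the $O(n^4)$ quadruples $(i,i',j,k)$, yields $|d_j(i,i') - \mathbb{E}\, d_j(i,i')| = O(\sqrt{\log n / n})$ uniformly with probability $1-o(1)$.

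Next, using this uniform control together with the fact that only an $O(1/n)$-fraction of latents fall within $O(\sqrt{\log n/n})$ of any of the finitely many block boundaries, I would show that with high probability every $i' \in N_{ij}$ has $|\rv{x_{i'}} - \rv{x_i}|$ at most $O(\sqrt{\log n / n})$ (for all $\rv{x_i}$ not near a boundary), and hence $|W(\rv{x_{i'}}, \rv{x_j}) - W(\rv{x_i}, \rv{x_j})| = O(\sqrt{\log n / n})$ by Lipschitzness, controlling the bias. Standard empirical-process estimates for the uniform latents give $|N_{ij}| = \Theta(hn) = \Theta(\sqrt{n \log n})$. For the variance, conditional on $N_{ij}$ and the latents, Hoeffding yields $\bigl||N_{ij}|^{-1}\sum_{i' \in N_{ij}}(A_{i'j} - W(\rv{x_{i'}}, \rv{x_j}))\bigr| = O(\sqrt{\log n / |N_{ij}|}) = O(n^{-1/4}(\log n)^{1/4})$ with probability $1-n^{-c}$ for any desired $c$; a final union bound over the $n^2$ pairs $(i,j)$ then gives max-norm consistency.

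The hardest part will be carefully handling the circular dependency between the data-driven $N_{ij}$ and the quantities concentrated at each stage. The zeroing modification cleanly severs the dependency for the final averaging, but the neighborhood-selection step itself needs the uniform-in-$(i,i',j,k)$ concentration above in order to guarantee that the data-dependent $h$-quantile threshold does not fluctuate enough to admit ``bad'' neighbors with non-negligible probability. A secondary subtlety is the treatment of latents falling near block boundaries, where Lipschitz control is lost: the offending nodes form a vanishing fraction, but since max-norm is sensitive to a single bad entry one must either exclude the corresponding rows from the uniform bound or verify that the neighborhood still avoids crossing block boundaries with high probability.
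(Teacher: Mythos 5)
Your high-level plan — condition on latents, split into bias and variance, exploit the zeroing modification to get conditional independence of $\{A_{i'j}\}_{i' \in N_{ij}}$ given $N_{ij}$, then Hoeffding plus union bound — matches the paper's in spirit, and your discussion of why zeroing row/column $j$ is the key enabling modification is exactly right (it is the reason Claim~\ref{claim:smoothing_close} can invoke Hoeffding at all). However, there is a genuine gap in the bias step.

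You assert that, after controlling the distance statistic uniformly, every $i' \in N_{ij}$ satisfies $|\rv{x_{i'}} - \rv{x_i}| = O(\sqrt{\log n / n})$, and then invoke Lipschitzness to bound the bias at that rate. This is false for graphons in general, because $d_j(i,i')$ being small does \emph{not} imply that $\rv{x_i}$ and $\rv{x_{i'}}$ are close in $[0,1]$: the graphon may have near-twin points $x$ and $x'$ far apart in $[0,1]$ with $W(x,\cdot) \approx W(x',\cdot)$, and such a node $i'$ can (and typically will) be admitted to $N_{ij}$. The paper never makes a claim about latent proximity. Instead, Claim~\ref{claim:two_neighborhood} only establishes that $\tfrac1n\|P_i - P_{i'}\|_2^2$ is small for $i' \in N_{i\setminus j}$ — a statement about closeness of the \emph{columns of $P$}, not of the latents — and Claim~\ref{claim:infty_neighborhood} then upgrades this to an $\ell^\infty$ bound on the column difference by a counting argument on a coarser $\Delta_\infty$-refinement. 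This upgrade is lossy: $\|P_i - P_{i'}\|_\infty^2 \lesssim \epsilon / (\rho\Delta_\infty)$ when $\tfrac1n\|P_i - P_{i'}\|_2^2 \leq \epsilon$, and balancing $\Delta_\infty$ against $\epsilon = O(\sqrt{\log n/n})$ forces the slower $(\log n / n)^{1/6}$ rate in Theorem~\ref{thm:smoothing}, rather than the $n^{-1/4}(\log n)^{1/4}$ you derive. So the rate you claim for the bias term is an artifact of the unsupported latent-proximity step; if you replace it with the correct $\ell^2\!\to\!\ell^\infty$ conversion, you recover the paper's proof. Your concern in the final paragraph about block boundaries is real but secondary; the paper absorbs it by using a $\rho$-dense-sample condition on the refinement (Claim~\ref{claim:sample_density}) rather than by excluding boundary-adjacent nodes.
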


As a corollary, we identify a practical graph clustering algorithm
which is a consistent estimator of the graphon cluster tree. The algorithm
is shown in \Cref{fig:algorithm}, and details are in 
\Cref{apx:smoothing:modifications}. 
\Cref{apx:experiments} contains experiments in which the algorithm is applied to
real and synthetic data.

\begin{corollary}
    If the graphon $W$ is piecewise Lipschitz, \Cref{fig:algorithm} is a
    consistent estimator of the graphon cluster tree of $W$.
\end{corollary}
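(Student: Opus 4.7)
The plan is to observe that the corollary is essentially a direct composition of \Cref{thm:conditions} and \Cref{thm:linftyestimator}, so no new technical work is required beyond verifying that the algorithm in \Cref{fig:algorithm} fits the hypotheses of both theorems. First, I would decompose the algorithm into its two advertised phases: Step~1 computes an estimated edge probability matrix $\hat{\rv{P}}$ by the modified neighborhood smoothing procedure, and Step~2 runs single linkage on $\hat{\rv{P}}$ interpreted as a similarity matrix. This decomposition is exactly the template covered by \Cref{thm:conditions}.

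Next, I would invoke \Cref{thm:linftyestimator} for Step~1. Since $W$ is piecewise Lipschitz, this theorem guarantees that the modified neighborhood smoothing algorithm of \Cref{apx:smoothing} produces a $\hat{\rv{P}}$ which is a consistent estimator of the random edge probability matrix $\rv{P}$ in max-norm, i.e.\ $\max_{i \ne j} |\rv{P}_{ij} - \hat{\rv{P}}_{ij}| \to 0$ in probability as $n \to \infty$. The only bookkeeping step here is to confirm that the pseudocode shown in \Cref{fig:algorithm} coincides with the algorithm analyzed in \Cref{apx:smoothing:modifications}; this is immediate from the description given in that appendix.

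Finally, I would apply \Cref{thm:conditions} with this $\hat{\rv{P}}$. The theorem asserts that whenever $W$ is piecewise Lipschitz and $\hat{\rv{P}}$ is a max-norm consistent estimator of $\rv{P}$, the clustering method obtained by single linkage on $\hat{\rv{P}}$ is a consistent estimator of the graphon cluster tree in the sense of \Cref{defn:label_measure}. Since both hypotheses have just been verified, the corollary follows.

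There is no real obstacle in this proof; the entire technical burden has been discharged inside \Cref{thm:conditions} (which handles the passage from max-norm edge probability estimation to merge distortion convergence against a mergeon of $W$) and \Cref{thm:linftyestimator} (which upgrades the neighborhood smoothing analysis of \cite{Zhang2015-ik} from Frobenius/MSE consistency to max-norm consistency). The corollary itself is just the chaining of these two results, and can reasonably be presented as a one-paragraph deduction in the main text, with the reader pointed to \Cref{apx:estprob} and \Cref{apx:smoothing} for the substantive content.
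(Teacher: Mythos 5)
Your proposal is correct and matches the paper's (implicit) argument exactly: the corollary is indeed just the composition of \Cref{thm:linftyestimator}, which certifies that Step~1 of \Cref{fig:algorithm} is a max-norm consistent estimator of $\rv{P}$ when $W$ is piecewise Lipschitz, with \Cref{thm:conditions}, which says single linkage on any such estimator recovers the graphon cluster tree. No further argument is needed or given in the paper.
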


\section{Discussion}

We have presented a consistency framework for clustering in the graphon model
and demonstrated that a practical clustering algorithm is consistent. We now
identify two interesting directions of future research. First, it would be
interesting to consider the extension of our framework to \emph{sparse} random
graphs; many real-world networks are sparse, and the graphon generates dense
graphs. Recently, however, sparse models which extend the graphon have been
proposed; see \cite{Caron2014-pw,Borgs2016-uv}. It would be interesting to see
what modifications are necessary to apply our framework in these models.

Second, it would be interesting to consider alternative ways of defining the
ground truth clustering of a graphon. Our construction is motivated by
interpreting the graphon $W$ not only as a random graph model, but also as a
similarity function, which may not be desirable in certain settings.  For
example, consider a ``bipartite'' graphon $W$, which is one along the
block-diagonal and zero elsewhere. The cluster tree of $W$ consists of a single
cluster at all levels, whereas the ideal bipartite clustering has two clusters.
Therefore, consider applying a transformation $S$ to $W$ which maps it to a
``similarity'' graphon.  The goal of clustering then becomes the recovery of the
cluster tree of $S(W)$ given a random graph sampled from $W$.  For instance, let
$S : W \mapsto W^2$, where $W^2$ is the operator square of the bipartite graphon
$W$.  The cluster tree of $S(W)$ has two clusters at all positive levels, and so
represents the desired ground truth. In general, any such transformation $S$
leads to a different clustering goal. We speculate that, with minor
modification, the framework herein can be used to prove consistency results in a
wide range of graph clustering settings.

\enlargethispage{1em}

\textbf{Acknowledgements.} This work was supported by NSF grants IIS-1550757
\& DMS-1547357.

\newpage
\small
\bibliography{citations}

\normalsize
\appendix
\newpage

\section{Technical details}

\subsection{Measurable sets modulo null sets}

\newcommand{\eqzero}{\sim_\emptyset}

Let $(\Omega, \Sigma, \mu)$ be a measure space. Let $A, A'$ be any measurable
sets and define $\eqzero$ to be the relation $A \eqzero A' \Leftrightarrow \mu(A
\symdiff A') = 0$; that is, two measurable sets are equivalent under $\eqzero$
if they differ by a null set. Write $\Sigma / {\eqzero}$ for the quotient space
of measurable sets by $\eqzero$, and denote by $\zeroclass{A}$ the equivalence
class containing the set $A$. Throughout, we use script letters such as
$\clusterstyle{A}$ to denote these equivalence classes of measurable sets modulo
null sets.

We can often use the normal set notation to manipulate such classes without
ambiguity. For instance, if $\clusterstyle{A}$ and $\clusterstyle{A}'$ are
two classes in $\Sigma / {\eqzero}$, we define $\clusterstyle{A} \cup
\clusterstyle{A}'$ to be $\zeroclass{A \cup A'}$, where $A$ and $A'$ are
arbitrary members of $\clusterstyle{A}$ and $\clusterstyle{A'}$.
$\clusterstyle{A} \cap \clusterstyle{A}'$ and $\clusterstyle{A} \setminus
\clusterstyle{A}'$ are defined similarly. We can define $\clusterstyle{A} \times
\clusterstyle{A}'$ in this manner too; note that the result is an equivalence
class in $\Sigma \times \Sigma / {\eqzero}$, where the relation $\eqzero$ is
implicitly assumed to be with respect to the product measure $\mu \times \mu$.
Similarly, we can unambiguously order such equivalence classes. For example, we
write $\clusterstyle{A} \subset \clusterstyle{A}'$ to denote
$\mu(\clusterstyle{A} \setminus \clusterstyle{A}') = 0$.

In some instances it will be more convenient to work with sets as opposed to
equivalence classes of sets. In such cases we will use a \term{section} map
$\rho$ which returns an (often arbitrary) member of the class,
$\rho(\clusterstyle{A})$.

\subsection{A precise definition of a mergeon}
\label{apx:precise_mergeon}

In \Cref{defn:mergeon}, we introduce the \emph{mergeon} of a cluster tree
$\clustertree$ as a graphon $M$ satisfying 
\[
        M^{-1}[\lambda,1]
        =
        \bigcup_{\cluster \in \clustertree_W(\lambda)} 
        \cluster \times \cluster
\]
for all $\lambda \in [0,1]$. This definition involves a slight abuse of
notation. In particular, $\cluster$ is an equivalence
class of sets modulo null sets. Therefore, as descibed in the previous
subsection, $\cluster \times \cluster$ is defined to be an equivalence class of
measurable subsets of $[0,1] \times [0,1]$ modulo null sets. On the other hand,
$M^{-1}[\lambda,1]$ is simply a measurable subset of $[0,1]\times[0,1]$.
Therefore, the left and the right of the above equation are two different types
of objects, and we are being imprecise in equating them.


A precise statement of this definition is as follows:

\begin{defn}[Mergeon, rigorous]
    A \term{mergeon} of $\clustertree$ is a graphon $M$ such that for all
    $\lambda \in [0,1]$,
    \[
            M^{-1}[\lambda,1] \symdiff
            \bigcup_{\cluster \in \clustertree_W(\lambda)}
            \rho(\cluster) \times \rho(\cluster)
    \]
    is a null set, where $M^{-1}[\lambda,1] = \{(x,y) : M(x,y) \geq \lambda\}$,
    $\symdiff$ is the symmetric difference operator, and $\rho$ is an arbitrary
    section map. Equivalently, a \term{mergeon} of $\clustertree$ is a graphon
    $M$ such that for all $\lambda \in [0,1]$,
    \[
            \zeroclass{M^{-1}[\lambda,1]}
            =
            \bigcup_{\cluster \in \clustertree_W(\lambda)}
            \cluster \times \cluster,
    \]
    where $\zeroclass{M^{-1}[\lambda,1]}$ is the equivalence class 
    of measurable subsets of $[0,1] \times [0,1]$ modulo null sets which
    contains $M^{-1}[\lambda,1]$.
\end{defn}

In these more rigorous definitions we compare sets to sets, or equivalence
classes to equivalence classes, and thus precisely define the mergeon.

\subsection{Strict cluster trees and their mergeons}
\label{apx:strict_mergeon}

A graphon cluster tree is a hierarchical collection of equivalence classes of
sets. It is sometimes useful to instead to work with a hierarchical collection
of subsets of $[0,1]$. We may always do so by choosing a section map $\rho$ and
applying it to every cluster in the cluster tree.  Though the choice of
representative of a given cluster is often arbitrary, it will sometimes be
useful to choose it in such a way that the cluster tree has strictly nested
structure, as made precise in the following definition.

\begin{defn}[Strict section]
    Let $\clustertree$ be a cluster tree. A \term{strict section}
    $\strictsection : \clustertree \to \Sigma$ is a function which selects a
    unique representative from each cluster $\cluster$ such that if: 
    \begin{enumerate}
        \item
        $\mu(\cluster \cap \cluster') = 0 
        \Rightarrow 
        \strictsection(\cluster) \cap \strictsection(\cluster) = \emptyset$, 
        \item
            $\cluster \subset \cluster' \Rightarrow
            \strictsection(\cluster) \subset \strictsection(\cluster')$, and 
        \item (Technical condition)\;
        $
        \strictsection(\cluster) = \bigcap \{
            \strictsection(\cluster') : \cluster' \supset \cluster
        \}$.
    \end{enumerate}
    The \term{strict cluster tree} $\tilde\clustertree$ induced by applying
    $\strictsection$ to $\clustertree$ is defined by
    $
        \tilde\clustertree(\lambda) = \{ \,
            \strictsection(\cluster) : \cluster \in \clustertree(\lambda)
        \}.
    $
\end{defn}

\Cref{claim:strict_section} in \Cref{apx:proofs} proves that it is always
possible to construct a strict section.  Furthermore, given a cluster tree and a
strict section, there is a \emph{unique} mergeon representing the strict cluster
tree, defined as follows:

\begin{defn}[Strict mergeon]
    Let $\clustertree$ be a cluster tree, and suppose $\strictsection$ is a
    strict section for the clusters of $\clustertree$. Then $M$ is a
    \term{strict mergeon} of the strict cluster tree induced by $\strictsection$
    if, for every $\lambda \in [0,1]$,
    \[
        M^{-1}[\lambda,1] = 
            \bigcup_{\cluster \in \clustertree_W(\lambda)}
            \strictsection({\cluster})
            \times 
            \strictsection({\cluster}).
    \]
\end{defn}

Because any two mergeons of the same cluster tree differ only on a null set, we
are typically free to assume that a mergeon is strict without much loss. Making
this assumption will simplify some statements and proofs.

\subsection{Merge distortion and cluster structure}
\label{apx:merge_distortion_structure}

\Cref{defn:induced_merge_height} introduces the merge height induced on a
clustering by a mergeon. There are, of course, other approachs to inducing a
merge height on a clustering, but our definition allows for a particular
interpretation of the merge distortion in terms of the cluster structure that is
recovered by the finite clustering, as the following claim makes precise.
It is convenient to state the claim using the notion of a \term{strict mergeon}
defined in \Cref{apx:strict_mergeon}; analogous (but equally as strong)
claims can be made for general mergeons and cluster trees.

\begin{restatable}{claim}{claimconnectedseparated}
    \label{claim:connected_separated}
    Let $\clustertree$ be cluster tree, and let $\strictclustertree$ be a strict
    cluster tree obtained by applying a strict section $\strictsection$ to each
    cluster of $\clustertree$.  Let $M$ be the strict mergeon representing
    $\strictclustertree$.  Let $S = (x_1, \ldots, x_n)$ with each $x_i \in
    [0,1]$ and suppose $\clustering$ is a clustering of $S$. Let
    $\inducedheights$ be the induced merge height on $\clustering$. If
    $\mergedist_S(M, \inducedheights) < \epsilon$, we then have:
    \begin{enumerate}
        \item \term{Connectedness}: If $C$ is a cluster of $\strictclustertree$
            at level $\lambda$ and $|S \cap C| \geq 2$, then the smallest
            cluster in $\clustering$ which contains all of $S \cap C$ is
            contained within $C' \cap S$, where $C'$ is the cluster of
            $\strictclustertree$ at level $\lambda' = \lambda - \epsilon$
            which contains $C$.
        \item \term{Separation}: If $C_1$ and $C_2$ are two clusters of
            $\strictclustertree$ at level $\lambda$ such that $C_1$ and $C_2$
            merge at level $\lambda' < \lambda - \epsilon$, then if $|C_1 \cap
            S|, |C_2 \cap S| \geq 2$, the smallest
            cluster in $\clustering$ containing $C_1 \cap S$ and the smallest
            cluster containing $C_2 \cap S$ are disjoint.
    \end{enumerate}
\end{restatable}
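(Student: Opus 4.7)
The plan is to prove both parts using the following key observation about hierarchical clusterings: if $D = \clustering(T)$ is the smallest cluster of $\clustering$ containing a finite set $T \subseteq S$ with $|T|\geq 2$, then in fact $D = \clustering(s, s')$ for some pair $s, s' \in T$ drawn from two distinct maximal proper subclusters of $D$. Combined with the definition of the induced merge height, this yields
\[
    \inducedheights_\clustering(s, s') = \min_{u,v \in D} M(u,v),
\]
which is the main mechanism for turning the pointwise distortion bound into a statement about the whole finite cluster $D$. I will also repeatedly use the strict mergeon characterization ``$M(u,v) \geq \eta$ iff some cluster of $\strictclustertree$ at level $\eta$ contains both $u$ and $v$,'' together with the fact that strict sections make distinct clusters at a fixed level genuinely disjoint.

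For \textbf{connectedness}, let $D$ be the smallest cluster of $\clustering$ containing $S \cap C$ and, by the key observation, pick $s, s' \in S \cap C$ with $\clustering(s,s') = D$. Since both lie in the cluster $C$ at level $\lambda$, the strict mergeon property gives $M(s,s') \geq \lambda$, and the distortion bound then yields $\inducedheights_\clustering(s,s') > \lambda - \epsilon$. By the observation this means $M(u,v) > \lambda - \epsilon$ for every pair in $D$. For any $t \in D$, applying this bound to $(s, t)$ produces a strict cluster at level $\lambda - \epsilon$ containing both $s$ and $t$; since $C'$ is the unique strict cluster at level $\lambda - \epsilon$ containing $s$ (distinct clusters at a common level are disjoint under a strict section), this cluster must be $C'$, and so $t \in C'$. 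Hence $D \subseteq C' \cap S$.

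For \textbf{separation}, let $D_i$ be the smallest cluster of $\clustering$ containing $C_i \cap S$ and suppose for contradiction that $D_1 \cap D_2 \neq \emptyset$. The laminar structure of $\clustering$ forces one to contain the other, so without loss of generality $D_2 \supseteq D_1 \supseteq C_1 \cap S$. Apply the key observation to pick $s_2^a, s_2^b \in C_2 \cap S$ with $\clustering(s_2^a, s_2^b) = D_2$, and pick any $s_1^a \in C_1 \cap S \subseteq D_2$. Since $s_2^a, s_2^b \in C_2$, the strict mergeon gives $M(s_2^a, s_2^b) \geq \lambda$. On the other hand, no strict cluster at any level $\eta > \lambda'$ contains both $s_1^a \in C_1$ and $s_2^a \in C_2$, because at every such level the strict clusters containing $s_1^a$ and $s_2^a$ are descendants of the disjoint $C_1$ and $C_2$ branches; hence $M(s_1^a, s_2^a) \leq \lambda'$. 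Therefore
\[
    \inducedheights_\clustering(s_2^a, s_2^b) = \min_{u,v \in D_2} M(u,v) \leq M(s_1^a, s_2^a) \leq \lambda',
\]
which forces $|M(s_2^a, s_2^b) - \inducedheights_\clustering(s_2^a, s_2^b)| \geq \lambda - \lambda' > \epsilon$, contradicting the distortion bound.

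The main obstacle is conceptual rather than computational: the cluster tree $\clustertree$ lives in a quotient by null sets, and one has to be careful that the set-theoretic manipulations with $C, C', C_1, C_2$ and their strict representatives behave correctly. Passing to the strict cluster tree $\strictclustertree$ is precisely what makes the ancestor cluster $C'$ at level $\lambda - \epsilon$ genuinely unique once we fix $s \in C'$, and what prevents the two ``separation'' branches above $\lambda'$ from spuriously sharing a point. Once these conventions are in place, both halves reduce to the pair-realizability observation plus a single application of the distortion inequality.
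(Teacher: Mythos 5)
Your proposal is correct, and the central ``pair-realizability'' observation --- that the smallest cluster $D$ of a laminar family $\clustering$ containing a finite set $T$ (with $|T|\geq 2$) satisfies $D = \clustering(s,s')$ for some $s,s' \in T$ --- is exactly the ingredient needed to make the argument go through, and it is in fact a refinement of what the paper writes. The paper's proof of connectedness picks \emph{arbitrary} $x,x' \in C\cap S$ and then asserts that the merge distortion is at least $M(x,x') - M(x,y)$; this implicitly requires $y \in \clustering(x,x')$, i.e.\ that $\clustering(x,x')$ equals $\hat{C}$ rather than some proper subcluster, which is precisely what your observation guarantees once you choose the pair appropriately. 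So for connectedness you are following the same route but making an implicit step explicit and rigorous. For separation you diverge from the paper: the paper derives it as a one-line corollary of connectedness (the ancestors $\tilde C_1, \tilde C_2$ of $C_1, C_2$ at level $\lambda-\epsilon$ are disjoint, and connectedness places $\hat C_1 \subset \tilde C_1 \cap S$ and $\hat C_2 \subset \tilde C_2 \cap S$), whereas you give a direct contradiction argument from laminarity and the pair realizing $D_2$. Both are valid; the paper's route is shorter since it reuses the already-proved part, while yours is self-contained and again relies on the same realizability lemma, which gives the write-up a pleasing uniformity. One small wording caveat: you phrase the observation as the pair being ``drawn from two distinct maximal proper subclusters of $D$,'' which does not cover the cases where $D$ has no proper subclusters or where some $s\in T$ lies in none of them; the conclusion ($\exists s,s'\in T$ with $\clustering(s,s')=D$) is nonetheless correct in all cases, so this is cosmetic rather than substantive.
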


The proof of this claim is found in \Cref{apx:proofs}.

\subsection{Proofs}
\label{apx:estprob:proofs}

\claimmergedist*
\begin{proof}
    Take any arbitrary $i \neq j$ in the clustering $\clustering_\estmergeon$.
    Let $C$ be the smallest cluster containing both $i$ and $j$. Then $C$ is a
    cluster in $H$ at level $\estmergeon_{ij}$. Let $u,v \in C$, $u \neq v$ be
    such that $M(x_u, x_v) = \min_{u' \neq v' \in C} M(x_{u'}, x_{v'}) =
    \hat{M}_{ij}$. Then we
    have that $M(x_i, x_j) \geq M(x_u, x_v)$. On the other hand, $u$ and $v$ are
    members of $C$, which is a cluster at level $\estmergeon_{ij}$, so that
    $\estmergeon{uv} \estmergeon{ij}$. Hence $\estmergeon_{uv} > M(x_i, x_j) -
    \epsilon$. But $\estmergeon_{uv} < M(x_u, x_v) + \epsilon$. Therefore,
    $M(x_i, x_j) - M(x_u, x_v) < 2\epsilon$. Therefore, $M(x_i, x_j) -
    \hat{M}_{ij} < 2 \epsilon$. This holds for all $i$ and $j$ simultaneously,
    since $i$ and $j$ were arbitrary. Hence the merge distortion is less than
    $2\epsilon$.
\end{proof}

\claimpathuno*
\begin{proof}

    Let $A$ be an arbitrary measurable subset of $C$ such that $0 <
    \lebesguemeasure(A) < \lebesguemeasure(C)$. We will show that
    $W^{-1}[\lambda,1] \cap A \times (C \setminus A)$ has positive measure, and
    therefore $C$ is connected at level $\lambda$. Since $C$ is connected at
    level $\lambda$ in $W$, it must be part of some cluster at level $\lambda$,
    and so the mergeon is at least $\lambda$ almost everywhere on $C \times C$.
    
    There are two cases: Either 1) There exists a $j \in [t]$ such that $0 <
    \lebesguemeasure(R_j \cap A) < \lebesguemeasure(R_j)$, or 2) for all $i \in
    [t]$, either $\lebesguemeasure(R_i \cap A) = 0$ or $\lebesguemeasure(R_i
    \cap A) = \lebesguemeasure(R_i)$. 

    Assume the first case: there exists a $j$ such that $R_j$ contains some
    non-negligible part of $A$, but $\mu(A \cap R_j) < \mu(R_j)$.
    Since there are at least two elements in the path, there is a $j'$ such that
    $j' \in [t]$ and $|j - j'| = 1$, that is, $R_{j'}$ is either immediately
    before or after $R_j$ in the $\lambda$-path. There are two sub-cases:

    \begin{itemize}
        \item $\lebesguemeasure(R_{j'} \cap A) = 0$, such that $R_{j'} \subseteq
            C \setminus A$. Then $(R_j \cap A) \times R_{j'} \subseteq A \times
            (C \setminus A)$. Since $\lebesguemeasure(R_j \cap A) > 0$ and
            $\lebesguemeasure(R_{j'}) > 0$, we have that $\lebesguemeasure((R_j
                \cap A) \times R_{j'}) > 0$, and since $W$ is at least $\lambda$
                a.e. on $R_j \times R_{j'}$, we have that
            \[
                \lebesguemeasure(W^{-1}[\lambda, 1] \cap A \times (C \setminus
                A)) \geq 
                \lebesguemeasure(W^{-1}[\lambda, 1] \cap (R_j \cap A) \times
                R_{j'}) > 0.
            \]
        \item $\lebesguemeasure(R_{j'} \cap A) > 0$. Then 
            $(R_{j'} \cap A) \times (R_j \setminus A) \subseteq A \times (C
                \setminus A)$ is a set of positive measure. Since $W$ is at
                least $\lambda$ a.e. on $R_{j'} \times R_j$, we have:
            \[
                \lebesguemeasure(W^{-1}[\lambda, 1] \cap A \times (C \setminus
                A)) \geq 
                \lebesguemeasure\left(
                    W^{-1}[\lambda, 1] \cap 
                    (R_{j'} \cap A) \times (R_j \setminus A)
                \right) > 0.
            \]
    \end{itemize}

    Now consider the second case in which, for every $i \in [t]$,
    $\lebesguemeasure(R_i \cap A = 0)$ or $\lebesguemeasure(R_i \cap A) =
    \lebesguemeasure(R_i)$. There must exist a $j,j' \in [t]$ such that $|j -
    j'| = 1$, $\lebesguemeasure(R_j \cap A) = \lebesguemeasure(R_j)$, and
    $\lebesguemeasure(R_{j'} \cap A) = 0$. If this were not the case, then it
    would be that either $\mu(R_i \cap A) = \mu(R_i)$ for every $i \in [t]$, or 
    $\mu(R_i \cap A) = 0$ for every $i \in [t]$. But the former of these would
    imply that $\mu(A) = \mu(C)$, and the latter would imply $\mu(A) = 0$, which
    we have assumed not to be the case.

    Therefore, $R_{j} \times R_{j'} \subseteq A \times (C \setminus A)$, and
    this set is of positive measure. Since $W$ is at least $\lambda$ a.e. on
    $R_j \times R_{j'}$, we once again find
    \[
        \lebesguemeasure(W^{-1}[\lambda, 1] \cap A \times (C \setminus
        A)) \geq 
        \lebesguemeasure\left(
            W^{-1}[\lambda, 1] \cap R_j \times R_{j'}
        \right) > 0.
    \]

    Hence, in every case it is true that $\lebesguemeasure(W^{-1}[\lambda, 1]
    \cap A \times (C \setminus A))$ has positive measure. Since $A$ was
    arbitrary, $C$ is connected at level $\lambda$. Hence $M \geq \lambda$
    almost everywhere on $C \times C$.
\end{proof}

\claimpathdos*
\begin{proof}
    To be precise, let $C = \rho(\cluster)$ be any representative of the cluster
    $\cluster$.  Fix a $\lambda' < \lambda$. Let
    \[
        \mathcal G = \{ R'' \in \refinement : \lebesguemeasure(R'' \cap C) > 0
        \}.
    \]
    Then $\mathcal G$ contains, in particular, $R_1$ and $R_t$. Since $\cluster$
    is connected at level $\lambda$, it is true that
    \[
        \lebesguemeasure(W^{-1}[\lambda', 1] \cap (R_1 \cap C) \times (C \setminus
        R_1)) > 0.
    \]
    Since $C \setminus R_1$ is a subset of $\left(\bigcup \mathcal G\right)
    \setminus R_1$, there must exist an $R_2 \in \mathcal G$ such that
    \[
        \lebesguemeasure(W^{-1}[\lambda',1] \cap R_1 \times R_2) > 0.
    \]
    Consider $W$ on $R_2$. From above, we know that there is a non-negligible
    subset of $R_1 \times R_2$ on which $W \geq \lambda'$.  Hence there is some
    point in $R_1 \times R_2$ on which $W \geq \lambda'$. Therefore, due to the
    Lipschitz condition, we know that $W$ is at least $\lambda' - 2\Delta
    \lipschitz$ everywhere on $R_1 \times R_2$. 

    Now let $S_2 = R_1 \cup (R_2 \cap C)$. Now, since $\cluster$ is connected at
    level $\lambda$, it is true that 
    \[
        \lebesguemeasure(W^{-1}[\lambda', 1] \cap S_2 \times (C \setminus S_2))
        > 0.
    \]
    By the same logic as above, there must exist an $R_3 \in \mathcal G$, $R_3
    \neq R_2, R_1$ such that 
    \[
        \lebesguemeasure(W^{-1}[\lambda', 1] \cap S_2 \times R_3) > 0.
    \]
    Hence it must be the case that either 
    \[
        \lebesguemeasure(W^{-1}[\lambda, 1] \cap R_1 \times R_3) > 0,
    \]
    or
    \[
        \lebesguemeasure(W^{-1}[\lambda, 1] \cap R_2 \times R_3) > 0.
    \]
    In either case, it is true that between any pair chosen from $R_1, R_2,
    R_3$, there is a $\lambda - 2\Delta \lipschitz$ path. The process continues,
    choosing $R_4, R_5, \ldots$ and so on. This process must complete in a
    finite number of steps, since $\mathcal G$ is a finite set. At every step,
    there exists a $\lambda$-path between any two of the $R_i$. Hence we
    eventually construct a $\lambda$-path between $R$ and $R'$.

\end{proof}

\claimboundmergeon*
\begin{proof}
    By the definition of the mergeon it must be that $M \leq \lambda$ almost
    everywhere on $R \times R'$, since if there existed a $\lambda' > \lambda$
    for which $M^{-1}[\lambda',1] \cap R \times R'$ is not-null, this would
    imply that there exists some cluster at level $\lambda'$ containing a
    non-negligible part of both $R$ and $R'$.

    Now, by Claim~\ref{claim:path2}, for any $\lambda' < \lambda$ there exists a
    $(\lambda' - 2\Delta\lipschitz)$ path between $R$ and $R'$. Hence, by
    Claim~\ref{claim:path1}, $M \geq \lambda' - 2\Delta\lipschitz$ almost
    everywhere on $R \times R'$ for any $\lambda' < \lambda$.
\end{proof}

\claimmergefixed*
\begin{proof}

    Consider an arbitrary $x_i,x_j \in S$. Let $R_i$ and $R_j$ be the blocks in
    $\refinement$ which contain $x_i$ and $x_j$, respectively. 
    Let $\lambda^*$ be the greatest level at which there exists some cluster
    containing non-negligible parts of both $R_i$ and $R_j$. Therefore, by
    Claim~\ref{claim:boundmergeon}, $M$ is bounded below by $\lambda^*
    -2\Delta\lipschitz$ and above by $\lambda^*$ almost everywhere on $R_i \times
    R_j$.
    
    First we bound $\estmergeon_{ij}$ from below.  By Claim~\ref{claim:path2}
    there exists a $(\lambda' - \Delta \lipschitz)$-path $\langle R_i = R_1,
    \ldots, R_t = R_j \rangle$ between $R_i$ and $R_j$, for any $\lambda' <
    \lambda^*$. By the assumption on $S$, there exists a sample from each
    element of the path, so that there is a path of samples $\langle x_i = x_1,
    \ldots, x_t = x_j \rangle$ with the property that, between any two
    consecutive elements in the path, we have $W(x_k, x_{k+1}) \geq \lambda' - 2
    \Delta \lipschitz$ for all $\lambda' < \lambda^*$.  Hence $\estedgeprob(x_k,
    x_{k+1}) \geq \lambda^* - 2\Delta \lipschitz - \epsilon$. Therefore, there
    exists a path $p$ from $x_i$ to $x_j$ such that $\min_{1\leq k \leq \ell(p)}
    \estedgeprob_{p_k p_{k+1}} \geq \lambda^* - 2\Delta \lipschitz$. As a
    result, $\estmergeon_{ij} \geq \lambda^* - 2\Delta \lipschitz - \epsilon$.

    We now bound $\estmergeon_{ij}$ from above.  Let $p = \langle x_i = x_1,
    \ldots, x_t = x_j \rangle$ be a path with cost $\estmergeon_{ij}$. Let
    $\langle R_1, \ldots, R_t \rangle$ be the corresponding path of blocks from
    $\refinement$, such that $x_k \in R_k$. Then we have $\estedgeprob_{x_k
    x_{k+1}} \geq \estmergeon_{ij}$, so that $W(x_k, x_{k+1}) \geq
    \estmergeon_{ij} - \epsilon$.  Hence there is a point in $R_k \times
    R_{k+1}$ which is at least $\estmergeon_{ij} - \epsilon$, and by smoothness
    it follows that $W \geq \estmergeon_{ij} - 2\Delta c - \epsilon$ almost
    everywhere on $R_{k} \times R_{k+1}$. That is, $\langle R_1, \ldots, R_t
    \rangle$ is a $(\estmergeon_{ij} - 2\Delta \lipschitz - \epsilon)$-path.
    Therefore, Claim~\ref{claim:path1} implies that the mergeon $M$ is at least
    $\estmergeon_{ij} - 2\Delta\lipschitz - \epsilon$ almost everywhere on $R_i
    \times R_j$. However, by Claim~\ref{claim:boundmergeon}, $M \leq \lambda^*$
    almost everywhere on $R_i \times R_j$. Therefore $\estmergeon_{ij} \leq
    \lambda^* + 2\Delta\lipschitz + \epsilon$.
    
    Combining the above bounds, we find that
    \[
        |\estmergeon_{ij} - \lambda^*| 
        \leq 2 \Delta \lipschitz + \epsilon.
    \]
    The true merge height $M(x_i,x_j)$ is bounded between $\lambda^* - 2\Delta
    \lipschitz$ and $\lambda^*$, and so we have
    \[
        |\estmergeon_{ij} - M(x_i,x_j)| 
        \leq 4 \Delta \lipschitz + \epsilon.
    \]

\end{proof}

\thmconditions*

\begin{proof}
    As stated, $f$ is the clustering method which takes a graph $G$ and returns
    the clustering $\clustering_\estmergeon$ described at the beginning of the
    section -- the single linkage clustering of the estimated edge probability
    matrix $\estedgeprobrv$. Let $M$ be a mergeon of $W$. We will show that,
    for any $\epsilon > 0$.
    \[
        \labelmeasure\left(
            \left\{
                (G,S) : \mergedist_S(M, \inducedheights_{f(G)\relabeledby S}) > \epsilon
            \right\}
        \right) \to 0,
    \]
    where $\hat{M}_{f(G)\relabeledby S}$ is the merge height function induced on
    the clustering $f(G)\relabeledby S$ by the mergeon $M$.

    First, fix any $\epsilon > 0$. \newcommand{\perr}{\tilde\epsilon} Let $\perr
    = \nicefrac{\epsilon}{4}$. Define
    \[
        H_n = \left\{
            (G, S) \in \allgraphs_n \times [0,1]^n :
            \max_{i \neq j} |\estedgeprob - \edgeprob| < \perr
        \right\},
    \]
    where $P$ is the edge probability matrix induced by $S$ and $\estedgeprob$
    is the estimate of $P$ computed from $G$. By the assumption that
    $\estedgeprobrv$ is consistent in $\infty$-norm, we have
    $\labelmeasure(H_n) \to 1$ as $n \to \infty$.

    Now let $\Delta = \nicefrac{\epsilon}{16\lipschitz}$. Let $\blocks$ be the
    block partition on which $W$ is piecewise $\lipschitz$-Lipschitz, and let
    $\refinement$ be an arbitrary $\Delta$-refinement of $\blocks$.
    In order to apply Claim~\ref{claim:mergefixed}, we require that the labeling
    $S$ satisfies the property that every block $R$ in the refinement contains at
    least one point from $S$. The probability that a block $R$ contains no
    points from a random sample $\rv{S}$ is $(1 - |R|)^n \leq (1 - \Delta/2)^n$,
    since $|R| \geq \Delta/2$. Now take a union bound over all blocks in the
    partition, of which there are at most $2/\Delta$. Hence the probability that
    there exists a block in the partition that does not have a sample from $S$
    is $\frac{2}{\Delta}(1 - \Delta/2)^n$. Let
    \[
        F_n = \left\{
            (G, S) \in \allgraphs_n \times [0,1]^n : 
            |R \cap S| > 1 \text{ for all } R \in \refinement
        \right\}.
    \]
    As per above, we have $\labelmeasure(F_n) = \frac{2}{\Delta}(1-
    \Delta/2)^n$, which tends to 0 as $n \to \infty$.

    By \Cref{claim:mergefixed}, for every $(G,S) \in H_n \setminus F_n$, we have
    that, for all $i \neq j \in [n] \times [n]$, writing $S = (x_1, \ldots,
    x_n)$:
    \[
        |\estmergeon_{ij} - M(x_i, x_j)| \leq 4 \Delta \lipschitz + \perr
        = \epsilon/2,
    \]
    where $\estmergeon$ is the merge estimate between nodes $i$ and $j$,
    described at the beginning of the section. The clustering method $f$ uses
    $\estmergeon$ to construct the clustering $\clustering_\estmergeon$
    Therefore,  by Claim~\ref{claim:mergedist}, the merge distortion
    $\mergedist(M, \hat{M}_{f(G)\relabeledby S})$ is bounded above by $\epsilon$
    on the set $H_n \setminus F_n$. Since $\labelmeasure(H_n) \to 1$ and
    $\labelmeasure(F_n) \to 0$ as $n \to \infty$, we have $\labelmeasure(H_n
    \setminus F_n) \to 1$ as $n \to \infty$ and have thus proven the claim.
\end{proof}

\section{Clusters under measure preserving transformations}
\label{apx:mptclusters}

In this section we show that there is a bijection between the clusters of two
weakly isomorphic graphons. In particular, we show that if $\varphi$ is a
measure preserving transformation, then $\cluster$ is a cluster of $W^\varphi$ at
level $\lambda$ if and only if there exists a cluster $\cluster'$ at level
$\lambda$ of $W$ such that $\cluster = \varphi^{-1}(\cluster')$. This is made
non-trivial by the fact that a measure preserving transformation is in general
not injective. For instance, $\varphi(x) = 2x \mod 1$ defines a measure
preserving transformation, but is not an injection. Even worse, it is possible
for a measure preserving transformation to map a set of zero measure to a set of
positive measure -- it is only the measure of the \emph{preimage} which must be
preserved.

\subsection{Claims}

We will mitigate the fact that $\varphi$ may not be injective by working
whenever possible with sets whose image is necessarily stable under
even non-injective measure preserving transformations, in the sense that
$\varphi^{-1}(\varphi(A)) = A$. We will show that such stability is a property
of sets which contain all of their so-called \emph{twin} points, defined as
follows. Two points $x$ and $x'$ are \term{twins} in $W$ if $W(x,y) = W(x',y)$
for almost every $y \in [0,1]$. We say that a set $A$ \term{separates} twins if
there exist twins $x$ and $x'$ such that $x \in A$ and $x' \not \in A$. The
relation of being twins is an equivalence relation on $[0,1]$.

\newcommand{\twinclass}{\psi_{W}}

We will define a probability space on the equivalence classes of the twin
relation as follows (see \cite{Lovasz2012-ww} for the full construction):

\begin{defn}
    Let $W$ be a graphon. The \term{twin measure space} $(\Omega_W, \Sigma_W,
    \mu_W)$ is defined as follows. Let $\Omega_W$ be the set of equivalence
    classes under the twin relation in $W$, and let $\twinclass(x)$ denote the
    equivalence class in $\Omega_W$ containing $x$. If $\Sigma$ is the
    sigma-algebra of Lebesgue measurable subsets of $[0,1]$, create a new
    sigma-algebra by defining 
    \[ 
        \Sigma_W = \{ \twinclass(X) : X \in \Sigma, X \text{ does not separate twins
        in $W$} \}.  
    \]
    Furthermore, we define the measure $\mu_W(A) = \mu(\twinclass^{-1}(A))$ for
    $A \in \Sigma_W$. It can be shown that, with this measure, $\twinclass$ is
    measure preserving. 
\end{defn}

We note in passing that the random graph model defined by any graphon $W$ can
also be represented as a $\Sigma_W$-measurable function $W_T :
\Omega_W \times \Omega_W \to [0,1]$ defined on the probability space $(\Omega_W,
\Sigma_W, \mu_W)$, as is shown by \cite{Lovasz2012-ww}. $W_T$ is called a
``twin-free'' graphon, since no two points in $\Omega_W$ are twins in $W_T$. In
this representation, two twin-free graphons are weakly isomorphic if there
exists a measure preserving \emph{bijection} relating them. Our definitions of
connectedness, clusters, mergeons, etc. can be formulated for twin-free graphons
with minor modifications, and the existence of the measure preserving bijection
between twin-free graphons means that clusters transfer trivially between weakly
isomorphic graphons. In a sense, the twin-free setting is a more natural one for 
the considerations of the current section; We leave a more in-depth
investigation of this direction to future work.

We now prove some useful properties of the map $\twinclass$.

\begin{claim}
    Suppose $A \subset [0,1]$ does not separate twins in $W$. Then
    $\twinclass^{-1}(\twinclass(A))
    = A$ and $A$ is $\Sigma$-measurable.
\end{claim}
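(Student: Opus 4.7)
The plan is to establish the two conclusions in sequence, treating the set identity $\twinclass^{-1}(\twinclass(A)) = A$ as the substantive part and reducing the measurability claim to it.

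For the set identity, the forward inclusion $A \subseteq \twinclass^{-1}(\twinclass(A))$ is a tautology of the definition of preimage. For the reverse inclusion, I would take an arbitrary $y \in \twinclass^{-1}(\twinclass(A))$, so that $\twinclass(y) \in \twinclass(A)$, meaning $\twinclass(y) = \twinclass(x)$ for some $x \in A$. By the definition of $\twinclass$, this is exactly the statement that $y$ and $x$ are twins in $W$. The hypothesis that $A$ does not separate twins, combined with $x \in A$, then forces $y \in A$, as required.

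For the measurability claim, I would combine the identity just established with the fact that $\twinclass : ([0,1], \Sigma) \to (\Omega_W, \Sigma_W)$ is measurable by the construction of the twin measure space (indeed, it is measure-preserving). Writing $A = \twinclass^{-1}(\twinclass(A))$, it suffices to verify $\twinclass(A) \in \Sigma_W$; by the very definition of $\Sigma_W$, this holds whenever $A$ has a $\Sigma$-measurable, non-twin-separating representative, and under the standing convention that the sets of interest arise within the Lebesgue universe, $A$ itself serves as this representative. The measurability then transfers back to $A$ as the preimage of a $\Sigma_W$-measurable set under a measurable map.

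The main delicate point, and the one I would flag in the writeup, is the second conclusion: in full generality, a union of complete twin equivalence classes need not be Lebesgue measurable — whenever most twin classes are singletons (as in a generic graphon), Vitali-style constructions produce non-measurable subsets fixed by the twin relation. The claim therefore relies on the implicit assumption that $A$ is drawn from the Lebesgue sigma-algebra; once this is acknowledged, everything reduces to the elementary preimage calculation above, and the rest of the section can treat non-twin-separating sets as being interchangeable with their images in $\Sigma_W$.
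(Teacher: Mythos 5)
Your proof of the identity $\twinclass^{-1}(\twinclass(A)) = A$ coincides exactly with the paper's, and the reservation you raise about the measurability conclusion is correct --- in fact it identifies a defect in the paper's own argument. The paper asserts that $\twinclass(A)$ belongs to $\Sigma_W$ ``since $A$ does not separate twins,'' but the definition $\Sigma_W = \{\twinclass(X) : X \in \Sigma,\ X \text{ does not separate twins in } W\}$ also requires $X \in \Sigma$; invoking it with $X = A$ therefore presupposes the very conclusion being proven. Your counterexample reasoning is sound: if $W$ is twin-free, the non-twin-separating hypothesis is vacuous, and the claim would assert that every subset of $[0,1]$ is Lebesgue measurable, which is false.

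The honest reading, as you say, is that $A \in \Sigma$ is a standing hypothesis, which makes the measurability conclusion immediate and leaves the set identity as the real content of the claim. Note that the following claim in the paper --- that $\twinclass^{-1}(\twinclass(A))$ is $\Sigma$-measurable for arbitrary $A \subset [0,1]$ --- inherits the same gap, since it is derived by applying this one to $\twinclass^{-1}(\twinclass(A))$; a rigorous patch would need to check that wherever those two lemmas are invoked later in the appendix, the sets to which they are applied are already known to be measurable by construction, or else supply a separate argument (e.g.\ via the structure of the twin relation) that the twin-closure of a measurable set is measurable.
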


\begin{proof}
    It is clear that $A \subset \twinclass^{-1}(\twinclass(A))$. Now let $x \in
    \twinclass^{-1}(\twinclass(A))$. Then there exists a $y \in A$ such that
    $\twinclass(x) = \twinclass(y)$. But then $x$ and $y$ are twins in
    $W^\varphi$. Since $A$ does not separate twins, $x \in A$, proving that $A =
    \twinclass^{-1}(\twinclass(A))$.

    Now we prove that $A$ is $\Sigma$-measurable. $\twinclass$ is a measurable
    function, and so the inverse image of any $\Sigma_W$-measurable set is
    $\Sigma$-measurable.  We have that $\twinclass(A)$ is $\Sigma_W$-measurable,
    since $A$ does not separate twins. Hence $\twinclass^{-1}(\twinclass(A)) =
    A$ is $\Sigma$-measurable.
\end{proof}

\begin{claim}
    Let $W$ be a graphon and let $A \subset [0,1]$. Then
    $\twinclass^{-1}(\twinclass(A))$ is $\Sigma$-measurable.
\end{claim}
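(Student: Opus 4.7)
The plan is to express $\twinclass^{-1}(\twinclass(A))$ as the projection of a Borel-measurable subset of $[0,1]^2$, and then invoke the classical fact that projections of Borel (more generally, analytic) sets in a Polish space are Lebesgue measurable. The set in question is precisely the saturation of $A$ under the twin equivalence relation $\sim$, so the key is to verify that this relation has a measurable graph and then exploit the resulting projection characterization.

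First I would show that $R := \{(x,y) \in [0,1]^2 : x \sim y\}$ is Lebesgue measurable. Define $\delta(x,y) = \int_0^1 |W(x,z) - W(y,z)|\,dz$. Since $W$ is Lebesgue measurable on $[0,1]^2$, the map $(x,y,z) \mapsto |W(x,z) - W(y,z)|$ is measurable on $[0,1]^3$, so by Tonelli $\delta$ is measurable on $[0,1]^2$, and hence $R = \delta^{-1}(\{0\})$ is measurable. By replacing $W$ by a Borel representative (which differs from $W$ on a null set and defines the same random graph model and the same twin relation up to a null set), we may further assume $R$ is Borel.

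Second I would observe that by the definition of the twin-class map, $x \in \twinclass^{-1}(\twinclass(A))$ iff there exists $y \in A$ with $x \sim y$, and therefore
\[
    \twinclass^{-1}(\twinclass(A)) = \pi_1\bigl(R \cap ([0,1] \times A)\bigr),
\]
where $\pi_1$ denotes projection onto the first coordinate. Assuming $A$ is Lebesgue measurable (the relevant case in this paper), decompose $A = A_0 \cup N$ with $A_0$ Borel and $N$ contained in a Borel null set $N'$. Then $R \cap ([0,1] \times A_0)$ is Borel in $[0,1]^2$, so its projection is an analytic set and hence Lebesgue measurable by the classical Lusin theorem.

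The main technical obstacle will be the residual contribution $\pi_1(R \cap ([0,1] \times N))$: since the saturation operation can inflate a null set by sweeping in an entire positive-measure twin class, this residual need not itself be null. However, it is contained in the analytic set $\pi_1(R \cap ([0,1] \times N'))$, and because distinct twin classes are disjoint only countably many of them can have positive measure. Thus the full saturation of $A$ is sandwiched between an analytic inner core and an analytic outer hull whose symmetric difference lies in a countable union of null twin classes and is therefore null. By completeness of the Lebesgue $\sigma$-algebra, $\twinclass^{-1}(\twinclass(A))$ is Lebesgue measurable, as desired.
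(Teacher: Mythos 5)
Your approach differs genuinely from the paper's. The paper's entire argument is to observe that $\twinclass^{-1}(\twinclass(A))$ does not separate twins and then invoke the preceding claim that a non-separating set is automatically $\Sigma$-measurable; that claim, in turn, rests on the assertion that $\twinclass(A) \in \Sigma_W$, which, given the definition of $\Sigma_W$ (images under $\twinclass$ of \emph{measurable} non-separating sets), already presupposes the measurability being proved. Your route through a Borel representative of $W$, the measurable graph $R$ of the twin relation, analyticity of projections, and Lusin's theorem is self-contained, and for $A$ Borel it gives a clean and complete argument that is arguably more rigorous than what the paper supplies.

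The problem is the final sandwich step. You assert that the symmetric difference between the analytic inner core (the saturation of $A_0$) and the analytic outer hull (the union of the saturations of $A_0$ and $N'$) ``lies in a countable union of null twin classes and is therefore null.'' Neither part of this is justified: a Borel null set $N'$ can meet positive-measure twin classes, and it can also meet uncountably many null ones, whose union need be neither null nor measurable. Indeed, the statement itself fails for general Lebesgue-measurable $A$. Let $C\subset[0,1/2]$ be a null Cantor set, let $\psi:C\to(1/2,1]$ be a Borel isomorphism, define $g$ to be the identity on $[0,1/2]$ and $\psi^{-1}$ on $(1/2,1]$, and set $W(x,y)=2|g(x)-g(y)|$. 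One checks that $W$ is a Borel graphon whose twin classes are exactly the fibers of $g$, i.e.\ the pairs $\{c,\psi(c)\}$ for $c\in C$ together with singletons. Take any non-measurable $V\subset(1/2,1]$ and set $A=\psi^{-1}(V)\subset C$; then $A$ is null, hence $\Sigma$-measurable, but $\twinclass^{-1}(\twinclass(A))=A\cup V$ is not measurable. So no sandwich of analytic sets can close this gap: measurability of the saturation requires more of $A$ than Lebesgue measurability, e.g.\ that $A$ be Borel, or (as in every place the paper actually applies this claim) that $A$ be built from $W$ via Fubini-type operations that control its behavior across twin classes.
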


\begin{proof}
    It is clear that $\twinclass^{-1}(\twinclass(A))$ does not separate twins in
    $W$. Hence it is $\Sigma$-measurable by the previous claim.
\end{proof}

\begin{claim}
    \label{claim:non_separating_mpt}
    Let $W$ be a graphon and $\varphi$ a measure preserving transformation.
    Suppose $A$ does not separate twins in $W^\varphi$. Then
    \begin{enumerate}
        \item $\varphi^{-1}(\varphi(A)) = A$, and
        \item $\mu(\varphi(A)) = \mu(A)$.
    \end{enumerate}
\end{claim}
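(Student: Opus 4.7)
The plan is to derive both parts from a single observation: whenever $\varphi(x) = \varphi(x')$, the points $x$ and $x'$ are automatically twins in $W^\varphi$, since for every $z \in [0,1]$,
\[
    W^\varphi(x,z) = W(\varphi(x), \varphi(z)) = W(\varphi(x'), \varphi(z)) = W^\varphi(x',z).
\]
In particular, every fiber $\varphi^{-1}(\{y\})$ is entirely contained in a single twin class of $W^\varphi$, so any set that does not separate twins in $W^\varphi$ is necessarily a union of whole $\varphi$-fibers.

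For part (1), the inclusion $A \subseteq \varphi^{-1}(\varphi(A))$ is immediate from elementary set theory. For the reverse, I would take any $x \in \varphi^{-1}(\varphi(A))$ and pick $x' \in A$ with $\varphi(x') = \varphi(x)$. The observation above shows that $x$ and $x'$ are twins in $W^\varphi$; since $A$ does not separate twins and $x' \in A$, we must have $x \in A$. This gives $\varphi^{-1}(\varphi(A)) \subseteq A$, and hence equality.

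For part (2), once (1) is in hand, the measure-preserving property of $\varphi$ would nominally yield
\[
    \mu(A) = \mu(\varphi^{-1}(\varphi(A))) = \mu(\varphi(A)).
\]
The one subtlety is that, a priori, the image $\varphi(A)$ need not be Lebesgue-measurable, so the middle equality requires justification. I would handle this by applying the observation to show that $\varphi(A)$ itself does not separate twins in $W$ (if $\varphi(x) \in \varphi(A)$ and $y$ is a twin of $\varphi(x)$ in $W$, one traces back through $\varphi$ to relate $y$ to a point of $A$), and then invoking the previous claim of this subsection to conclude that $\varphi(A)$, being saturated under the twin relation in $W$, is in fact $\Sigma$-measurable. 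With measurability secured, the chain of equalities above completes the argument.

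The main obstacle is the measurability point in part (2); the set-theoretic content of (1) is essentially automatic from the twin observation, but (2) requires carefully combining the twin structure in $W^\varphi$ with that in $W$, so that $\varphi(A)$ can be shown to inherit non-separation of twins from $A$ and thereby qualify for the measurability claim established earlier in the section.
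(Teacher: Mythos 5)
Part (1) matches the paper's argument exactly: the key step is precisely your observation that $\varphi(x)=\varphi(x')$ makes $x$ and $x'$ twins in $W^\varphi$, after which the non-separation hypothesis gives the reverse containment $\varphi^{-1}(\varphi(A)) \subseteq A$.

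For part (2), the paper simply writes $\mu(\varphi^{-1}(\varphi(A))) = \mu(\varphi(A))$ by measure-preservation and substitutes $\varphi^{-1}(\varphi(A)) = A$, without remarking on the Lebesgue-measurability of $\varphi(A)$. You are right that this is a tacit assumption (a measurable map need not send measurable sets to measurable sets in general), so flagging it is a fine instinct. The repair you propose, however, does not go through. To show that $\varphi(A)$ does not separate twins in $W$, you would need: whenever $\varphi(a) \in \varphi(A)$ and $y$ is a twin of $\varphi(a)$ in $W$, then $y \in \varphi(A)$. You propose to ``trace $y$ back through $\varphi$,'' but a measure-preserving transformation need not be surjective, so $y$ may fail to be in the range of $\varphi$ at all; in that case $y \notin \varphi(A)$, and $\varphi(A)$ \emph{does} separate twins in $W$. (The trace-back does succeed when $y = \varphi(b)$ for some $b$: since the pushforward of Lebesgue measure under $\varphi$ is again Lebesgue measure, such a $b$ is a twin of $a$ in $W^\varphi$ and hence lies in $A$; but this does not cover the out-of-range case.) Because of this, the earlier measurability lemma you intend to invoke is not applicable to $\varphi(A)$, and your proposed route to measurability in part (2) is blocked.
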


\begin{proof}
    For the first claim, we know that $A \subseteq \varphi^{-1}(\varphi(A))$.
    Now we show the other inclusion. Let $x \in \varphi^{-1}(\varphi(A))$. Then
    there exists an $x'$ in $\varphi(A)$ such that $\varphi(x) = \varphi(x')$.
    But then $x$ and $x'$ are twins, such that $x$ and $x'$ are both in $A$.
    Hence $x \in A$, proving the claim.  The second claim follows immediately
    since $\varphi$ is measure preserving.  That is,
    $\mu(\varphi^{-1}(\varphi(A))) = \mu(\varphi(A))$, but since
    $\varphi^{-1}(\varphi(A)) = A$, $\mu(\varphi(A)) = \mu(A)$.
\end{proof}

Therefore, sets which do not separate twins are stable under measure preserving
transformations. An arbitrary set $C$ may separate twins, however we can always
find a set containing all of $C$ except for a null set, and which does not
separate twins. We call the smallest such set the \term{family} of $C$.

\begin{defn}
    Let $W$ be a graphon and let $(\Omega_W, \Sigma_W, \mu_W)$ be the
    corresponding twin measure space for $W$. For any $\Sigma$-measurable set
    $C$, construct the collection
    \[
        \mathcal F_C = \{ A \in \Sigma_W : \mu(C \setminus \twinclass^{-1}(A)) = 0 \}.
    \]
    We define the \term{family} of $C$, written $\family_W C$, as
    \[
        \family_W C = \{ \twinclass^{-1}(X) : X \in \essmin \mathcal F_C \}.
    \]
\end{defn}
Recall that the $\essmin \A$ of a collection of sets $\A$ is defined to be the
set
\[
    \essmin \A = \{ M \in \A : \mu(M \setminus A) = 0 \quad \forall A \in \A \}.
\]
See Claim~\ref{claim:essmin} in \Cref{apx:proofs} for the properties of the
essential minima of a class of sets.

It is clear that $\family_W C$ cannot be empty, as $\twinclass^{-1}(\Omega_W)$
must contain almost all of $C$. To be rigorous, we must argue that $\mathcal
F_C$ is closed under countable intersections so that it has a well-defined set
of essential minima. To see this, let $\mathcal F$ be any countable subset of
$\mathcal F_C$. Define $D = \bigcap \mathcal F$. Then $D \in \Sigma_W$ since it
is a sigma-algebra, and we have
\begin{align*}
    \mu\left( C \setminus \twinclass^{-1}(D) \right) 
    & = \mu\left( C \setminus \twinclass^{-1}\left(\,\bigcap \mathcal F\right) \right),\\
    & = \mu\left( C \setminus \bigcap_{F \in \mathcal F} \twinclass^{-1}(F) \right),\\
    & = \mu\left( \, \bigcup_{F \in \mathcal F} C \setminus F \right),\\
    & = 0,
\end{align*}
where the last step follows because each $F$ has the property that $C \setminus
F$ is a null set, and the union of countably many null sets is null. Hence $D
\in \mathcal F_C$.

Also note that for any $A,A' \in \family_W C$, it must be that $\mu(A \symdiff A')
= 0$. This is because $A = \twinclass^{-1}(B)$ and $A' = \twinclass^{-1}(B')$
for some $B,B' \in \essmin \mathcal F_C$. As seen above, $\mu_W(B \symdiff B') =
0$.  Since $\twinclass^{-1}(B \setminus B') = \twinclass^{-1}(B) \setminus
\twinclass^{-1}(B')$, it must be that $\twinclass^{-1}(B) \symdiff
\twinclass^{-1}(B') = A \symdiff A'$ is a null set. Furthermore, it is clear
that for any $A \in \family_W C$, $A$ does not separate twins in $W$.

A key result is that the family of any representative of a cluster differs from
the representative by a null set, and is therefore itself a representative. That
is, we can always find a representative of a cluster which does not separate
twins.

\begin{claim}
    \label{claim:family_of_cluster}
    Let $W$ be a graphon and suppose $\cluster$ is a cluster at level $\lambda$
    in $W$. Let $\bar C \in \family \cluster$. Then $\mu(\cluster \symdiff \bar
    C) = 0$.
\end{claim}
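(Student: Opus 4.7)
Let $C$ be a representative of $\cluster$. By the definition of $\family_W C$, $\mu(C \setminus \bar C) = 0$, so the task reduces to showing $\mu(\bar C \setminus C) = 0$. The plan is to argue that $\bar C$ lies in the same equivalence class of $\upperlevel / \clusteredwith_\lambda$ as $C$: the essential maximality of $\cluster$ then forces the desired equality. Since $\bar C \supset C$ up to a null set and $\mu(\bar C) > 0$, this reduces to showing that $\bar C \in \upperlevel$, i.e., that $\bar C$ is connected at every level $\lambda' < \lambda$.

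Suppose for contradiction that at some level $\lambda' < \lambda$ there is a measurable $S \subset \bar C$ with $0 < \mu(S) < \mu(\bar C)$ and $W < \lambda'$ almost everywhere on $S \times (\bar C \setminus S)$. First replace $S$ by its twin-saturation $S^* = \twinclass^{-1}(\twinclass(S))$; since $\bar C$ is twin-closed (being the preimage of a set in $\Sigma_W$), $S^* \subset \bar C$. Using the twin property $W(x,\cdot) = W(x',\cdot)$ almost everywhere for twins $x, x'$ together with Fubini, one verifies that the disconnection inequality persists on $S^* \times (\bar C \setminus S^*)$.

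If $\mu(\bar C \setminus S^*) > 0$, I examine how the twin-closed set $S^*$ cuts $C$. If both $\mu(S^* \cap C)$ and $\mu(C \setminus S^*)$ are positive, then $S^* \cap C$ disconnects $C$ at level $\lambda'$, contradicting $C \in \upperlevel$. If $\mu(S^* \cap C) = 0$, then the twin-closed set $\bar C \setminus S^*$ essentially contains $C$, so its $\twinclass$-image lies in $\mathcal F_C$ but has strictly smaller $\mu_W$-measure than $\twinclass(\bar C)$, contradicting the essential minimality of $\twinclass(\bar C)$; the symmetric case $\mu(C \setminus S^*) = 0$ is handled analogously with $S^*$ in place of $\bar C \setminus S^*$. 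If instead $\mu(\bar C \setminus S^*) = 0$, I apply the same argument to the complement $\tilde S = \bar C \setminus S$, which by symmetry of $W$ also witnesses a disconnection; either its twin-saturation fails to fill $\bar C$ (reducing to the previous situation) or both twin-saturations coincide with $\bar C$ modulo null sets.

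This last situation means every twin class in $\bar C$ is genuinely split between $S$ and $\bar C \setminus S$. I will then exploit the fact that $W$ essentially descends to a well-defined function $W_T$ on the quotient $\Omega_W \times \Omega_W$: within each atomic twin class $T$ the function $W$ is essentially constant, and the simultaneous splitting of $T$ by $S$ and $\tilde S$ forces that constant to be less than $\lambda'$; analogously, the descended value of $W_T$ between any two twin classes $\omega_1, \omega_2 \in \twinclass(\bar C)$ is below $\lambda'$, since both classes contribute positive-measure subsets on each side of the disconnection. Pulling back via the measure-preserving map $\twinclass \times \twinclass$ yields $W < \lambda'$ almost everywhere on $\bar C \times \bar C$, and hence on $C \times C$. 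But then any nontrivial measurable split of $C$ disconnects it at level $\lambda'$, contradicting $C \in \upperlevel$. The principal obstacle is precisely this last case: an arbitrary disconnecting set's twin-saturation may fill all of $\bar C$, so the contradiction cannot be obtained by directly restricting to $C$, and must instead come from the rigidity of $W$ across twin classes and its essential descent to the twin-free quotient.
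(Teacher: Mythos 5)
Your high-level plan matches the paper's: reduce to showing $\mu(\bar C \setminus C) = 0$, argue $\bar C$ is connected at every level below $\lambda$, and combine the essential minimality defining the family with the essential maximality of the cluster representative. Your case structure is organized differently, though. You twin-saturate an arbitrary disconnecting set $S$ and branch on $\mu(\bar C \setminus S^*)$, whereas the paper deliberately works with a set $A$ chosen \emph{inside} $\bar C \setminus C$ (so $A$ is disjoint from $C$ and whether $\bar A = \twinclass^{-1}(\twinclass(\hat A))$ meets $C$ becomes a clean dichotomy), and it branches on whether $A$ disconnects $\bar C$ at all. Your first three subcases are sound, modulo one caveat: the disconnection only demonstrably persists after saturating the Fubini-full set $\hat S = \{x \in S : W(x,\cdot) < \lambda' \text{ a.e.\ on } \bar C \setminus S\}$, and since $\twinclass^{-1}\circ\twinclass$ can blow up the null set $S\setminus\hat S$ to positive measure, $\hat S^*$ may be strictly smaller than $S^*$. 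The case split should therefore be on $\mu(\bar C \setminus \hat S^*)$.

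The genuine gap is in your terminal case, where both saturations essentially fill $\bar C$. From $\mu(\bar C \setminus S^*) = 0$ and $\mu(\bar C \setminus \tilde S^*) = 0$ you conclude that ``every twin class in $\bar C$ is genuinely split between $S$ and $\bar C\setminus S$,'' but saturation filling $\bar C$ only says that almost every point of $\bar C$ has \emph{some} twin in $S$ (resp.\ in $\tilde S$). A positive-measure twin class $T$ can have $\mu(T\cap S)=0$ while $T\cap S\neq\emptyset$ --- this is precisely the ``null sets inflate under $\twinclass^{-1}\circ\twinclass$'' pathology the paper flags --- and then there is no positive-measure rectangle inside $(T\cap S)\times(T\cap\tilde S)$ on which to pin the essentially constant value of $W$ on $T\times T$. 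Your appeal to the quotient graphon $W_T$ therefore does not deliver $W<\lambda'$ a.e.\ on $\bar C\times\bar C$ (and in the non-atomic case the class-by-class reasoning does not even parse). The conclusion you want in this case is reachable, but by a more direct route: from $\hat S^*$ filling you get $W<\lambda'$ a.e.\ on $\bar C\times\tilde S$; from $\hat{\tilde S}^*$ filling, together with the symmetry of $W$, you get $W<\lambda'$ a.e.\ on $\bar C\times S$; the union covers $\bar C\times\bar C\supset C\times C$, contradicting connectedness of $C$ below $\lambda$. The paper's proof never needs this argument, since its choice $A\subset\bar C\setminus C$ and its case (2) (showing $\bar C$ connected directly when $A$ fails to disconnect) route around the terminal situation entirely.
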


\begin{proof}
    Take an arbitrary representative $C$ of $\cluster$.  Let $(\Omega_W,
    \Sigma_W, \mu_W)$ be the twin measure space as defined above.

    We know that $C \setminus \bar C$ is a null set, so we need only show that
    $\bar C \setminus C$ is null.  Suppose otherwise. That is, let $R = \bar C
    \setminus C$ and suppose $\mu(R) > 0$. Let $A$ be any subset of $R$ with
    positive measure.  There are two cases: (1) For some $\lambda' < \lambda$,
    $W < \lambda'$ almost everywhere on $A \times (\bar C \setminus A)$, or (2)
    for every $\lambda' < \lambda$, $W \geq \lambda'$ on some subset of $A
    \times (\bar C \setminus A)$ of positive measure.

    Suppose case (1) holds for some $\lambda'$. Then for almost all $a \in A$ it
    is true that $W(a,y) < \lambda'$ for almost every $y \in \bar C \setminus
    A$. That is, let 
    \[
        \hat A = \{ a \in A : W(a,y) < \lambda' \text{ for almost every } y
        \in \bar C \setminus A \}.
    \]
    Then $\mu(\hat A) = \mu(A)$ and $W < \lambda$ almost everywhere on $\hat A
    \times (\bar C \setminus \hat A)$.  Define $\bar A =
    \twinclass^{-1}(\twinclass(\hat A))$. There are two subcases: Either (1a)
    $\bar A \cap C$ is null, or (1b) it is of positive measure. 
    
    Consider the first subcase. Define $D = \twinclass(\bar C) \setminus
    \twinclass(\bar A)$. We will show that $\twinclass^{-1}(D)$ contains $C$
    except for a set of zero measure, and so $\twinclass^{-1}(D) \in \family_W C$.
    But as we will see, $\mu(\twinclass^{-1}(D)) < \mu(\bar C)$, which cannot
    be. We have
    \begin{align*}
        \twinclass^{-1}(D) &= \twinclass^{-1}(\twinclass(\bar C) \setminus
                    \twinclass(\bar A))\\
                     &= \twinclass^{-1}(\twinclass(\bar C)) 
                        \setminus \twinclass^{-1}(\twinclass(\bar A))\\
                     &= \bar C \setminus \bar A
    \end{align*}
    where the last step follows since $\bar C$ and $\bar A$ do not separate
    twins. Therefore,
    \begin{align*}
        C \cap \twinclass^{-1}(D) 
        &= C \cap (\bar C \setminus \bar A)\\
        &= (C \cap \bar C) \cup (C \cap \bar A)
    \end{align*}
    But $C \cap \bar A$ is a null set, so $\mu(C \cap \twinclass^{-1}(D)) =
    \mu(C \cap \bar C) = \mu(C)$. This implies that $\mu(C \setminus
    \twinclass^{-1}(D)) = 0$, and hence $\twinclass^{-1}(D) \in \family_W C$.  But
    $\mu_W(D) = \mu_W(\twinclass(\bar C) \setminus \twinclass(\bar A))$, and
    $\twinclass(\bar A) \subset \twinclass(\bar C)$ with $\mu_W(\twinclass(\bar
    A)) = \mu(\bar A) > 0$. Therefore, $\mu_W(D) < \mu_W(\twinclass(\bar C))$,
    and so $\mu(\twinclass^{-1}(D)) < \mu(C)$.  This cannot be, since all
    elements of $\family_W C$ differ only by null sets. Hence it cannot be that
    $\bar A \cap C$ is null.

    Suppose case (1b) holds, then. That is, suppose $\bar A \cap C$ is not null.
    Then for every $x \in \bar A \cap C$ it is true that $W(x,y) < \lambda'$ for
    almost all $y \in \bar C \setminus A$. In particular, since $C \setminus (A
    \cap C) \subset \bar C \setminus A$, we have that $W < \lambda'$ almost
    everywhere on $(\bar A \cap C) \times (C \setminus \bar A)$. This means that
    $C$ is disconnected at level $\lambda'$, which violates the assumption that
    $C$ is a cluster at $\lambda > \lambda'$.

    Both subcases lead to contradictions, and so (1) cannot hold. Therefore, it
    must be that case (2) holds: For every $\lambda' < \lambda$, $W \geq
    \lambda'$ on some subset of $A \times (\bar C \setminus A)$. Furthermore,
    this must hold for arbitrary $A \subset R$ with positive measure. This
    implies that $\bar C$ is connected at every level $\lambda' < \lambda$, and
    hence part of a cluster at level $\lambda$. To see this, let $S,T \subset
    \bar C$ such that $S$ has positive measure and $S \cup T = \bar C$. Without
    loss of generality, assume $A \cap S$ is not null -- if it is, swap $S$ and
    $T$. Then $T \cap (\bar C \setminus A)$ is not null.  Therefore $W \geq
    \lambda$ on some subset of $S \times T$ with positive measure -- namely, $(S
    \cap A) \times (T \cap (\bar C \setminus A))$. Since this holds for
    arbitrary $S$ and $T$, $\bar C$ is connected.

    Therefore, both cases lead to contradictions under the assumption that
    $\mu(R) > 0$. Hence $\mu(R) = 0$, and $\mu(C \symdiff \bar C) = 0$.
\end{proof}

The previous claim shows that any cluster has a representative $C$ which does
not separate twins, and so $\varphi^{-1}(\varphi(C)) = C$. The next claim shows
that there exists a (possibly different) cluster representative $C'$ such that
$\varphi(\varphi^{-1}(C')) = C'$.

\begin{claim}
    \label{claim:good_representative}
    Let $W$ be a graphon and $\varphi$ a measure preserving transformation.
    Suppose $\cluster$ is a cluster of $W$.  There exists a representative $C$
    of $\cluster$ such that $\varphi(\varphi^{-1}(C)) = C$.
\end{claim}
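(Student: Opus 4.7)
The plan is to construct $C$ as the intersection of an arbitrary representative of $\cluster$ with the image $\varphi([0,1])$. Observe that for any set $C$, one has $\varphi(\varphi^{-1}(C)) = C \cap \varphi([0,1])$, so the target identity $\varphi(\varphi^{-1}(C)) = C$ is equivalent to the containment $C \subseteq \varphi([0,1])$. Thus it suffices to find a representative of $\cluster$ lying entirely in $\varphi([0,1])$.

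First I would reduce to the setting in which $\varphi$ is Borel measurable, by observing that every Lebesgue-measurable map $[0,1] \to [0,1]$ agrees almost everywhere with a Borel map, and that modifying $\varphi$ on a null set does not disturb the measure-preserving property. Under this reduction, $\varphi([0,1])$ is an analytic subset of $[0,1]$ and therefore Lebesgue measurable. Since $\varphi^{-1}([0,1] \setminus \varphi([0,1])) = \emptyset$, measure preservation forces $\mu([0,1] \setminus \varphi([0,1])) = 0$. Then, starting from any representative $C_0$ of $\cluster$ (for instance the family representative supplied by Claim~\ref{claim:family_of_cluster}), I would set $C = C_0 \cap \varphi([0,1])$. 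This set is Lebesgue measurable, differs from $C_0$ only on the null set $[0,1] \setminus \varphi([0,1])$, and hence represents $\cluster$; by construction $C \subseteq \varphi([0,1])$, so $\varphi(\varphi^{-1}(C)) = C$.

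The main obstacle is the step establishing that $\varphi([0,1])$ is Lebesgue measurable with null complement. For a merely Lebesgue-measurable $\varphi$, the image need not even be measurable; the Borel-measurable reduction, together with the classical fact that the Borel image of a Borel set is analytic and hence universally measurable, is the key tool to circumvent this. An alternative route passes through the twin measure space $(\Omega_W, \Sigma_W, \mu_W)$ introduced earlier in the appendix: there $\varphi$ descends to a map whose image properties are easier to control, and the family representative $\bar C$ from Claim~\ref{claim:family_of_cluster} (which does not separate twins in $W$) interfaces naturally with this quotient, since $\varphi^{-1}(\varphi(\bar C)) = \bar C$ then holds set-theoretically by Claim~\ref{claim:non_separating_mpt}.
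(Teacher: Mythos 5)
Your opening observation — that $\varphi(\varphi^{-1}(C)) = C \cap \varphi([0,1])$, so it suffices to find a representative of $\cluster$ lying inside the image of $\varphi$ — is precisely what the paper's construction does as well: the paper defines $C = \varphi(\varphi^{-1}(\bar C))$, which is exactly $\bar C \cap \varphi([0,1])$. So the underlying idea is the same; what differs is how you justify that this intersection is measurable and differs from $\bar C$ by a null set. The paper does this by arguing that $\varphi^{-1}(\bar C)$ does not separate twins in $W^\varphi$ and invoking Claim~\ref{claim:non_separating_mpt}, whereas you try to go through measurability of $\varphi([0,1])$ directly.

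There is a genuine gap in your main route. You propose replacing $\varphi$ by a Borel map $\tilde\varphi$ agreeing with it almost everywhere, producing $C = C_0 \cap \tilde\varphi([0,1])$, and concluding. But the statement to be proved is the \emph{pointwise} set identity $\varphi(\varphi^{-1}(C)) = C$ for the original $\varphi$. Changing $\varphi$ on a null set $N$ can change $\varphi([0,1])$ arbitrarily — a null set can map onto a set of positive (even full) measure — so $C \subseteq \tilde\varphi([0,1])$ does not yield $C \subseteq \varphi([0,1])$. The identity $\tilde\varphi(\tilde\varphi^{-1}(C)) = C$ does not transfer to $\varphi$. This is fixable: enlarge $N$ to a Borel null set $N'$, take $E = \tilde\varphi([0,1]\setminus N')$ (analytic, full measure, and contained in $\varphi([0,1])$ since $\tilde\varphi$ and $\varphi$ agree off $N'$), and set $C = C_0 \cap E$. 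But as written the step does not go through, and you do not flag it.

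Your sketched alternative through the twin space is also off. You assert that $\varphi^{-1}(\varphi(\bar C)) = \bar C$ follows from Claim~\ref{claim:non_separating_mpt} because $\bar C$ does not separate twins in $W$. But that claim requires the set not to separate twins in $W^\varphi$, and there is no reason $\bar C$ has that property. The paper's proof instead applies the claim to $\varphi^{-1}(\bar C)$ — it is $\varphi^{-1}(\bar C)$ that does not separate twins in $W^\varphi$ — obtaining $\varphi^{-1}(\varphi(\varphi^{-1}(\bar C))) = \varphi^{-1}(\bar C)$ and $\mu(\varphi(\varphi^{-1}(\bar C))) = \mu(\bar C)$, and then sets $C = \varphi(\varphi^{-1}(\bar C))$. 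Moreover, even granting $\varphi^{-1}(\varphi(\bar C)) = \bar C$, that is the saturation identity, not the target identity $\varphi(\varphi^{-1}(C)) = C$, and you do not supply the extra step that would convert one into a representative satisfying the other. So the alternative route as stated would not close the argument either.
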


\begin{proof}
    First, let $\bar C = \family_W \cluster$, such that $\bar C$ is a representative of
    $\cluster$ which does not separate twins. Then $\varphi^{-1}(\bar C)$ does
    not separate twins in $W^\varphi$, and so by
    \Cref{claim:non_separating_mpt}, $\mu(\varphi(\varphi^{-1}(\bar C))) =
    \mu(\bar C)$. But $\bar C \supset \varphi(\varphi^{-1}(\bar C))$, such that $\bar
    C \symdiff \varphi(\varphi^{-1}(\bar C)) = 0$. Hence
    $\varphi(\varphi^{-1}(\bar C))$ is a representative of the cluster
    $\cluster$. Furthermore, $\varphi^{-1}(\varphi(\varphi^{-1}(\bar C))) =
    \varphi^{-1}(\bar C)$, such that, defining $C = \varphi(\varphi^{-1}(\bar
    C))$, we have $\varphi(\varphi^{-1}(C)) = C$, as claimed.
\end{proof}

Recall that a set $C$ is disconnected at level $\lambda$ in a graphon $W$ if
there exists a subset $A \subset C$ with $0 < \mu(A) < \mu(C)$ such that $W <
\lambda$ almost everywhere on $A \times (C \setminus A)$. It is of course
possible, however, that $A$ might separate twins -- even if $C$ does not.
Therefore we cannot use the above claims to manipulate $A$. The next claim says
that if a set $C$ which does not separate twins is disconnected at some level,
it is always disconnected by a set $\bar A$ which also does not separate twins.

\newcommand{\nonsep}{\bar A}
\begin{claim}
    \label{claim:completion_disconnected}
    Let $W$ be a graphon and suppose that $C$ is a set of positive measure that
    does not separate twins. If $C$ is disconnected at level $\lambda$ in $W$,
    then either $W < \lambda$ almost everywhere on $C \times C$, or there exists
    a set $\nonsep \subset C$ such that $\nonsep$ does not separate twins,
    $0 < \mu(\nonsep) < \mu(C)$, and $W < \lambda$ almost everywhere on
    $\nonsep \times (C \setminus \nonsep)$.
\end{claim}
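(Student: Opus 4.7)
The plan is to start from the disconnecting set $A \subset C$ (so $0 < \mu(A) < \mu(C)$ and $W < \lambda$ a.e.\ on $A \times B$, where $B := C \setminus A$), then extract from $A$ a twin-invariant subset on which we have strict control of the rows of $W$. The naive candidate $\twinclass^{-1}(\twinclass(A))$ can fail if $A$ separates twins pathologically, so the key step is to first discard the ``bad rows.'' Define $\text{Bad}_A := \{x \in [0,1] : \mu(\{y \in B : W(x,y) \geq \lambda\}) > 0\}$. Since $W$ is determined (off null sets) by twin classes, $\text{Bad}_A$ is twin-invariant, and Fubini applied to the hypothesis gives $\mu(A \cap \text{Bad}_A) = 0$. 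Set $A_g := A \setminus \text{Bad}_A$ and $\nonsep := \twinclass^{-1}(\twinclass(A_g))$. By construction $\nonsep$ does not separate twins, and by twin-invariance of $\text{Bad}_A^c$ we have $\nonsep \subseteq \text{Bad}_A^c$. Since $A_g \subseteq C$ and $C$ does not separate twins, $\nonsep \subseteq C$; since $\mu(A_g) = \mu(A) > 0$, also $\mu(\nonsep) > 0$.

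The key consequence is that every $x \in \nonsep$ satisfies $W(x,y) < \lambda$ for a.e.\ $y \in B$. Moreover $C \setminus \nonsep \subseteq C \setminus A_g$, which equals $B$ modulo a null set (since $A \setminus A_g = A \cap \text{Bad}_A$ is null), so by Fubini $W < \lambda$ a.e.\ on $\nonsep \times (C \setminus \nonsep)$. Construct $\bar B_g$ analogously from $B$ (using $\text{Bad}_B := \{y : \mu(\{x \in A : W(x,y) \geq \lambda\}) > 0\}$), obtaining a set that does not separate twins, lies in $C$, has positive measure, and satisfies $W < \lambda$ a.e.\ on $\bar B_g \times (C \setminus \bar B_g)$.

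The proof concludes by a case split on the measures of $\nonsep$ and $\bar B_g$. If $\mu(\nonsep) < \mu(C)$, take $\nonsep$ as the witness; else if $\mu(\bar B_g) < \mu(C)$, take $\bar B_g$; otherwise $\mu(\nonsep) = \mu(\bar B_g) = \mu(C)$, so both sets agree with $C$ up to null sets. In this last case, for a.e.\ $x \in C$, membership in $\nonsep$ forces $W(x, \cdot) < \lambda$ a.e.\ on $B$, while membership in $\bar B_g$ forces $W(x, \cdot) < \lambda$ a.e.\ on $A$, so $W(x, \cdot) < \lambda$ a.e.\ on $C$; a final Fubini gives $W < \lambda$ a.e.\ on $C \times C$. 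The main technical obstacle is the choice of $\nonsep$: replacing $A$ by its ``good part'' $A_g$ before twin-completing is essential, since a direct twin-completion of a null set need not be null, while the refinement guarantees every point of $\nonsep$ has a rigorously controlled row in $B$.
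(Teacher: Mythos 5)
Your proof is correct and follows essentially the same strategy as the paper: extract the Fubini-``good'' part of the disconnecting set before twin-completing (since twin-completion of a null set can have positive measure), then case-split on whether the twin-completed set has full measure. The organization differs slightly---you build both $\nonsep$ (from $A$) and $\bar B_g$ (from $B$) symmetrically up front and resolve with a three-way case split, whereas the paper first analyzes the twin-completion $\bar S$ of the good part of $S$ and only constructs the analogous $\bar T$ from $T = C \setminus S$ in the case $\mu(\bar S) = \mu(C)$; and your observation that $\text{Bad}_A$ is itself twin-invariant gives a slightly slicker route to the containment $\nonsep \subseteq \text{Bad}_A^c$ than the paper's pointwise ``$x$ is a twin of some good $x'$, hence good'' argument. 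These are cosmetic differences; the substance is the same.
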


\begin{proof}
    Since $C$ is disconnected at level $\lambda$, there exists a subset $S
    \subset C$ such that $0 < \mu(S) < \mu(C)$ and $W < \lambda$ almost
    everywhere on $S \times (C \setminus S)$. Define
    \[
        \hat S = \{ x \in S : W(x,y) < \lambda \text{ for almost every $y \in C
        \setminus S$} \}.
    \]
    Since $W < \lambda$ almost everywhere on $S \times (C \setminus S)$, it must
    be that $\mu(\hat S) = \mu(S)$; This is an application of Fubini's theorem.
    Let $\bar S = \twinclass^{-1}(\twinclass(\hat S))$. It follows that $\bar S \subset C$,
    and for every $x \in \bar S$, $W(x,y) < \lambda$ for almost every $y \in C
    \setminus S$.  Furthermore, $\bar S$ does not separate twins, and $\bar S$
    contains $\hat S$ -- which is $S$, less a null set -- so $\mu(S \setminus
    \bar S) = 0$.

    There are two cases: $\mu(\bar S) < \mu(C)$, or $\mu(\bar S) =
    \mu(C)$. Suppose the first case holds. Then, since $\mu((C \setminus S)
    \setminus (C \setminus \bar S)) = 0$, we have that for every $x \in \bar S$, 
    $W(x,y) < \lambda$ for almost every $y \in C \setminus \bar S$. Therefore,
    $W < \lambda$ almost everywhere on $\bar S \times (C \setminus \bar S)$.
    This proves the claim for the first case, as we may take $\bar A = \bar S$.

    Now suppose $\mu(\bar S) = C$, which is to say that $\bar S$ differs from
    $C$ by a null set. Since $W < \lambda$ almost everywhere on $\bar S \times
    (C \setminus S)$, it follows that $W < \lambda$ almost everywhere on $C
    \times (C \setminus S)$. By symmetry of $W$, we have $W < \lambda$ almost
    everywhere on $(C \setminus S) \times C$. This means that $W < \lambda$
    almost everywhere on $(C \times C) \setminus (S \times S)$. 
    
    Let $T = C \setminus S$. Then $W < \lambda$ almost everywhere on $T \times
    C = (C \setminus S) \times C$. Define
    \[
        \hat T = \{
            x \in T : W(x,y) < \lambda \text{ for almost every $y \in C$ }
        \}.
    \]
    Let $\bar T = \twinclass^{-1}(\twinclass(\hat T))$. Then, by a similar argument used
    above for $\bar S$, $\mu(T \setminus \bar T) = 0$, $\bar T$ does not
    separate twins, and $W < \lambda$ almost everywhere on $\bar T \times C$.

    There are two subcases: First, it may be that $\mu(\bar T) = \mu(C)$. If so,
    then $W < \lambda$ almost everywhere on $C \times C$, which proves the claim.
    Second, it may be that $\mu(\bar T) < \mu(C)$. In this case, we have $W <
    \lambda$ almost everywhere on $\bar T \times (C \setminus \bar T)$, and so
    taking $\bar A = \bar T$ proves the claim.
    
\end{proof}

The next two claims shown that the preimage under $\varphi$ of a cluster at
level $\lambda$ in a graphon $W$ is connected at every level $\lambda' <
\lambda$, and, conversely, a cluster at level $\lambda$ in $W^{\varphi}$ has a
particular representative whose image under $\varphi$ is connected at every
level $\lambda' < \lambda$ in $W$.

\newcommand{\inv}{\varphi^{-1}(C)}

\begin{claim}
    \label{claim:cluster_in_w}
    Let $W$ be a graphon and $\varphi$ a measure preserving transformation. If
    $\cluster$ is a cluster at level $\lambda$ in $W$, then
    $\varphi^{-1}(\cluster)$ is connected at every level $\lambda' < \lambda$ in
    $W^\varphi$.
\end{claim}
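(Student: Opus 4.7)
The plan is to argue by contradiction, choosing cluster representatives that behave well under the (possibly non-injective) map $\varphi$. Suppose $\varphi^{-1}(\cluster)$ is disconnected at some level $\lambda' < \lambda$ in $W^\varphi$. Replace $\cluster$ by the family representative $C = \family_W \cluster$, which by construction does not separate twins in $W$. First I would verify the lifting property that twins in $W^\varphi$ map under $\varphi$ to twins in $W$: if $W^\varphi(x,\cdot) = W^\varphi(x',\cdot)$ almost everywhere, then $E = \{y : W(\varphi(x),\varphi(y)) = W(\varphi(x'),\varphi(y))\}$ has full measure, and since $E = \varphi^{-1}(\{z : W(\varphi(x), z) = W(\varphi(x'),z)\})$ and $\varphi$ is measure preserving, the latter set also has full measure. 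Consequently $\varphi^{-1}(C)$ does not separate twins in $W^\varphi$: if a twin pair straddled $\varphi^{-1}(C)$, its $\varphi$-image would be a twin pair straddling $C$ in $W$.

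With $\varphi^{-1}(C)$ established as non-separating, apply \Cref{claim:completion_disconnected} inside $W^\varphi$ at level $\lambda'$. This yields either (a) $W^\varphi < \lambda'$ almost everywhere on $\varphi^{-1}(C) \times \varphi^{-1}(C)$, or (b) a subset $\bar A \subset \varphi^{-1}(C)$ that does not separate twins in $W^\varphi$, satisfies $0 < \mu(\bar A) < \mu(\varphi^{-1}(C))$, and for which $W^\varphi < \lambda'$ almost everywhere on $\bar A \times (\varphi^{-1}(C) \setminus \bar A)$. Write $B = \varphi^{-1}(C) \setminus \bar A$ and $A = \varphi(\bar A)$. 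Since $\bar A$ and $\varphi^{-1}(C)$ do not separate twins, neither does $B$, so \Cref{claim:non_separating_mpt} (applied in $W^\varphi$) gives $\mu(A) = \mu(\bar A)$, $\mu(\varphi(B)) = \mu(B)$, and also $\varphi(\varphi^{-1}(C)) = C$ up to a null set. A short verification using the twin property shows that $\varphi(B)$ equals $C \setminus A$ up to a null set, so in case (b), $A \subset C$ with $0 < \mu(A) < \mu(C)$.

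Finally, push the disconnecting condition forward via the product map $\Phi(x,y) = (\varphi(x),\varphi(y))$, which is measure preserving on $[0,1]^2$. Because the relevant sets do not separate twins, $\Phi^{-1}(A \times (C \setminus A))$ coincides with $\bar A \times B$ up to a null set (case (b)), and $\Phi^{-1}(C \times C)$ coincides with $\varphi^{-1}(C) \times \varphi^{-1}(C)$ up to a null set (case (a)). Since $\{W^\varphi \geq \lambda'\} = \Phi^{-1}\{W \geq \lambda'\}$, the assumed negligibility of $\{W^\varphi \geq \lambda'\} \cap (\bar A \times B)$ (resp.\ on $\varphi^{-1}(C) \times \varphi^{-1}(C)$) transfers through $\Phi$ to the corresponding statement for $W$. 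In case (b) this gives $W < \lambda'$ almost everywhere on $A \times (C \setminus A)$, directly witnessing that $C$ is disconnected at $\lambda'$ in $W$; in case (a), $W < \lambda'$ almost everywhere on $C \times C$, and any proper measurable splitting of the positive-measure set $C$ again witnesses disconnection. Either way we contradict the assumption that $C$ is a cluster at level $\lambda > \lambda'$, completing the proof.

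The chief obstacle is the non-injectivity of $\varphi$: an arbitrary disconnecting subset $S \subset \varphi^{-1}(C)$ cannot simply be pushed forward, because $\varphi(S)$ and $\varphi(\varphi^{-1}(C) \setminus S)$ might overlap, destroying the almost-everywhere bound on $W$. The twin-measure machinery is precisely what circumvents this, by providing representatives (via $\family_W$) and a disconnecting subset (via \Cref{claim:completion_disconnected}) whose measures and complements are preserved under $\varphi$ and $\Phi$.
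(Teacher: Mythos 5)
Your proof is correct and takes essentially the same route as the paper's: fix a cluster representative $C$ that does not separate twins, use \Cref{claim:completion_disconnected} to obtain a disconnecting subset $\bar A \subset \varphi^{-1}(C)$ that also respects twins, and transfer the almost-everywhere bound on $W^\varphi$ to a bound on $W$ via the product map $\Phi$, contradicting connectedness of $C$. The only (cosmetic) differences are that the paper first passes to the adjusted representative from \Cref{claim:good_representative}, which satisfies $\varphi(\varphi^{-1}(C)) = C$ exactly rather than up to a null set, and argues the transfer by pushing forward through $\varphi$, whereas you pull back through $\Phi^{-1}$ — both are valid. You also explicitly justify that twins in $W^\varphi$ map to twins in $W$ (so that $\varphi^{-1}(C)$ is non-separating), a step the paper states without proof; that is a small but welcome addition.
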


\begin{proof}
    For simplicity, we will work with an representative $C$ of the cluster
    $\cluster$. As \Cref{claim:good_representative} shows, we may take $C$ to be
    a representative such that $\varphi(\varphi^{-1}(C)) = C$.
    
    Suppose for a contradiction that $\inv$ is disconnected in
    $W^\varphi$ at some level $\lambda' < \lambda$. Then by
    Claim~\ref{claim:completion_disconnected} either $W^{\varphi} < \lambda'$
    almost everywhere on $\inv \times \inv$, or there exists a set $\nonsep
    \subset \inv$ such that $0 < \mu(\nonsep) < \mu(\inv)$, $W^\varphi <
    \lambda'$ almost everywhere on $\nonsep \times (\inv \setminus \nonsep)$,
    and $\nonsep$ does not separate twins.

    In the first case, $W^\varphi < \lambda'$ almost everywhere on $\inv \times
    \inv$ implies that $W < \lambda'$ almost everywhere on $C \times C$, which
    contradicts the fact that $C$ is the representative of a cluster at level
    $\lambda'$ in $W$.

    Suppose the second case, then, where $W^\varphi < \lambda'$ almost
    everywhere on $\nonsep \times (\inv \setminus \nonsep)$. Then $W < \lambda'$
    almost everywhere on $\varphi(\nonsep) \times \varphi(\inv \setminus
    \nonsep)$. We now claim that $\varphi(\varphi^{-1}(C)\setminus \nonsep)
    = \varphi(\varphi^{-1}(C)) \setminus \varphi(\nonsep) = C \setminus
    \nonsep$. To see this, note that $\varphi(\varphi^{-1}(C)\setminus \nonsep)
    \supset \varphi^{-1}(\varphi(C)) \setminus \varphi(\nonsep)$. However, we
    have chosen $C$ to be a representative such that $\varphi(\varphi^{-1}(C)) =
    C$, and so we obtain
    \[
        \varphi(\varphi^{-1}(C) \setminus \nonsep) \supset C \setminus \nonsep.
    \]

    On the other hand, suppose $y \in \varphi(\varphi^{-1}(C) \setminus
    \nonsep)$. This means that there is some $x \in \varphi^{-1}(C) \setminus
    \nonsep$ such that $\varphi(x) = y$. But $\varphi^{-1}(C) \setminus \nonsep$
    does not separate twins, so there cannot be an $x' \in \nonsep$ such that
    $\varphi(x') = \varphi(x) = y$. Therefore, $y \in \varphi(\varphi^{-1}(C)
    \setminus \nonsep)$ if $y \in C$ and there is no $a \in \nonsep$ such that
    $\varphi(a) = y$. That is, $\varphi(\inv \setminus \nonsep) \subset C
    \setminus \nonsep$. Hence $\varphi(\inv \setminus \nonsep) = C \setminus
    \varphi(\nonsep)$.
    
    Therefore $W < \lambda$ almost everywhere on $\varphi(\nonsep) \times (C
    \setminus \varphi(\nonsep))$. Since $\mu(\varphi(\nonsep)) = \mu(\nonsep) <
    \mu(C)$ by \Cref{claim:non_separating_mpt}, this implies that $C$ is
    disconnected at level $\lambda'$ in $W$. Hence $C$ is not the representative
    of a cluster at level $\lambda$, and so we have derived a contradiction.

    Both cases lead to contradictions, and so it must be that $\inv$ is
    connected in $W^\varphi$ at every level $\lambda' < \lambda$.
\end{proof}

\begin{claim}
    \label{claim:cluster_in_varphi}
    Let $W$ be a graphon and $\varphi$ be a measure preserving transformation.
    Suppose $\cluster$ is a cluster of $W^\varphi$ at level $\lambda$. Let $C
    \in \family_W \cluster$. Then $\varphi(C)$ is connected at every level
    $\lambda' < \lambda$ in $W$.
\end{claim}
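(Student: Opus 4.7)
The plan is to mirror the strategy of \Cref{claim:cluster_in_w}, but pushing forward under $\varphi$ rather than pulling back. The key observation is that $C \in \family_{W^\varphi} \cluster$ does not separate twins in $W^\varphi$, so by \Cref{claim:non_separating_mpt} (applied to $W^\varphi$ and $\varphi$) we have $\varphi^{-1}(\varphi(C)) = C$ and $\mu(\varphi(C)) = \mu(C) > 0$. This identity is what allows clean transfer of null-set information between $C$ and $\varphi(C)$.

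Suppose for contradiction that $\varphi(C)$ is disconnected at some level $\lambda' < \lambda$ in $W$. I would apply \Cref{claim:completion_disconnected} to $\varphi(C)$ in $W$: either $W < \lambda'$ almost everywhere on $\varphi(C) \times \varphi(C)$, or there exists $\bar A \subset \varphi(C)$ that does not separate twins in $W$, with $0 < \mu(\bar A) < \mu(\varphi(C))$ and $W < \lambda'$ almost everywhere on $\bar A \times (\varphi(C) \setminus \bar A)$. In either case, I would pull the violating null set back along the product map $\Phi(x,y) = (\varphi(x), \varphi(y))$, which is itself measure preserving on $[0,1]^2$ and thus sends null sets to null sets.

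In the degenerate case, the $\Phi$-pullback gives $W^\varphi < \lambda'$ almost everywhere on $\varphi^{-1}(\varphi(C)) \times \varphi^{-1}(\varphi(C)) = C \times C$, so partitioning $C$ into any two positive-measure pieces exhibits disconnection of $C$ at level $\lambda'$. In the second case, set $A = \varphi^{-1}(\bar A)$. Using $C = \varphi^{-1}(\varphi(C))$ together with the fact that preimages commute with set differences, $C \setminus A = \varphi^{-1}(\varphi(C) \setminus \bar A)$; since $\varphi$ is measure preserving, $\mu(A) = \mu(\bar A) > 0$ and $\mu(C \setminus A) = \mu(\varphi(C) \setminus \bar A) > 0$. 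The $\Phi$-pullback of the null set witnessing $W \geq \lambda'$ on $\bar A \times (\varphi(C) \setminus \bar A)$ is then a null subset of $A \times (C \setminus A)$, so $W^\varphi < \lambda'$ almost everywhere on $A \times (C \setminus A)$. Either way, $C$ is disconnected at level $\lambda'$ in $W^\varphi$, contradicting that $C$ represents a cluster of $W^\varphi$ at level $\lambda > \lambda'$.

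The hard part will be precisely that an arbitrary disconnecting subset $S \subset \varphi(C)$ in $W$ may itself separate twins, in which case $\varphi^{-1}(\varphi(S)) \neq S$ and the pushforward/pullback transfer breaks. \Cref{claim:completion_disconnected} is tailor-made to remove exactly this obstruction by upgrading any disconnecting witness to one that does not separate twins (modulo the degenerate branch), after which \Cref{claim:non_separating_mpt} together with the measure preserving nature of the product map $\Phi$ close the argument.
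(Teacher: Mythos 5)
Your pullback computations are correct, but your appeal to \Cref{claim:completion_disconnected} is both unjustified and unnecessary, and it reveals a conceptual misdiagnosis of where twin-separation actually matters. To apply \Cref{claim:completion_disconnected} to $\varphi(C)$ in $W$ you would need $\varphi(C)$ to not separate twins in $W$, and you never establish this. It is not a consequence of $C$ not separating twins in $W^\varphi$: if $u,u'$ are $W$-twins with $u \in \varphi(C)$ but $u'$ outside the range of $\varphi$, then $\varphi(C)$ separates them and no amount of reasoning about $W^\varphi$-twins of preimages helps, since $u'$ has no preimage at all. (Measure-preserving maps need not be surjective, and the complement of the range need not even be a measurable null set, so this is a genuine loose end.)

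More importantly, the obstruction you are guarding against does not occur here. You worry that an arbitrary disconnecting $S \subset \varphi(C)$ may separate twins so that $\varphi^{-1}(\varphi(S)) \neq S$, ``breaking'' the transfer. But in this claim the transfer goes from $W$ to $W^\varphi$ via \emph{preimage}, and preimages behave perfectly regardless of twin structure: preimages commute with all Boolean operations, $\Phi^{-1}$ of a null set is null, and $\varphi$ preserves the measure of $\varphi^{-1}(S)$. So one simply sets $A = \varphi^{-1}(S)$ for an \emph{arbitrary} disconnecting $S$; then $A \subset \varphi^{-1}(\varphi(C)) = C$, $\mu(A) = \mu(S)$, $C \setminus A = \varphi^{-1}(\varphi(C) \setminus S)$, and $W^\varphi < \lambda'$ a.e.\ on $A \times (C \setminus A)$, exactly as in your second case with $\bar A$ replaced by $S$. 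This is precisely the paper's proof, and it uses only $\varphi^{-1}(\varphi(C)) = C$ and $\mu(\varphi(C)) = \mu(C)$, both of which follow from \Cref{claim:non_separating_mpt} applied to $C$. Twin-closure of the disconnecting set is genuinely needed only in the opposite direction (\Cref{claim:cluster_in_w}), where one must push forward $\varphi(\inv \setminus \bar A)$ and the identity $\varphi(X \setminus Y) = \varphi(X) \setminus \varphi(Y)$ can fail; you have imported that concern into the wrong direction. Drop the invocation of \Cref{claim:completion_disconnected}, run your second-case calculation on the raw disconnecting set, and you have the paper's argument.
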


\begin{proof}
    Suppose for a contradiction that $\varphi(C)$ is not connected at some level
    $\lambda' < \lambda$ in $W$. Then there exists a set $S \subset \varphi(C)$
    such that $0 < \mu(S) < \mu(\varphi(C))$ and $W < \lambda'$ almost
    everywhere on $S \times (\varphi(C) \setminus S)$. Hence $W^\varphi <
    \lambda'$ almost everywhere on $\varphi^{-1}(S) \times
    \varphi^{-1}(\varphi(C) \setminus S) = \varphi^{-1}(S) \times
    (\varphi^{-1}(\varphi(C)) \setminus \varphi^{-1}(S))$. Since $C$ does not
    separate twins in $W^\varphi$, we have by \Cref{claim:non_separating_mpt}
    that $\varphi^{-1}(\varphi(C)) = C$, and so $W^\varphi < \lambda'$ almost
    everywhere on $\varphi^{-1}(S) \times (C \setminus \varphi^{-1}(S))$.
    
    Consider $\varphi^{-1}(S)$. We have $C = \varphi^{-1}(\varphi(C))$, and
    since $S \subset \varphi(C)$, it follows that $\varphi^{-1}(S) \subset C$.
    Moreover, $\mu(\varphi^{-1}(S)) = \mu(S)$, since $\varphi$ is measure
    preserving, and $0 < \mu(S) < \mu(\varphi(C)) = \mu(C)$, where the last
    equality comes from Claim~\ref{claim:non_separating_mpt}. Hence $C$ is
    disconnected at level $\lambda'$ in $W$. This contradicts the fact that $C$
    is a representative of a cluster at level $\lambda$ in $W$. Hence it must be
    that $\varphi(C)$ is connected at every level $\lambda' < \lambda$ in $W$.

\end{proof}

The two previous claims are sufficient to prove the main result of this section.

\claimmptclusters*

\begin{proof}
    \label{proof:mptclusters}
    Suppose $\cluster$ is a cluster of $W$ at level $\lambda$ and let $C$ be a
    representative of $\cluster$. Then according to
    Claim~\ref{claim:cluster_in_w}, $\varphi^{-1}(C)$ is connected at every
    level $\lambda' < \lambda$ in $W^\varphi$, and hence there exists a cluster
    $\cluster'$ at level $\lambda$ in $W^\varphi$ which contains
    $\varphi^{-1}(C)$. Then by Claim~\ref{claim:cluster_in_varphi}, there is a
    representative $C'$ of $\cluster'$ such that $C'$ does not separate twins
    and $\varphi(C')$ is connected at every level $\lambda' < \lambda$ in $W$,
    and so there is a cluster $\cluster''$ of $W$ at level $\lambda$ such that
    $\cluster''$ contains $\varphi(C')$. However, it must be that $\cluster'' =
    \cluster$. To see this, note that we have $\varphi^{-1}(C \cap \varphi(C'))
    = \varphi^{-1}(C) \cap \varphi^{-1}(\varphi(C')) = \varphi^{-1}(C) \cap C'$.
    Since $\varphi$ is measure preserving, it follows that $\mu(C \cap
    \varphi(C')) = \mu(\varphi^{-1}(C) \cap C')$, but $C' \symdiff
    \varphi^{-1}(C)$ is a null set such that $\mu(C \cap \varphi(C')) = \mu(C)$.
    Thus $\mu(\cluster') = \mu(\varphi^{-1}(C))$, and so $\varphi^{-1}(C)$ is a
    representative of the cluster $\cluster'$. Hence $\varphi^{-1}(\cluster)$ is
    a cluster at level $\lambda$ of $W^\varphi$.

    Now suppose $\cluster$ is a cluster of $W^\varphi$ at level $\lambda$ and
    let $C$ be a representative of $\cluster$ such that $C \in
    \family_W(\cluster)$. Then according to Claim~\ref{claim:cluster_in_varphi},
    $\varphi(C)$ is connected at every level $\lambda' < \lambda$ in $W$, and
    hence there exists a cluster $\cluster'$ in $W$ at level $\lambda$ which
    contains $\varphi(C)$. By the previous argument, $\varphi^{-1}(\cluster')$
    is a cluster of $W^\varphi$ at level $\lambda$. Since $C \in \family_W
    \cluster$, $C$ does not separate twins in $W^\varphi$, and so
    $\varphi^{-1}(\varphi(C)) = C$, and thus $C$ is contained in
    $\varphi^{-1}(\cluster')$. Since $C$ is a cluster representative, and thus
    maximal, it must be that $\varphi^{-1}(\cluster') = \cluster$. 
\end{proof}

\section{Sufficient conditions for consistent clustering methods}
\label{apx:estprob}

In this section we prove that any consistent estimator of the edge probability
matrix leads to a consistent estimator of the graphon cluster tree.  Estimating
the graphon or the edge probability matrix is an area of recent research. There
are a number of methods in the literature; See, for instance,
\cite{Wolfe2013-xi}, \cite{Chan2014-js}, \cite{Airoldi2013-eu},
\cite{Rohe2011-jb}, \cite{Zhang2015-ik}.  Each work in this direction defines a
slightly different sense in which the proposed estimator is consistent, but all
use some variant of the mean squared error. Convergence in this norm ensures
that the estimate is close to the true graphon in aggregate, but still allows
the estimate to differ from the ground truth by a large amount on a set of small
measure. Since our merge distortion is sensitive to the \emph{largest} error,
regardless of measure, consistency of graphon estimators as shown in the
literature is not sufficient to show consistency in merge distortion.

In
particular we require that the estimator $\hat{\rv{P}}$ satisfies
\[
    \prob\left(
        \max_{i \neq j} 
        \left|
        \estedgeprobrv_{ij} - \edgeprobrv_{ij}
        \right| 
        > \epsilon
    \right) \to 0
\]
for every $\epsilon > 0$ as $n \to \infty$. The probability is with respect to
the label measure introduced in \Cref{defn:label_measure}. That is, to be
precise:
\[
    \prob\left(
        \max_{i \neq j} 
        \left|
        \estedgeprobrv_{ij} - \edgeprobrv_{ij}
        \right| 
        > \epsilon
    \right) = 
    \labelmeasure \left( \left\{
            (G,S) : \max_{i \neq j} | P_{ij} - \hat{P}_{ij} | > \epsilon
    \right\}
    \right)
\]
It is implicit here that $P$ is induced by the graphon $W$ and the particular
labeling $S$, and $\hat{P}$ is a function of the graph, $G$.

\newcommand{\paths}{\mathcal{P}}

Given an estimator $\estedgeprobrv$, we construct a consistent clustering
algorithm as follows. Let $\paths_n(i,j)$ be the set of all simple paths between
nodes $i$ and $j$ in the complete graph on node set $[n]$. For $p \in
\paths_n(i,j)$, let $\ell(p)$ denote the length of the path, and let $p_k$ be
the label of the $k$th node along the path. For any $i \neq j \in [n] \times
[n]$, define the \term{merge estimate} $\estmergeonrv_{ij}$ by
\[
    \estmergeonrv_{ij} = \max_{p \in \paths_n(i,j)} 
        \min_{1 \leq k \leq \ell(p)} \estedgeprobrv_{p_k p_{k+1}}.
\]
As its name implies, the merge estimate $\estmergeonrv_{ij}$ estimates the
height at which nodes $i$ and $j$ merge in the cluster tree. Intuitively, if
$\estmergeonrv_{ij}$ is close to the true merge height for every pair $i,j$, we
can use $\estmergeonrv$ to construct a clustering which is close to the cluster
tree in merge distortion.  Specifically, let $\rv{H}$ be the weighted graph on
node set $[n]$ in which the weight between nodes $i$ and $j$ is given by
$\estmergeonrv_{ij}$.  We define the clusters of $\rv{H}$ at level $\lambda$ to
be the connected components of the subgraph induced by removing every edge with
weight less than $\lambda$. The clustering $\clustering_\estmergeonrv$ is
defined to be the set of all clusters of $\rv{H}$ at any level $\lambda$.
Equivalently, $\estmergeonrv_{ij}$ is the level at which nodes $i$ and $j$ merge
in the single linkage clustering of $\estedgeprobrv$, when $\estedgeprobrv$ is
treated as a similarity matrix. Thus $\clustering_\estmergeonrv$ is simply the
single linkage clustering of $\estedgeprobrv$.

\subsection{Claims}
\label{apx:estprob:definitions}

We state our claims here, and place all technical details and proofs in
\Cref{apx:estprob:proofs} for clarity.

\newcommand{\blocksets}{\{B_i\}}
\newcommand{\refinementsets}{\{R_i\}}
\newcommand{\refinement}{\mathcal{R}}

It is sufficient to show that if $|\estedgeprobrv_{ij} - \edgeprobrv_{ij}| <
\epsilon$, $\estmergeonrv_{ij}$ is at most $\epsilon + c$ away
from the true merge height $M(x_i, x_j)$, where $c$ is a constant. It is easy to
see that the merge distortion between the cluster tree and
$\clustering_\estmergeonrv$ cannot be greater than $2(\epsilon + c)$. 

\begin{restatable}{claim}{claimmergedist}
    \label{claim:mergedist}
    Let $W$ be a graphon, $M$ be a mergeon of $W$, and $S = (x_1, \ldots, x_n)$.
    Suppose $\max_{i \neq j} | M(x_i,x_j) - \estmergeon_{ij} | < \epsilon$, and
    let $\clustering_\estmergeon$ be the clustering defined above of the
    weighted graph $H$ with weight matrix $\estmergeon$. Let $\hat{M}$ be the
    merge height on $\clustering_\estmergeon$ induced by $M$. Then the merge
    distortion $\mergedist_S(M,\hat{M}) < 2\epsilon$.
\end{restatable}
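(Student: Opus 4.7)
The strategy is to separately bound $\hat M(x_i,x_j)$ above and below by $M(x_i,x_j)$ with slack of at most $2\epsilon$. The upper bound is immediate from the definition of the induced merge height in Definition~\ref{defn:induced_merge_height}: since $x_i$ and $x_j$ both lie in the smallest cluster $C := \clustering_\estmergeon(x_i,x_j)$ containing them, the pair $(x_i,x_j)$ is a candidate in the minimum defining $\hat M$, so $\hat M(x_i,x_j) \le M(x_i,x_j)$.

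For the lower bound $\hat M(x_i,x_j) > M(x_i,x_j) - 2\epsilon$, I would first give a concrete description of $C$. By construction, $\clustering_\estmergeon$ is the set of connected components of the thresholded subgraphs of $H$, so the smallest cluster containing both $x_i$ and $x_j$ is the connected component of node $i$ (equivalently, $j$) in $H$ at threshold $\lambda^* = \estmergeon_{ij}$. Using the max--min (bottleneck) path definition of $\estmergeon$, this is precisely $C = \{x_k : \estmergeon_{ik} \ge \estmergeon_{ij}\}$. A short concatenation-of-paths argument then yields the ultrametric property of $\estmergeon$: for any $x_k, x_\ell \in C$, concatenating a bottleneck path from $k$ to $i$ with one from $i$ to $\ell$ produces a path whose minimum edge weight is at least $\estmergeon_{ij}$, so $\estmergeon_{k\ell} \ge \estmergeon_{ij}$.

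With this in hand the lower bound follows by chaining three inequalities. Fix any $x_k, x_\ell \in C$ with $k \ne \ell$. The hypothesis applied to $(k,\ell)$ gives $M(x_k, x_\ell) > \estmergeon_{k\ell} - \epsilon$; the ultrametric property gives $\estmergeon_{k\ell} \ge \estmergeon_{ij}$; and the hypothesis applied to $(i,j)$ gives $\estmergeon_{ij} > M(x_i,x_j) - \epsilon$. Combining these, $M(x_k,x_\ell) > M(x_i,x_j) - 2\epsilon$. Taking the minimum over $k \ne \ell$ in $C$ yields $\hat M(x_i,x_j) > M(x_i,x_j) - 2\epsilon$, which together with the upper bound proves $|M(x_i,x_j) - \hat M(x_i,x_j)| < 2\epsilon$ for every pair, hence $\mergedist_S(M, \hat M) < 2\epsilon$.

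I do not anticipate any serious technical obstacle. The only step that requires more than a single line is the explicit description of $C$ and the verification of the ultrametric property of $\estmergeon$, both of which follow almost immediately from the max--min path definition of $\estmergeonrv$ introduced just before the claim. The rest is a routine chain of three inequalities together with the tautological upper bound.
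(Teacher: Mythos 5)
Your proposal is correct and follows essentially the same route as the paper's proof: both rely on the tautological upper bound $\hat M(x_i,x_j) \le M(x_i,x_j)$, the observation that for $k,\ell$ in the smallest cluster $C$ containing $i,j$ one has $\estmergeon_{k\ell} \ge \estmergeon_{ij}$ (you call this the ultrametric property of the bottleneck distance; the paper phrases it as $C$ being a cluster of $H$ at level $\estmergeon_{ij}$), and two applications of the hypothesis. The paper works with the specific minimizing pair $(u,v)$ while you bound uniformly over all pairs in $C$ before minimizing, but this is only a presentational difference.
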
 

We will now show that $\estmergeonrv_{ij}$ is close to the true merge height for
all $ij$ with high probability. First, recall the definition of a piecewise
Lipschitz graphon:

\defnlipschitz*

The idea is that the piecewise Lipschitz graphon is essentially piecewise
constant when viewed at small enough scales.  As such, we refine the blocks on
which the graphon is Lipschitz, creating a new block partition whose blocks are
small enough that $W$ varies by only a small amount on each.  We define a
refinement as follows:

\begin{defn}
    \label{defn:delta_refinement}
    A set of blocks $\refinement = \refinementsets$ is a
    \term{$\Delta$-refinement} of a block partition $\blocks = \blocksets$ if
    for every $R \in \refinement$, $\Delta \leq \lebesguemeasure(R) \leq
    2\Delta$ and there exists some $B \in \blocks$ such that $B \supseteq R$.
\end{defn}

We can think of the blocks in a refinement as being nodes in a weighted graph,
such that the weight between blocks $R$ and $R'$ is approximately the value of
$W$ on $R \times R'$. As such, we define a path of blocks in a refinement as
follows:

\begin{defn}
    Let $\refinement$ be a block partition of $[0,1]$, and suppose $R,R' \in
    \refinement$.  A $\lambda$-path from $R$ to $R'$ in a graphon $W$ is a
    sequence $\langle R = R_1, \ldots, R_t = R' \rangle$ of blocks from
    $\refinement$ such that, for all $1 \leq i < t$, $W \geq \lambda$ almost
    everywhere on $R_i \times R_{i+1}$. The elements of the path need not be
    distinct.
\end{defn}

In piecewise Lipschitz graphons, the existence of a $\lambda$-path between
blocks $R$ and $R'$ implies that there exists a set $C$ containing both $R$ and
$R'$, and which is connected at level $\lambda$, as the following claim
demonstrates. Note that is directly analogous to the case of a finite weighted
graph, where a pair of nodes is connected if there is a path between them.

\begin{restatable}{claim}{claimpathuno}
    \label{claim:path1}
    Let $W \in \smoothgraphons$ and let $M$ be a mergeon of $W$. Let
    $\refinement$ be a $\Delta$-refinement of $\blocks$. Let $\langle R_1,
    \ldots, R_t\rangle$ be a $\lambda$-path in $\refinement$.  Let $C = R_1 \cup
    \ldots \cup R_t$.  Then $C$ is connected at level $\lambda$ in $W$, and thus
    $M \geq \lambda$ almost everywhere on $C \times C = (R_1 \cup \ldots \cup
    R_t) \times (R_1 \cup \ldots \cup R_t)$.
\end{restatable}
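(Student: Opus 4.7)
The statement has two halves. My plan is to first establish the connectedness of $C$ at level $\lambda$, and then to deduce the mergeon bound from it directly.

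For the connectedness half, I would argue by contradiction: suppose $C = R_1 \cup \cdots \cup R_t$ is disconnected at level $\lambda$, so there is a measurable $S \subset C$ with $0 < \mu(S) < \mu(C)$ and $W < \lambda$ almost everywhere on $S \times (C \setminus S)$. For each $k \in \{1, \ldots, t\}$ I track whether $\mu(R_k \cap S) > 0$ and whether $\mu(R_k \cap (C \setminus S)) > 0$; call these flags $a_k$ and $b_k$. Since each $R_k$ has measure at least $\Delta > 0$ and is contained in $C$, at least one of $a_k, b_k$ is true. Since $\mu(S)$ and $\mu(C \setminus S)$ are both positive, there exist indices $i^\ast$ with $a_{i^\ast}$ true and $j^\ast$ with $b_{j^\ast}$ true.

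The core combinatorial step is to extract an adjacent pair of path-blocks $(R_k, R_{k+1})$ such that, possibly after swapping $S$ with $C \setminus S$, $\mu(R_k \cap S) > 0$ and $\mu(R_{k+1} \cap (C \setminus S)) > 0$. If some block $R_k$ is itself split ($a_k$ and $b_k$ both true), I hand off to a path-neighbor $R_{k \pm 1}$: whichever flag is true on the neighbor pairs up against the opposite component of $R_k$. Otherwise every block is labeled purely $S$ or purely $C \setminus S$, and since both labels appear, walking from $i^\ast$ to $j^\ast$ along the path must cross a transition at some consecutive indices. Once such a pair is in hand, the contradiction is immediate: the rectangle $(R_k \cap S) \times (R_{k+1} \cap (C \setminus S))$ has positive product measure, lies inside $S \times (C \setminus S)$ where $W < \lambda$ a.e., and also lies inside $R_k \times R_{k+1}$ where $W \geq \lambda$ a.e. by the $\lambda$-path hypothesis.

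The main obstacle is precisely this extraction, particularly the internal-split subcase, where a neighbor on the path must be available---this is where the implicit assumption $t \geq 2$ enters (for $t = 1$ the path condition is vacuous and the claim reduces to a hypothesis about $R_1$ alone). The piecewise Lipschitz assumption plays no essential role in this part; only the almost-everywhere form of the $\lambda$-path condition is used. Finally, for the mergeon bound, I observe that connectedness at level $\lambda$ implies connectedness at every level $\lambda' < \lambda$ (a $\lambda'$-disconnecting split is automatically a $\lambda$-disconnecting one), so $C \in \upperlevel$, and hence $C$ is essentially contained in some cluster $\cluster \in \clustertree_W(\lambda)$ by \Cref{defn:clusters}. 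The defining property of the mergeon (\Cref{defn:mergeon}) then yields $M \geq \lambda$ almost everywhere on $\cluster \times \cluster$, and a fortiori on $C \times C$.
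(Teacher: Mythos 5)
Your proposal is correct and follows essentially the same strategy as the paper's proof: both argue (in the paper's case, directly; in yours, by contradiction, which is equivalent) that any nontrivial split of $C$ must cut either inside a path-block or between two consecutive path-blocks, and in either situation produces a positive-measure rectangle inside $S \times (C\setminus S)$ on which $W \geq \lambda$ a.e.\ by the $\lambda$-path hypothesis, and then both derive the mergeon bound by observing that a connected set is contained in a cluster, on whose square the mergeon is $\geq \lambda$ a.e. Your side remark that $t \geq 2$ is implicitly required is also mirrored in the paper's own wording (``Since there are at least two elements in the path'').
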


Conversely, if $R$ and $R'$ are blocks in a $\Delta$-refinement, each of which
have non-null intersection with the same cluster $\cluster$, then there exists a
$(\lambda - 2\Delta)$-path of blocks between $R$ and $R'$:

\begin{restatable}{claim}{claimpathdos}
    \label{claim:path2}
    Let $W \in \smoothgraphons$. Let $\refinement$ be a $\Delta$-refinement of
    $\blocks$, and suppose $R,R' \in \refinement$ (possibly with $R = R'$).  If
    there exists a cluster $\cluster$ at level $\lambda$ such that
    $\lebesguemeasure(\cluster \cap R) > 0$ and $\lebesguemeasure(\cluster \cap
    R') > 0$, then there exists a $(\lambda' - 2\Delta \lipschitz)$-path $(R =
    R_1, \ldots, R_t = R')$ between $R$ and $R'$, for any $\lambda' < \lambda$.
\end{restatable}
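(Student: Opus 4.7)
The plan is a reachability-and-contradiction argument built on top of the connectedness of $\cluster$ at every level strictly below $\lambda$. Fix $\lambda' < \lambda$ and set $\lambda^* = \lambda' - 2\Delta\lipschitz$. I would let $S$ be the union of all blocks $R^* \in \refinement$ for which there exists a $\lambda^*$-path from $R$ to $R^*$. Note $R \subset S$ by taking the trivial path, and $S$ is a finite union of blocks of $\refinement$ and hence measurable. The goal is to show $R' \subset S$, which immediately gives the required path.

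Suppose for contradiction that $R' \not\subset S$. The first substantive step is a \emph{boundary observation}: for any pair of blocks $R_a \subset S$ and $R_b \in \refinement$ with $R_b \not\subset S$, there cannot be a $\lambda^*$-path edge between them, because otherwise appending $R_b$ to a path from $R$ to $R_a$ would place $R_b$ in $S$. By the definition of a path edge, this means it is \emph{not} the case that $W \geq \lambda^*$ almost everywhere on $R_a \times R_b$; equivalently, there is a positive-measure subset $E_{ab} \subset R_a \times R_b$ on which $W < \lambda^*$.

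The next step is to promote this pointwise smallness to a uniform bound on the full rectangle using the piecewise Lipschitz hypothesis. Because $\refinement$ is a $\Delta$-refinement of $\blocks$, each of $R_a$ and $R_b$ sits inside a single block of $\blocks$, so $W$ satisfies the Lipschitz condition on $R_a \times R_b$. Since each block of $\refinement$ has measure (and hence diameter) at most $2\Delta$, the oscillation of $W$ on $R_a \times R_b$ is at most $2\Delta\lipschitz$ (using the convention of the paper's Lipschitz bound). Combining with the existence of a point in $E_{ab}$ where $W < \lambda^*$, we conclude $W < \lambda^* + 2\Delta\lipschitz = \lambda'$ everywhere on $R_a \times R_b$. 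Finally, set $T = \cluster \cap S$ and $T' = \cluster \setminus S$. The hypothesis gives $\mu(T) \geq \mu(\cluster \cap R) > 0$ and $\mu(T') \geq \mu(\cluster \cap R') > 0$, and $T \cup T' = \cluster$ up to a null set. Moreover $T \times T'$ is covered by the cross rectangles $R_a \times R_b$ with $R_a \subset S$ and $R_b \not\subset S$, on each of which $W < \lambda'$. Hence $W < \lambda'$ almost everywhere on $T \times T'$, which disconnects $\cluster$ at level $\lambda'$, contradicting the fact that $\cluster$ is a cluster at level $\lambda > \lambda'$ and is therefore connected at every level below $\lambda$.

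The main obstacle I anticipate is the Lipschitz oscillation step: one must verify that the diameter bound on refinement blocks really yields the $2\Delta\lipschitz$ oscillation advertised in the statement (rather than a slightly weaker constant), and that $R_a$ and $R_b$ each lie in a single $\blocks$-block so the Lipschitz inequality applies on the entire product. Everything else is essentially bookkeeping: the reachability definition of $S$ is standard, and the final contradiction is an immediate application of \Cref{claim:path1}'s converse — connectedness of $\cluster$ at level $\lambda'$ rules out any $W$-sparse cut between positive-measure pieces of $\cluster$.
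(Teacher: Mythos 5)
Your proof is correct and reaches the same conclusion as the paper, but via a genuinely different organizing principle. The paper's proof is \emph{constructive}: starting from $R_1 = R$, it iteratively adds a block $R_{k+1}$ that has a positive-measure overlap with $W^{-1}[\lambda',1]$ against the current accumulated set $S_k = R_1 \cup (R_2\cap C)\cup\cdots$, invoking connectedness of $\cluster$ at level $\lambda'$ at each step to guarantee the next block exists, and arguing termination from the finiteness of the set of blocks meeting $C$. Your proof instead defines the \emph{reachability class} $S$ up front and argues by contradiction that if $R'$ were unreachable, then the cut $(\cluster\cap S,\ \cluster\setminus S)$ would be a positive-measure partition of $\cluster$ with $W<\lambda'$ a.e.\ across it, contradicting connectedness of $\cluster$ at level $\lambda'$. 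Both proofs lean on the same two ingredients — connectedness of clusters at every level below $\lambda$, and the Lipschitz oscillation bound to upgrade ``positive measure where $W$ is small (resp.\ large)'' into a pointwise bound on the entire product rectangle — but your reachability-plus-cut formulation avoids the slightly hand-wavy ``this process must complete in a finite number of steps'' bookkeeping of the paper and is arguably the cleaner way to write the argument. Your caveat about the oscillation constant is well placed: as literally stated, Definition~\ref{defn:delta_refinement} allows blocks of diameter up to $2\Delta$, which would force a $4\Delta\lipschitz$ oscillation rather than $2\Delta\lipschitz$; the paper's own proof uses the same $2\Delta\lipschitz$ constant, and the surrounding text (e.g.\ the proof of Claim~\ref{claim:sample_density}, which treats block sizes as lying in $[\Delta/2,\Delta]$) suggests the intended convention has blocks of diameter at most $\Delta$, making the constant correct. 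So this is a shared reliance on a convention rather than a flaw introduced by your argument.
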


Lastly, the Lipschitz condition on the graphon $W$ also implies that the mergeon
does not vary much:

\begin{restatable}{claim}{claimboundmergeon}
    \label{claim:boundmergeon}
    Let $R,R' \in \refinement$. Let $\lambda$ be the greatest level at which
    there exists some cluster $\cluster$ containing a non-negligible piece of
    both $R$ and $R'$. That is,
    \[
        \lambda = \sup \{ \lambda' : \exists \cluster \in \clustertree(\lambda')
            \text{ such that } \mu(R \cap \cluster) > 0 \text{ and } \mu(R' \cap
            \cluster) > 0. \}
    \]
    Then $\lambda' - 2\Delta\lipschitz \leq M \leq \lambda$ almost everywhere on
    $R \times R'$.
\end{restatable}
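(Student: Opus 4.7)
The plan is to handle the two inequalities separately, with the upper bound following from a direct unpacking of the mergeon's defining identity and the lower bound following from the path-based tools \Cref{claim:path1} and \Cref{claim:path2}. Throughout I read the conclusion as: for every $\lambda' < \lambda$, $\lambda' - 2\Delta\lipschitz \leq M \leq \lambda$ almost everywhere on $R \times R'$.

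\emph{Upper bound.} Suppose for contradiction that $M > \lambda$ on a subset of $R \times R'$ of positive product measure. Then there is some rational $\mu > \lambda$ such that $\{(x,y) \in R \times R' : M(x,y) \geq \mu\}$ has positive measure. By the defining property of the mergeon,
\[
    M^{-1}[\mu,1] = \bigcup_{\cluster \in \clustertree_W(\mu)} \cluster \times \cluster
\]
up to a null set, and since $\clustertree_W(\mu)$ is countable, at least one cluster $\cluster$ at level $\mu$ must satisfy $\mu\times\mu\bigl((\cluster \times \cluster) \cap (R \times R')\bigr) > 0$. But this intersection equals $(\cluster \cap R) \times (\cluster \cap R')$, so $\cluster$ has non-null intersection with both $R$ and $R'$. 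Since $\mu > \lambda$, this contradicts the definition of $\lambda$ as the supremum of such levels.

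\emph{Lower bound.} Fix any $\lambda' < \lambda$. By the definition of $\lambda$, there exists a level $\lambda'' \in (\lambda',\lambda]$ for which some cluster $\cluster$ at level $\lambda''$ has non-null intersection with both $R$ and $R'$, and in particular $\cluster$ is a cluster at level $\lambda'$ as well. By \Cref{claim:path2}, there is a $(\lambda' - 2\Delta\lipschitz)$-path $\langle R = R_1, \ldots, R_t = R'\rangle$ in the refinement $\refinement$. Let $C = R_1 \cup \cdots \cup R_t$. Then \Cref{claim:path1} gives that $C$ is connected at level $\lambda' - 2\Delta\lipschitz$, and hence $M \geq \lambda' - 2\Delta\lipschitz$ almost everywhere on $C \times C$. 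Since $R \cup R' \subset C$, this yields the desired bound on $R \times R'$.

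\emph{Main obstacle.} The lower-bound direction is essentially packaged by the two path claims, so the only genuine subtlety is the upper bound, where one has to be careful that the mergeon's defining identity is up to null sets and uses a countable union. Choosing a rational level strictly above $\lambda$ lets us pass to one cluster in this countable family and derive a contradiction with the supremum, which is the key step.
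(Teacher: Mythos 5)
Your proof is correct and follows essentially the same strategy as the paper: the upper bound comes from unpacking the mergeon's defining identity (using countability of the clusters at a level), and the lower bound comes from chaining \Cref{claim:path2} and \Cref{claim:path1}. One small slip worth flagging: the aside ``and in particular $\cluster$ is a cluster at level $\lambda'$ as well'' is false in general — a cluster at a higher level $\lambda''$ is only \emph{contained in} some cluster at the lower level $\lambda'$, not itself a cluster at $\lambda'$ — but you never actually use this remark, since \Cref{claim:path2} applied at level $\lambda''$ already yields a $(\lambda'-2\Delta\lipschitz)$-path for any $\lambda'<\lambda''$, so the argument goes through unchanged.
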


Putting these ideas together, we are able to bound the difference between the
true merge height of points in a mergeon, and the merge estimate $\estmergeon$.

\begin{restatable}{claim}{claimmergefixed}
    \label{claim:mergefixed}
    Let $W \in \smoothgraphons$ and let $M$ be a mergeon of $W$. Let
    $\refinement$ be a $\Delta$-refinement of $\blocks$. Let $S = (x_1, \ldots,
    x_n)$ be an ordered set of elements of $[0,1]$ such for any $R \in
    \refinement$, $R \cap S \neq \emptyset$. Let $P$ be the edge probability
    matrix, i.e., the matrix whose $(i,j)$ entry is given by $W(x_i,x_j)$, and
    suppose $\estedgeprob$ is such that $\inftynorm{\estedgeprob - P} <
    \epsilon$.  Then $\max_{i \neq j} |M(x_i,x_j) - \estmergeon_{ij}| \leq 4
    \Delta \lipschitz + \epsilon$.
\end{restatable}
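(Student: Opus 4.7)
\medskip

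\noindent\emph{Proof proposal.} The plan is to establish the bound by proving two one-sided inequalities for every fixed pair $i \neq j$: that $\estmergeon_{ij} \geq M(x_i,x_j) - (4\Delta\lipschitz + \epsilon)$ and that $\estmergeon_{ij} \leq M(x_i,x_j) + (4\Delta\lipschitz + \epsilon)$. Throughout I will let $R^{(k)}$ denote the block of $\refinement$ containing $x_{i_k}$ for whatever indexing is convenient, and I will work with a strict mergeon representative (see \Cref{apx:strict_mergeon}), since any two mergeons differ on a null set and the statement is essentially about merge values at the sampled points.

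For the \emph{lower bound} on $\estmergeon_{ij}$, I would fix $\mu = M(x_i,x_j)$ and let $R_i, R_j \in \refinement$ be the blocks containing $x_i$ and $x_j$ respectively. By the mergeon definition, for any $\mu' < \mu$ there is a cluster at level $\mu'$ containing non-null pieces of both $R_i$ and $R_j$, so by Claim~\ref{claim:path2} there is a $(\mu' - 2\Delta\lipschitz)$-path of blocks $R_i = R^{(1)}, R^{(2)}, \ldots, R^{(t)} = R_j$ in $\refinement$. The sampling hypothesis lets me pick some $x_{i_k} \in R^{(k)} \cap S$, with $i_1 = i$ and $i_t = j$. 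Along the path, $W \geq \mu' - 2\Delta\lipschitz$ almost everywhere on $R^{(k)} \times R^{(k+1)}$, and since both blocks lie inside a single Lipschitz block-pair of $\blocks$, continuity upgrades this to everywhere on the (closure of the) block rectangle; in particular $P_{i_k i_{k+1}} = W(x_{i_k},x_{i_{k+1}}) \geq \mu' - 2\Delta\lipschitz$. The estimator guarantee then yields $\hat P_{i_k i_{k+1}} \geq \mu' - 2\Delta\lipschitz - \epsilon$ for every $k$, so that $\estmergeon_{ij} \geq \mu' - 2\Delta\lipschitz - \epsilon$. Letting $\mu' \uparrow \mu$ gives the desired direction with slack only $2\Delta\lipschitz + \epsilon$.

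For the \emph{upper bound}, let $p = \langle p_1, \ldots, p_t\rangle$ be a path in $[n]$ achieving $\estmergeon_{ij}$, so that $\hat P_{p_k p_{k+1}} \geq \estmergeon_{ij}$ for all $k$. Using the $\epsilon$-closeness gives $W(x_{p_k}, x_{p_{k+1}}) \geq \estmergeon_{ij} - \epsilon$. Now let $R^{(k)}$ be the refinement block containing $x_{p_k}$. Because $R^{(k)}, R^{(k+1)}$ lie in a single Lipschitz block pair of $\blocks$, $W$ varies by at most $\lipschitz(|R^{(k)}| + |R^{(k+1)}|) \leq 4\Delta\lipschitz$ on $R^{(k)} \times R^{(k+1)}$, so $W \geq \estmergeon_{ij} - \epsilon - 4\Delta\lipschitz$ everywhere on $R^{(k)} \times R^{(k+1)}$. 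Hence $\langle R^{(1)}, \ldots, R^{(t)}\rangle$ is a $(\estmergeon_{ij} - \epsilon - 4\Delta\lipschitz)$-path in $\refinement$, and Claim~\ref{claim:path1} implies that $C = \bigcup_k R^{(k)}$ is connected at this level. Because $C$ contains both $x_i$ and $x_j$, this forces the merge height of $x_i,x_j$ to be at least $\estmergeon_{ij} - \epsilon - 4\Delta\lipschitz$, so $M(x_i,x_j) \geq \estmergeon_{ij} - \epsilon - 4\Delta\lipschitz$, which is the second inequality.

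The main technical nuisance is the ``almost everywhere'' character of the mergeon and of the $\lambda$-path condition: the pointwise values $W(x_{i_k}, x_{i_{k+1}})$ and $M(x_i,x_j)$ are not literally controlled by the a.e.\ guarantees. I plan to resolve this in two places using piecewise Lipschitz continuity (which turns a.e.\ lower bounds on $W$ within a block-pair into genuine pointwise lower bounds, since the superlevel set is dense and $W$ is continuous there) and by taking the strict mergeon representative from \Cref{apx:strict_mergeon}, so that the deduction ``$C$ is connected at level $\lambda$ implies $M(x_i,x_j) \geq \lambda$ for $x_i,x_j \in C$'' can be made pointwise rather than almost everywhere.
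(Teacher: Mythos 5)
Your overall architecture matches the paper's: both proofs split into a lower and upper bound on $\estmergeon_{ij}$, both build block-paths via Claims~\ref{claim:path1} and~\ref{claim:path2}, and both convert between block-paths and sample-paths using the density hypothesis on $S$ together with the estimator's $\epsilon$-closeness. The constants are also consistent with the stated bound. However, there is a difference in anchoring that introduces a genuine gap.

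The paper anchors on the intermediate quantity $\lambda^*$, defined purely from the block geometry as the largest level at which some cluster has positive-measure overlap with \emph{both} $R_i$ and $R_j$. It then sandwiches both $M$ (via Claim~\ref{claim:boundmergeon}) and $\estmergeon_{ij}$ near $\lambda^*$, and combines. You instead anchor directly on $\mu = M(x_i,x_j)$ and assert ``by the mergeon definition, for any $\mu' < \mu$ there is a cluster at level $\mu'$ containing non-null pieces of both $R_i$ and $R_j$.'' This is precisely the hypothesis of Claim~\ref{claim:path2}, and it does not follow from the pointwise value $M(x_i,x_j)$. Even with a strict mergeon, $M(x_i,x_j) \geq \mu'$ only tells you that $x_i$ and $x_j$ both lie in $\strictsection(\cluster)$ for some cluster $\cluster$ at level $\mu'$; it does \emph{not} give $\mu(\cluster \cap R_i) > 0$, since a strict representative can meet $R_i$ in a null set (e.g.\ just at $x_i$). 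The paper's choice of $\lambda^*$ bypasses this by working directly with positive-measure overlap, and Claim~\ref{claim:boundmergeon} then feeds the correct a.e.\ bound on $M$ over $R_i \times R_j$ back into the argument. The same subtlety affects your upper-bound conclusion: Claim~\ref{claim:path1} gives $M \geq \lambda$ a.e.\ on $C \times C$, and even for the strict mergeon you cannot conclude $M(x_i,x_j) \geq \lambda$ from $x_i, x_j \in C$ without first knowing that $x_i, x_j$ land in $\strictsection(\cluster)$ itself, not merely in $C$, which is only contained in $\cluster$ up to a null set. (The paper's own proof has one pointwise-from-a.e.\ step at the very end, so a mild amount of looseness is expected here, but your version relies on it in two places, and in the lower bound it is used to derive the hypothesis of a lemma rather than just a numerical bound.) To make the proof rigorous along the lines you sketch, you would need to either (i) switch to the paper's $\lambda^*$ anchor and invoke Claim~\ref{claim:boundmergeon}, or (ii) show that the Lipschitz structure forces every (or almost every) point of $\strictsection(\cluster)$ in a refinement block to be a density point of $\cluster$ within that block — an additional argument you have not supplied.
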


The above holds for a fixed sample $S$ and thus a fixed edge probability matrix
$P$.  The following theorem considers random $\samplerv$ and $\edgeprobrv$. As
with the previous claims in this subsection, the proof of the theorem is in
\Cref{apx:estprob:proofs}.

\thmconditions*

\section{Neighborhood smoothing methods}
\label{apx:smoothing}

\newcommand{\frobnorm}[1]{\|#1\|_F}
\newcommand{\given}[1][]{\:#1|\:}
\newcommand{\mats}[1]{\left[#1\right]}
\newcommand{\mat}[1]{\left(#1\right)}
\newcommand{\neighborhood}{\mathcal{N}}
\newcommand{\powerset}[1]{2^{#1}}
\newcommand{\twonorm}[1]{\left\|#1\right\|_2}
\newcommand{\without}{\setminus}

\Cref{thm:conditions} states a sufficient condition under which an estimator
$\estedgeprobrv$ of the edge probability matrix leads to a consistent clustering
algorithm. In particular, if the graphon $W$ is piecewise Lipschitz, and if for
any $\epsilon > 0$,
\[
    \lim_{n \to \infty}
    \prob(\max_{i \neq j} |\rv{P}_{ij} - \hat{\rv{P}}_{ij}| > \epsilon) = 0
\]
then one consistent clustering algorithm is that which applies single linkage
clustering to the estimate $\estedgeprobrv$. In this section, we analyze a
modification of the edge probability estimator introduced in\cite{Zhang2015-ik}
and show that it satisfies the above condition. Combining this result with
\Cref{thm:conditions} shows that the single linkage clustering applied this
estimate of the edge probability matrix is a consistent clustering algorithm.

\subsection{Claims}

There are two major components to the analysis. First, we show that, with high
probability, each neighborhood $\neighborhood_{i \without j}$ consists only of
nodes $i'$ for which $\inftynorm{\edgeprobrv_i - \edgeprobrv_{i'}} < \epsilon$,
with $\epsilon \to 0$ as $n \to \infty$; The formal statement of this result is
made in Claims~\ref{claim:two_neighborhood}~and~\ref{claim:infty_neighborhood}
below.  This is an extension of the analysis in \cite{Zhang2015-ik}, where it is
shown that the neighborhood $\neighborhood_i$ consists only of nodes $i'$ for
which $\nicefrac{1}{n}\twonorm{\edgeprobrv_i - \edgeprobrv_{i'}} < \epsilon$,
with $\epsilon \to 0$ as $n \to \infty$. The procedure for proving this result
parallels that of \cite{Zhang2015-ik}, however, the modifications we make to the
algorithm -- namely, the deletion of a node from the graph -- mean that the
claims in that paper do not directly transfer. Much of the analysis consists of
making the minor changes necessary to show that analogous versions of the
claims in \cite{Zhang2015-ik} hold for our modified algorithm.

The second part of the analysis uses concentration inequalities to derive the
consistency result. In particular, Claim~\ref{claim:smoothing_close}
shows that smoothing within neighborhoods produces an estimate of the edge
probability matrix which is close within max-norm, provided that each
neighborhood consists only of nodes which are sufficiently similar in the sense
described above. Theorem~\ref{thm:smoothing} puts these two claims together to
derive the main result.

The technical details of the analysis are in \Cref{apx:smoothing:proofs}. 
In particular, all proofs of the following claims can be found there.

\subsubsection{Sample requirements}

\newcommand{\msq}[2]{\left[ \left(\partial_{#2} {#1} \right)^2 \right]}
\newcommand{\nmsq}[2]{\left[ \left(\partial_{#2} {#1} \right)^2 / n \right]}
\newcommand{\mna}[1][j]{\nmsq{\adjacencyrv}{#1}}
\newcommand{\mnp}[1][j]{\nmsq{P}{#1}}

The analysis will require the notion of a block partition and
$\Delta$-refinement as defined in \Cref{defn:block_partition} and
\Cref{defn:delta_refinement}, respectively, both in
\Cref{apx:estprob:definitions}. If $\refinement$ is a block partition and $x \in
[0,1]$, we write $\refinement(x)$ to denote the block $R \in \refinement$ which
contains $x$. Some of the following results will include an assumption that
there are ``enough'' samples in each block of a partition. We formalize this
notion as follows:

\begin{defn}
    If $\rv{S}$ is an ordered set of random samples from the uniform
    distribution on the unit interval, and $\blocks$ is any block partition, we
    say that $\rv{S}$ is a \term{$\rho$-dense sample in $\blocks$} if for any
    block $B \in \blocks$, 
    \[
        \frac{|B \cap \rv{S}|}{n} > (1 - \rho)\mu(B).  
    \]
\end{defn}

If we fix any $\rho$ and a $\Delta$-block partition, a random sample $\rv{S}$
will be $\rho$-dense with high probability as the size of the sample $n \to
\infty$, as the following result shows:

\begin{restatable}{claim}{claimsampledensity}
    \label{claim:sample_density}
    Let $\blocks$ be a $\Delta$-block partition. Let $\rho < 1$. Then with
    probability $1 - \frac{2}{\Delta}e^{-2n\rho^2\Delta^2}$, $\samplerv$ is a
    $\rho$-dense sample of $\blocks$. That is, for all $B \in \blocks$
    simultaneously,
    \[
        \frac{|B \cap \samplerv|}{n} > (1 - \rho)|B|.
    \]
\end{restatable}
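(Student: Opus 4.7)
The plan is to apply Hoeffding's inequality to each block independently and then to union-bound over all blocks in the partition.

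First I would fix an arbitrary block $B \in \blocks$ and observe that $|B \cap \samplerv|$ is a sum of $n$ independent Bernoulli random variables $\rv{Z}_1, \ldots, \rv{Z}_n$, where $\rv{Z}_i = \mathbf{1}[\rv{x}_i \in B]$, each with success probability $\mu(B) = |B|$. Hence $\mathbb{E}[|B \cap \samplerv|/n] = |B|$. By Hoeffding's inequality applied to these $[0,1]$-bounded variables, one obtains the one-sided bound
\[
    \prob\!\left(\frac{|B \cap \samplerv|}{n} \leq (1 - \rho)|B|\right)
    = \prob\!\left(\frac{|B \cap \samplerv|}{n} - |B| \leq -\rho|B|\right)
    \leq e^{-2n \rho^2 |B|^2}.
\]
Since every block in a $\Delta$-block partition has measure at least $\Delta$ (or at least $\Delta/2$, with the factor absorbed into the count below), this is at most $e^{-2n\rho^2\Delta^2}$, giving the desired per-block tail bound.

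Next I would take a union bound over all $B \in \blocks$. Because each block has measure at least on the order of $\Delta$, and the blocks partition $[0,1]$, there are at most $2/\Delta$ of them. Hence the probability that some block in $\blocks$ fails the density condition is at most $\frac{2}{\Delta} e^{-2n\rho^2\Delta^2}$, and the complement yields the stated probability that the inequality
\[
    \frac{|B \cap \samplerv|}{n} > (1-\rho)|B|
\]
holds simultaneously for every $B \in \blocks$.

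There is no real obstacle here: the only things to be slightly careful about are (i) the direction of the Hoeffding inequality (we only need the lower tail, so the probability does not pick up an extra factor of $2$), (ii) using $|B| \geq \Delta$ to pass from $e^{-2n\rho^2|B|^2}$ to the uniform bound $e^{-2n\rho^2\Delta^2}$, and (iii) counting the blocks to get the prefactor $2/\Delta$. The claim then follows immediately.
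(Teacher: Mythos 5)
Your proof uses the same strategy as the paper's: a per-block Hoeffding bound followed by a union bound over the $O(1/\Delta)$ blocks, so the approaches are essentially identical. The main line of your argument (Hoeffding with deviation $\rho|B|$, then invoking $|B|\geq\Delta$) is sound. One correction: the parenthetical fallback ``or at least $\Delta/2$, with the factor absorbed into the count below'' does not work. If blocks are only guaranteed to have measure at least $\Delta/2$, the per-block bound becomes $e^{-2n\rho^2(\Delta/2)^2} = e^{-n\rho^2\Delta^2/2}$, which is weaker than the target $e^{-2n\rho^2\Delta^2}$ by a factor of four \emph{in the exponent}. A change in the exponent cannot be absorbed into a polynomial prefactor coming from the block count, so the claimed constant genuinely requires $|B|\geq\Delta$. (The paper's own proof, which sets $\epsilon=\rho\Delta$ and then appeals to $|B|\leq\Delta$, is no cleaner on this point; to obtain the stated rate one must read the $\Delta$-block condition as giving $|B|\geq\Delta$.)
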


\subsubsection{The adjacency column distance}

In the previous section detailing our modified neighborhood smoothing algorithm,
we introduced the following distance $d_j(i,i')$ between columns of the
adjacency matrix.  For a square matrix $M$, let $\partial_v M$ denote the matrix
obtained by replacing the $v$-th row and column of the matrix $M$ with zeros.
Then we define

\[
    d_j(i,i') = \max_{k \neq i,i'} \left| \mna_{ik} - \mna_{i'k} \right|.
\]

This pattern -- the maximum elementwise difference of normalized squared
matrices -- will reoccur in the analysis. We therefore define:
\begin{defn}
    Let $M_1$ and $M_2$ be $n \times n$ matrices. We define
\[
    D(M_1, M_2) = 
    \max_{i,j}
    \left|
    \left[ M_1^2 / n \right]_{ij} -
    \left[ M_2^2 / n \right]_{ij}
    \right|.
\]
\end{defn}

A key observation in the analysis of \cite{Zhang2015-ik} is that if
$\adjacencyrv$ is sampled from $P$, then $\prob(D(\adjacencyrv, P) < \epsilon)
\to 0$ as $n \to \infty$. In our analysis, however, we will work with
$\partial_k \adjacencyrv$ and $\partial_k \edgeprob$, which are the adjacency
and edge probability matrices with the $k$th row and column set to zero. We
therefore have a slightly modified claim:

\newcommand{\rtlog}{\sqrt{\frac{(C_2 + 2) \log n}{n}}}
\begin{restatable}{claim}{claimdifferenceadjprob}
    \label{claim:difference_adj_prob}
    Let $P$ be an arbitrary $n \times n$ edge probability matrix.
    Let $C_2 > 0$ be an arbitrary constant and suppose $n$ is large enough that
    $\rtlog \leq 1$. Then, with probability $1 - 2n^{-C_2/4}$ over random
    adjacency matrices $\adjacencyrv$ sampled from $P$, for all $k \in [n]$
    simultaneously,
    \[
        D(\partial_k \adjacencyrv, \partial_k P) = 
        \max_{i \neq j}
        \left| \mna[k]_{ij} - \mnp[k]_{ij} \right| \leq 
        \rtlog + \frac{6}{n}.
    \]
\end{restatable}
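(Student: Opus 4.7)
The plan is to condition on the latent sample $S=(x_1,\ldots,x_n)$ so that $P$ becomes deterministic and the above-diagonal entries of $\adjacencyrv$ are independent Bernoullis with $E[\adjacencyrv_{ij}]=P_{ij}$; I will then apply Hoeffding's inequality entrywise and finish with a union bound over the $n^3$ triples $(k,i,j)$.

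First I would dispose of the trivial cases $i=k$ or $j=k$: for such triples both $[(\partial_k\adjacencyrv)^2/n]_{ij}$ and $[(\partial_k P)^2/n]_{ij}$ vanish identically, because the $k$-th row and column of $\partial_k\adjacencyrv$ and $\partial_k P$ are zero. For the remaining case $i,j,k$ pairwise distinct I would expand
\[
    [(\partial_k\adjacencyrv)^2/n]_{ij}-[(\partial_k P)^2/n]_{ij}
    = \frac{1}{n}\sum_{\ell\notin\{i,j,k\}}\bigl(\adjacencyrv_{i\ell}\adjacencyrv_{\ell j}-P_{i\ell}P_{\ell j}\bigr) - \frac{P_{ii}P_{ij}+P_{ij}P_{jj}}{n}.
\]
The deterministic correction term on the right has absolute value at most $2/n$; it appears because the $\ell=i$ and $\ell=j$ contributions vanish on the adjacency side (self-edges give $\adjacencyrv_{ii}=\adjacencyrv_{jj}=0$) but not on the $P$ side.

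For the random sum, the key observation is that, conditionally on $S$, the products $\adjacencyrv_{i\ell}\adjacencyrv_{\ell j}$ are mutually independent across $\ell\in[n]\setminus\{i,j,k\}$: for distinct $\ell_1,\ell_2$ outside $\{i,j,k\}$ the four edges $\{i,\ell_1\},\{\ell_1,j\},\{i,\ell_2\},\{\ell_2,j\}$ are pairwise distinct, and distinct edges of $\adjacencyrv$ are independent given $S$. Each product lies in $[0,1]$ with mean $P_{i\ell}P_{\ell j}$, so Hoeffding's inequality yields
\[
    \prob\!\left(\left|\sum_{\ell\notin\{i,j,k\}}(\adjacencyrv_{i\ell}\adjacencyrv_{\ell j}-P_{i\ell}P_{\ell j})\right|>ns\right)\le 2\exp\!\left(-\frac{2n^2 s^2}{n-3}\right).
\]
Setting $s=\rtlog$ bounds the right-hand side by $2n^{-2(C_2+2)}$ for each fixed triple, and a union bound over the at most $n^3$ triples gives combined failure probability $2n^{-2C_2-1}\le 2n^{-C_2/4}$.

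On the good event, combining the Hoeffding tail $\rtlog$ with the $2/n$ deterministic correction (and folding any lower-order slack from the $n-3$ versus $n$ discrepancy into a conservative constant) produces the stated bound $\rtlog+6/n$. The argument is essentially routine once the setup is in place; the only substantive point is the verification of conditional independence of the products $\adjacencyrv_{i\ell}\adjacencyrv_{\ell j}$ across $\ell$, which hinges on explicitly excluding $\ell\in\{i,j\}$ from the sum so that different values of $\ell$ correspond to disjoint pairs of edges. Note that this is parallel to, but much simpler than, the independence argument needed later for the neighborhood-smoothed estimator, where the dependence between columns of $\adjacencyrv$ and membership in the smoothing neighborhoods has to be broken by the $\partial_j$ deletion.
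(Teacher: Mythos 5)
Your proof is correct, but it takes a genuinely different route from the paper's. The paper does not re-derive the concentration bound: it simply invokes Lemma~5.2 of \cite{Zhang2015-ik}, which gives $D(\adjacencyrv, P) \leq \sqrt{(C_2+2)\log n / n} + 4/n$ with probability $1 - 2n^{-C_2/4}$, and then observes (via the deterministic inequality in Claim~\ref{claim:without_row_square}, namely $\bigl|[M^2]_{ij} - [(\partial_k M)^2]_{ij}\bigr| \leq 1$ for symmetric matrices with entries in $[0,1]$) that passing from $\adjacencyrv^2/n$ to $(\partial_k\adjacencyrv)^2/n$ and from $P^2/n$ to $(\partial_k P)^2/n$ perturbs each entry by at most $1/n$, contributing the extra $2/n$ and giving $6/n$ overall. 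This explains the otherwise mysterious constant $6/n$: it is $4/n$ from the cited lemma plus $2/n$ from the row-deletion perturbation. Your argument is instead self-contained: you expand $(\partial_k\adjacencyrv)^2_{ij} - (\partial_k P)^2_{ij}$ directly, isolate the deterministic $\ell\in\{i,j\}$ terms (magnitude $\leq 2/n$), verify that the products $\adjacencyrv_{i\ell}\adjacencyrv_{\ell j}$ are independent across $\ell\notin\{i,j,k\}$ because they use pairwise disjoint edges, and apply Hoeffding plus a union bound over $O(n^3)$ triples. Your calculation in fact yields the tighter bound $\sqrt{(C_2+2)\log n/n} + 2/n$ with failure probability $2n^{-2C_2-1} \leq 2n^{-C_2/4}$, so the stated claim follows with room to spare. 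The paper's route is shorter and modular (and is essentially forced if one wants to inherit the Zhang et al.\ analysis wholesale); yours is more elementary, avoids the external citation, and makes the independence structure explicit — which is pedagogically useful given that the whole point of the $\partial_j$ modification elsewhere in the paper is precisely to engineer such independence.
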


\subsubsection{Composition of neighborhoods}

Another key step in the analysis of \cite{Zhang2015-ik} is that, with high
probability, for any $i'$ in the neighborhood of node $i$, $\nicefrac{1}{n}
\twonorm{P_{i'} - P_{i}^2} = O(\sqrt{\nicefrac{\log n}{n}})$. We derive a
similar result for our modified neighborhoods:

\begin{restatable}{claim}{claimtwoneighborhood}
    \label{claim:two_neighborhood}
    Let $W \in \smoothgraphons$ and let $\refinement$ be a $\Delta$-refinement
    of $\blocks$. Suppose $S$ is a $\rho$-dense sample of $\refinement$ and let
    $P$ be the induced edge probability matrix.  Suppose $A$ is an adjacency
    matrix such that $D(\partial_k A, \partial_k P) < \epsilon$ for every $k \in
    [n]$.  Pick $0 < h \leq \rho \Delta$, and construct for every pair $i,j$ a
    neighborhood $\neighborhood_{i \without j}$ as described above, including
    all nodes within the $h$-th quantile.  Then for all $i,j$ and any $i' \in
    \neighborhood_{i \without j}$ we have
    \[
        \frac{1}{n} \twonorm{P_i - P_{i'}}^2 \leq
        6\lipschitz\Delta + 8\epsilon + \frac5n.
    \]
\end{restatable}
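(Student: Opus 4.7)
The plan is to first control the $h$-th quantile $q_{i\setminus j}(h)$ by producing many nodes $i''$ whose distance $d_j(i, i'')$ is provably small, and then to convert the resulting $\ell_\infty$-control on row-differences of $(\partial_j P)^2/n$ into the desired $\ell_2$-bound on $P_i - P_{i'}$ by introducing reference columns. Throughout, let $d_j^P$ denote the analogue of $d_j$ computed with $P$ in place of $A$; the hypothesis $D(\partial_j A, \partial_j P) < \epsilon$ together with the triangle inequality gives $|d_j(i,i') - d_j^P(i,i')| \leq 2\epsilon$.

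First I would show that any node $i''$ whose latent label $x_{i''}$ shares a refinement block with $x_i$ satisfies $\|P_i - P_{i''}\|_\infty \leq 2\lipschitz\Delta$, by piecewise Lipschitzness of $W$. Since each entry of $(\partial_j P)^2/n$ is, up to the missing $j$-th coordinate, a normalized inner product of a row of $P$ with a column of $P$ (each of $\ell_\infty$-norm at most $1$), this uniform bound on the row-difference yields $d_j^P(i, i'') \leq 2\lipschitz\Delta$ and hence $d_j(i, i'') \leq 2\lipschitz\Delta + 2\epsilon$. Because $S$ is $\rho$-dense on $\refinement$ and each block of $\refinement$ has measure at least $\Delta$, the refinement block containing $x_i$ holds at least $(1-\rho)\Delta n$ samples, and at least $(1-\rho)\Delta n - 2$ of them are distinct from $\{i,j\}$. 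For $h \leq \rho\Delta$ (and $\rho < 1/2$, which we may assume), this count exceeds $h(n-2)$ once $n$ is large, so $q_{i\setminus j}(h) \leq 2\lipschitz\Delta + 2\epsilon$. Consequently, for every $i' \in \neighborhood_{i\without j}$ we obtain $d_j(i, i') \leq 2\lipschitz\Delta + 2\epsilon$ and therefore $d_j^P(i, i') \leq 2\lipschitz\Delta + 4\epsilon$.

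To finish, I would pick reference indices $k_1, k_2 \notin \{i,i',j\}$ with $x_{k_1}$ in the refinement block of $x_i$ and $x_{k_2}$ in that of $x_{i'}$; such indices exist by $\rho$-density. Using the decomposition
\[
    \tfrac{1}{n}\twonorm{P_i - P_{i'}}^2
    = \tfrac{1}{n}\langle P_i - P_{i'},\,P_{k_1} - P_{k_2}\rangle
    + \tfrac{1}{n}\langle P_i - P_{i'},\,(P_i - P_{k_1}) - (P_{i'} - P_{k_2})\rangle,
\]
the second summand has absolute value at most $4\lipschitz\Delta$ by piecewise Lipschitzness (together with $|P_{i\ell}-P_{i'\ell}| \leq 1$ entrywise), while the first summand equals $([P^2/n]_{ik_1} - [P^2/n]_{i'k_1}) - ([P^2/n]_{ik_2} - [P^2/n]_{i'k_2})$. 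Each of these two brackets differs from its $\partial_j P$ counterpart by at most $1/n$ (the discrepancy being $(P_{ij}-P_{i'j})P_{jk_\alpha}/n$), so each is bounded in absolute value by $d_j^P(i,i') + 1/n \leq 2\lipschitz\Delta + 4\epsilon + 1/n$. Collecting the contributions yields the claimed inequality, modulo tightening of constants through more careful bookkeeping of the Lipschitz errors.

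The main obstacle is the second step, since $d_j^P(i,i')$ only controls coordinate-wise differences of a single row of $(\partial_j P)^2/n$, whereas the target quantity is the squared $\ell_2$-norm of the full row-difference of $P$. The device of substituting reference columns $P_{k_1}, P_{k_2}$ that are close in $\ell_\infty$ to $P_i$ and $P_{i'}$ linearizes the otherwise quadratic quantity $\langle P_i - P_{i'}, P_i - P_{i'}\rangle$ into a difference of two entries of $P^2/n$, each directly bounded by $d_j^P$. The most delicate bookkeeping is in guaranteeing that $k_1$ and $k_2$ may be chosen distinct from $\{i,i',j\}$ (which is where $\rho$-density and $h \leq \rho\Delta$ enter) and in tracking the $O(1/n)$ corrections coming from the deletion of row/column $j$.
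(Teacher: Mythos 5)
Your proof is correct and follows essentially the same route as the paper: your quantile argument reproduces \Cref{claim:distance_upper_bound} (bounding $q_{i\setminus j}(h)$ by the largest $d_j$ within $i$'s refinement block), and your reference-column decomposition $P_i - P_{i'} = (P_{k_1} - P_{k_2}) + \bigl[(P_i - P_{k_1}) - (P_{i'} - P_{k_2})\bigr]$ is algebraically the same device used in the proof of \Cref{claim:distance_lower_bound}, which substitutes diagonal entries $[(\partial_j P)^2]_{ii}$ by off-diagonal entries $[(\partial_j P)^2]_{i\tilde i}$ before invoking $d_j$. The only discrepancy is cosmetic: your Lipschitz term comes out as $8\lipschitz\Delta$ rather than the stated $6\lipschitz\Delta$ because you take the block diameter from \Cref{defn:delta_refinement} to be at most $2\Delta$, whereas the paper's proofs of \Cref{claim:distance_lower_bound} and \Cref{claim:distance_upper_bound} tacitly use diameter at most $\Delta$ --- that slippage is in the paper's own bookkeeping, not in your argument.
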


Additionally, we prove that neighborhoods are composed of nodes whose
corresponding columns of $P$ are close in $\infty$-norm. This follows from the
previous claim after leveraging the piecewise Lipschitz condition.

\begin{restatable}{claim}{claiminftyneighborhood}
    \label{claim:infty_neighborhood}
    Let $W \in \smoothgraphons$ and let $\refinement$ be a $\Delta$-refinement
    of $\blocks$. Suppose $S$ is a $\rho$-dense sample of $\refinement$. Then
    for any $\epsilon \geq 4 \rho \Delta^3 \lipschitz^2$, if $i \neq j$ are
    such that 
    $
        \frac{1}{n}\twonorm{P_i - P_j}^2 \leq \epsilon,
    $
    then 
    $
    \inftynorm{P_i - P_j}^2 \leq \frac{4\epsilon}{\rho \Delta}.
    $
\end{restatable}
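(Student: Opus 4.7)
The plan is to prove the contrapositive by contradiction: assuming $\inftynorm{P_i - P_j}^2 > 4\epsilon/(\rho\Delta)$, I will derive $\frac{1}{n}\twonorm{P_i - P_j}^2 > \epsilon$, contradicting the hypothesis. The intuition is that the piecewise Lipschitz structure of $W$ forces a large pointwise gap between the columns $P_i$ and $P_j$ to persist across an entire block of $\refinement$, and the density guarantee on $S$ ensures this block contains enough sample indices to make the persistent gap visible in the normalized $\ell^2$ norm.

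Concretely, I would first pick an index $k \in [n]$ realizing $\inftynorm{P_i - P_j}$, so that $|W(x_k, x_i) - W(x_k, x_j)| > 2\sqrt{\epsilon/(\rho\Delta)}$. Let $R \in \refinement$ be the block containing $x_k$. Since $\refinement$ is a $\Delta$-refinement of $\blocks$, $R$ sits inside a single block of $\blocks$, so the Lipschitz property is available on $R \times B$ for any $B \in \blocks$. Using $|x - x_k| \leq \mu(R) \leq 2\Delta$ for every $x \in R$, I obtain $|W(x, x_i) - W(x_k, x_i)| \leq 2\Delta \lipschitz$ and the analogous bound with $x_j$. Two applications of the triangle inequality then yield, for every $k'$ with $x_{k'} \in R \cap S$,
\[
|P_{k'i} - P_{k'j}| \;\geq\; 2\sqrt{\epsilon/(\rho\Delta)} \;-\; 4\Delta \lipschitz.
\]
The hypothesis $\epsilon \geq 4\rho\Delta^3\lipschitz^2$ is precisely what guarantees $\sqrt{\epsilon/(\rho\Delta)} \geq 2\Delta\lipschitz$, so the Lipschitz slack cannot wipe out the gap and each such $k'$ contributes at least a fixed positive amount to $\twonorm{P_i - P_j}^2$.

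Finally I would invoke the $\rho$-density of $S$ in $\refinement$: since $\mu(R) \geq \Delta$, we get $|R \cap S| \geq (1-\rho)\Delta n$, so summing the contributions from indices landing in $R$ alone already produces a normalized $\ell^2$ sum of order $\Delta \cdot (2\sqrt{\epsilon/(\rho\Delta)} - 4\Delta\lipschitz)^2$, which — after using the threshold condition to control the Lipschitz slack — exceeds $\epsilon$ and contradicts the hypothesis.

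The main obstacle is purely arithmetic: the Lipschitz error $4\Delta\lipschitz$ must not dominate the pointwise gap $2\sqrt{\epsilon/(\rho\Delta)}$ when propagated across $R$. The condition $\epsilon \geq 4\rho\Delta^3\lipschitz^2$ is exactly the threshold that makes the subtraction yield a lower bound still comparable to the original gap, and once this calibration is in place the density count closes the argument without any further technical difficulty — in particular, no interaction with the estimator or the adjacency matrix is required, since the claim is a purely geometric consequence of the Lipschitz assumption on $W$.
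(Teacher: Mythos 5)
Your plan follows exactly the strategy used in the paper's own proof -- prove the contrapositive, locate the index $k$ where the $\ell^\infty$ gap is achieved, propagate that gap via the Lipschitz condition to every sample in the same block of $\refinement$, and count those samples via $\rho$-density. The skeleton is sound. However, the arithmetic at the heart of your argument does not close, and this is a genuine gap rather than a cosmetic one. You correctly read the $\Delta$-refinement definition as giving $\mu(R) \le 2\Delta$, so each triangle-inequality step costs $2\Delta\lipschitz$, for a total slack of $4\Delta\lipschitz$. But the hypothesis $\epsilon \ge 4\rho\Delta^3\lipschitz^2$ only yields $2\sqrt{\epsilon/(\rho\Delta)} \ge 4\Delta\lipschitz$, i.e.\ the residual $2\sqrt{\epsilon/(\rho\Delta)} - 4\Delta\lipschitz$ is merely $\ge 0$. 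Your claim that each surviving entry "contributes at least a fixed positive amount" and that the resulting $\ell^2$ sum "exceeds $\epsilon$" does not follow: in the boundary case $\epsilon = 4\rho\Delta^3\lipschitz^2$ the residual vanishes, and for $\epsilon$ slightly above the threshold it can be arbitrarily small, so the normalized $\ell^2$ sum can be made arbitrarily small as well. The argument as written cannot produce the contradiction.

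For comparison, the paper's proof uses the intra-block bound $|x_k - x_{k'}| < \Delta$ (consistent with the convention $\Delta/2 \le \mu(R) \le \Delta$ that appears in the proof of \Cref{claim:sample_density} and in the proof of \Cref{thm:conditions}, but inconsistent with the literal definition of a $\Delta$-refinement stated earlier). Under that tighter bound the per-column slack is $\Delta\lipschitz \le \alpha/4$, so the residual is at least $\alpha/2$, each term contributes at least $\alpha^2/4 = \epsilon/(\rho\Delta)$, and the block count of order $\rho\Delta n$ pushes the sum past $\epsilon$. To make your version work with the $2\Delta$ block size you would need a stronger threshold (something like $\epsilon \ge 16\rho\Delta^3\lipschitz^2$) so that the slack is at most half the gap. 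You have essentially uncovered a small inconsistency in the paper's block-size conventions, but your write-up papers over the resulting arithmetic failure rather than resolving it; the step "after using the threshold condition \ldots exceeds $\epsilon$" is exactly where the proof does not go through.
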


\subsubsection{Main result}

Intuitively, if every neighborhood $\neighborhood_{i \without j}$ is composed of
nodes whose corresponding columns of $P$ are close in $\infty$-norm, and whose
$j$th elements are statistically independent, we may apply a concentration
inequality to conclude that the estimate $\estedgeprobrv_{ij}$ is close to
$\edgeprob_{ij}$. The following claim makes this precise.

\newcommand{\toterr}{\tilde{\epsilon}}
\newcommand{\allneighborhoods}[2]{\mathscr{N}_{#1 \without #2}}
\newcommand{\allcloseneighborhoods}[2]{\mathscr{N}_{#1 \without #2}^\epsilon}
\newcommand{\error}[2]{\ell_{#1 \without #2}}
\newcommand{\ourneighborhood}[2]{\neighborhood_{#1 \without #2}}

\begin{restatable}{claim}{claimsmoothingclose}
    \label{claim:smoothing_close}
    Let $W \in \smoothgraphons$ and let $\refinement$ be a $\Delta$-refinement
    of $\blocks$. Let $S \in [0,1]^n$ be fixed, and let $P$ be
    the edge probability matrix induced by $S$.  Assume that with probability $1
    - \delta$ over graphs generated from $P$, that for all $i \neq j$
    simultaneously, $\inftynorm{P_i - P_{i'}} < \epsilon$ for all $i' \in
    \ourneighborhood{i}{j}$. Then with probability at least $(1 - \delta)\left[1
    - 2n(n-1)e^{-2hnt^2}\right]$,
    \[
        \max_{ij} \left| \estedgeprobrv_{ij} - P_{ij} \right| < \epsilon + t.
    \]
\end{restatable}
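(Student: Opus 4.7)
The plan is to leverage the critical independence property engineered into the modified algorithm: because the distance $d_j(i,i')$ that defines $\ourneighborhood{i}{j}$ is computed from $\partial_j \adjacencyrv$ and therefore ignores the $j$-th row and column of the adjacency, the random set $\ourneighborhood{i}{j}$ is statistically independent of column $j$ of $\adjacencyrv$. Consequently, conditioning on any realization $\ourneighborhood{i}{j} = N$, the variables $\{\adjacencyrv_{i'j}\}_{i' \in N}$ remain independent Bernoulli with parameters $\{P_{i'j}\}_{i' \in N}$---exactly the setting Hoeffding's inequality requires.

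First I would intersect with the $(1-\delta)$-probability event that every neighborhood is $\epsilon$-good, i.e., $\inftynorm{P_i - P_{i'}} < \epsilon$ for every $i' \in \ourneighborhood{i}{j}$ and every $i \neq j$. Fix any ordered pair $(i,j)$ and condition on a realization $N$ of $\ourneighborhood{i}{j}$; since the neighborhood is defined by the $h$-quantile, $|N| \geq hn$ up to rounding. Hoeffding's inequality then yields
\[
    \prob\!\left(\left|\tfrac{1}{|N|}\sum_{i' \in N} \adjacencyrv_{i'j} - \tfrac{1}{|N|}\sum_{i' \in N} P_{i'j}\right| \geq t\right) \leq 2e^{-2hn t^2}.
\]
On the $\epsilon$-good event, every summand satisfies $|P_{i'j} - P_{ij}| < \epsilon$, so the second average is within $\epsilon$ of $P_{ij}$; the triangle inequality then gives $\left|\tfrac{1}{|N|}\sum_{i' \in N} \adjacencyrv_{i'j} - P_{ij}\right| < \epsilon + t$ with failure probability at most $2e^{-2hnt^2}$. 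The symmetric argument applies to the companion sum involving $\ourneighborhood{j}{i}$: that neighborhood is independent of column $i$ of $\adjacencyrv$, and the entries $\adjacencyrv_{ij'}$ lie in column $i$.

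Since $\estedgeprobrv_{ij}$ is the average of these two sums, each being within $\epsilon + t$ of $P_{ij}$ forces $|\estedgeprobrv_{ij} - P_{ij}| < \epsilon + t$. A union bound over the $n(n-1)$ ordered pairs $(i,j)$, accounting for the two sums per pair and the two-sided Hoeffding tail, produces an aggregate failure probability of at most $2n(n-1) e^{-2hnt^2}$; intersecting with the $\epsilon$-good event yields the claimed probability $(1-\delta)[1 - 2n(n-1)e^{-2hnt^2}]$.

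The sole delicate point is the independence step. It must be phrased via the tower property---or, equivalently, by noting that $\{\ourneighborhood{i}{j} = N\}$ lies in a sigma-algebra independent of $\{\adjacencyrv_{i'j}\}_{i'}$---so that Hoeffding applies conditionally for each $N$ and hence unconditionally after integrating out. This is precisely why the $\partial_j$ modification is essential for a max-norm bound: the original algorithm of \cite{Zhang2015-ik} has the neighborhood depending on the very entries being averaged, correlating the summands with their inclusion event and breaking the concentration step.
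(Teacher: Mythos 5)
Your proof is correct and follows essentially the same route as the paper's: condition on the $(1-\delta)$-probability event that all neighborhoods are $\epsilon$-good, further condition on the realization $\ourneighborhood{i}{j}=N$ so that the independence of $N$ from column $j$ (guaranteed by the $\partial_j$ modification) makes Hoeffding applicable, and finish with a union bound. If anything your version is slightly tidier in tracking the two-sided Hoeffding factor and the source of the $2n(n-1)$ count, whereas the paper's final display has a minor typographical slip in writing $1-2n(n-1)[1-e^{-2hnt^2}]$ where the intended quantity is $1-2n(n-1)e^{-2hnt^2}$.
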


We combine all of the previous claims to derive our main result.

\begin{restatable}{theorem}{thmsmoothing}
    \label{thm:smoothing}
    Let $W \in \smoothgraphons$.
    Let $\edgeprobrv$ be the random edge probability matrix arising by sampling
    a graph of size $n$ from $W$ according to the graphon sampling procedure,
    and denote by $\estedgeprobrv$ the estimated edge probability using our
    modified neighborhood smoothing method. Then
    \[
        \max_{i \neq j}
        \left|
        \estedgeprobrv_{ij} - \edgeprobrv_{ij}
        \right| = 
        O_\text{p}\left(
            \left[
                \frac{\log n}{n}
            \right]^{1/6}
        \right).
    \]
\end{restatable}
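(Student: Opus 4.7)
The plan is to combine the five claims of this section through a chain of high-probability conditioning steps, concluding with an optimization of the refinement scale $\Delta = \Delta(n)$ that yields the rate $O_\text{p}((\log n / n)^{1/6})$.

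First, I would fix a $\Delta$-refinement $\refinement$ of the Lipschitz block partition, choose $\rho \in (0,1)$, and set the neighborhood size parameter of the algorithm to $h = \rho \Delta$. By Claim~\ref{claim:sample_density}, the random sample $\samplerv$ is $\rho$-dense in $\refinement$ with probability $1 - (2/\Delta)e^{-2n\rho^2\Delta^2}$, which tends to one as long as $n\Delta^2 \gg \log n$. Independently, Claim~\ref{claim:difference_adj_prob} (applied with a sufficiently large constant $C_2$) guarantees $D(\partial_k\adjacencyrv, \partial_k\edgeprobrv) \leq \epsilon_{\text{adj}} := \sqrt{(C_2 + 2)\log n / n} + 6/n$ for all $k$ simultaneously, with probability $1 - 2n^{-C_2/4}$.

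On the intersection of these two events, Claim~\ref{claim:two_neighborhood} applies uniformly over every $i \neq j$ and every $i' \in \ourneighborhood{i}{j}$ to give the average squared column bound $\tfrac{1}{n}\|P_i - P_{i'}\|_2^2 \leq 6\lipschitz\Delta + 8\epsilon_{\text{adj}} + 5/n$. I would then invoke Claim~\ref{claim:infty_neighborhood} --- whose side condition $\epsilon \geq 4\rho\Delta^3 \lipschitz^2$ is automatic once $\Delta$ is small --- to upgrade this into the pointwise bound $\|P_i - P_{i'}\|_\infty^2 \leq 4(6\lipschitz\Delta + 8\epsilon_{\text{adj}} + 5/n)/(\rho\Delta) =: \epsilon_\infty^2$. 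Finally, Claim~\ref{claim:smoothing_close} applied with $\epsilon_\infty$ yields $\max_{i \neq j}|\estedgeprobrv_{ij} - \edgeprobrv_{ij}| \leq \epsilon_\infty + t$ with probability at least $1 - 2n(n-1)e^{-2hnt^2}$ (after a union bound over the $\binom{n}{2}$ pairs), provided $t$ is chosen so that $hnt^2 \gg \log n$.

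The main obstacle --- and the bulk of the calculation --- is the simultaneous optimization of $\Delta$, $\rho$, $h$, $C_2$, and $t$ as functions of $n$, so that every high-probability event succeeds with probability tending to one, the side condition of Claim~\ref{claim:infty_neighborhood} remains in force, and the resulting error $\epsilon_\infty + t$ is as small as possible. Holding $\rho$ constant, setting $h = \rho\Delta$ and $C_2$ a large constant, the bound decomposes into three competing contributions: a Lipschitz-smoothness term of order $\sqrt{\Delta}$, an adjacency-estimation term of order $\sqrt{\epsilon_{\text{adj}} / \Delta}$, and a smoothing-concentration term $t$ of order $\sqrt{\log n / (\Delta n)}$. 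Balancing these three terms at $\Delta$ of order $(\log n / n)^{1/3}$ yields the stated rate $O_\text{p}((\log n / n)^{1/6})$.
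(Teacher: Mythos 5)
Your plan identifies the right sequence of claims, but it misses the crucial \emph{two-scale} structure of the paper's proof, and this breaks your error decomposition. You apply Claim~\ref{claim:two_neighborhood} and Claim~\ref{claim:infty_neighborhood} with the \emph{same} $\Delta$-refinement. Plugging the two-norm bound $6\lipschitz\Delta + 8\epsilon_0 + 5/n$ from Claim~\ref{claim:two_neighborhood} (where $\epsilon_0 \sim \sqrt{\log n/n}$ denotes the adjacency-error bound from Claim~\ref{claim:difference_adj_prob}) into Claim~\ref{claim:infty_neighborhood} gives
\[
\inftynorm{P_i - P_{i'}}^2 \le \frac{4\bigl(6\lipschitz\Delta + 8\epsilon_0 + 5/n\bigr)}{\rho\Delta} = \frac{24\lipschitz}{\rho} + \frac{32\epsilon_0}{\rho\Delta} + \frac{20}{n\rho\Delta}.
\]
The first term $24\lipschitz/\rho$ is a \emph{constant}: the factor of $\Delta$ in the Lipschitz part of the numerator cancels the $\Delta$ in the denominator. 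Your decomposition labels this contribution ``order $\sqrt{\Delta}$,'' but it is $\Theta(1)$ for \emph{every} choice of $\Delta = \Delta(n)$, so no single-scale balancing argument can make the max-norm error vanish.

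The paper escapes this by introducing two nested refinements: a fine one $\refinement_2$ at scale $\Delta_2 = \alpha_2\sqrt{\log n/n}$ used in Claim~\ref{claim:two_neighborhood} (so that $6\lipschitz\Delta_2$ is of the same order as $\epsilon_0$ rather than dominant), and a coarser one $\refinement_\infty$ at scale $\Delta_\infty = \alpha_\infty(\log n/n)^{1/6}$ used as the denominator refinement in Claim~\ref{claim:infty_neighborhood}. Since $\refinement_2$ refines $\refinement_\infty$, $\rho$-density transfers upward (Claim~\ref{claim:sub_refinement}), and the infinity-norm bound squared becomes $O(\Delta_2/\Delta_\infty) = O\bigl((\log n/n)^{1/3}\bigr)$, yielding the stated $(\log n/n)^{1/6}$. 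Two further remarks: the side condition $\epsilon \geq 4\rho\Delta^3\lipschitz^2$ in Claim~\ref{claim:infty_neighborhood}, applied with $\epsilon \sim \sqrt{\log n/n}$, is not merely ``automatic for small $\Delta$'' but the \emph{binding upper bound} that caps $\Delta_\infty$ at $(\log n/n)^{1/6}$; and the constraint $h \le \rho\Delta_2 \sim \sqrt{\log n/n}$ from Claim~\ref{claim:two_neighborhood} is why the algorithm uses $h = C\sqrt{\log n/n}$ --- your choice $h = \rho\Delta$ at the coarser scale would be inconsistent with the method being analyzed.
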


\section{Experiments}
\label{apx:experiments}

In this section we apply the graph clustering method proposed in
\Cref{fig:algorithm} to real and synthetic data and discuss the results. The
purpose of these experiments is to help the reader develop an intuition for how
the clustering method works, and not necessarily to demonstrate superior
practical performance. As such, only limited comparisons are made to existing
clustering methods.

\subsection{Football dataset}

We first apply \Cref{fig:algorithm} to the football network from
\cite{Girvan2002-mo}. This is a undirected, unweighted graph representing the
games played between all NCAA Division I-A American college football teams
during the regular season in the year 2000. Each team appears as a node in the
graph; an edge exists between two teams if and only if they played one another.
The graph, shown in \Cref{fig:football_graph}, includes 115 nodes (teams) and
613 edges (games).

\begin{figure}[b!]
    \centering
    \includegraphics[scale=.3,angle=90,trim={1.8in 1.6in 1.8in 1.6in},clip]{./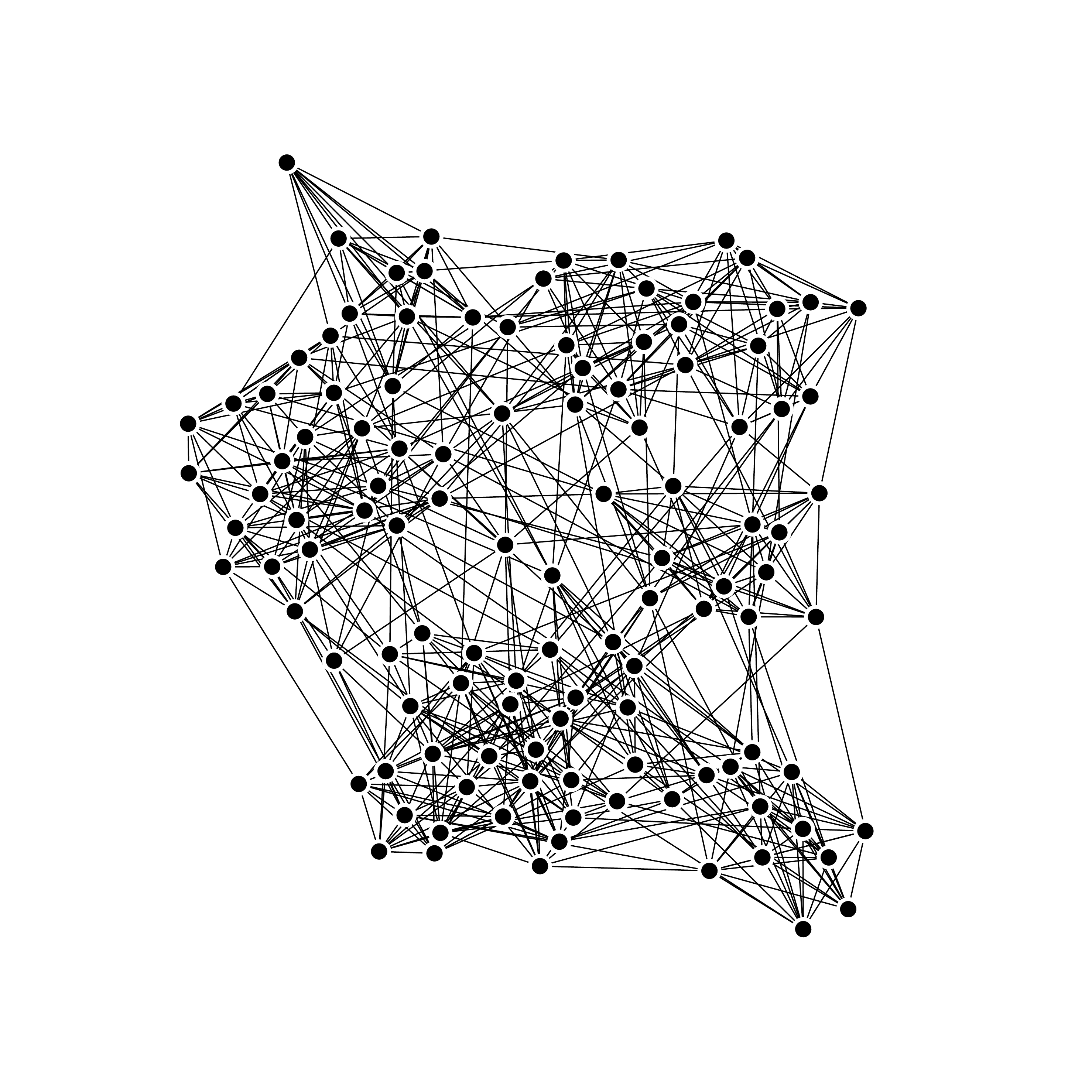}
    \caption{\label{fig:football_graph} The network of American college football
        games played during the 2000 regular season. Each node in the graph
        represents a team, and each edge represents a game played.}
\end{figure}

In this year, the teams in Division I-A were divided into eleven football
conferences, excepting five ``independent'' teams which belonged to no
conference in particular. The conferences and their associated teams are shown
in \Cref{table:conferences}. In general, an American college football team will
play the majority of its games against opponents belonging to its own conference
-- though the team will not usually play every other conference member in the
same season. The remaining games on the team's schedule are against
out-of-conference opponents. For instance, Ohio State belongs to the Big 10
conference, and in this particular season played conference opponents Iowa,
Illinois, Purdue, Michigan, Minnesota, Wisconsin, Michigan State, and Penn
State, as well as out-of-conference opponents Miami of Ohio, Arizona, and Fresno
State. Because of this connection between conference membership and the
scheduling of games, it is reasonable to assume that the graph of football games
will exhibit cluster structure. In particular, the clusters of the graph should
roughly correspond to the eleven football conferences. As such, we apply the
neighborhood smoothing and clustering method to this network and compare the
resulting clusters to the eleven football conferences.

\begin{table}[t]
\resizebox{\linewidth}{!}{%
\begin{tabular}{cccccc}
{\bf ACC} & {\bf Big 10} & {\bf Big 12} & {\bf Big East} & {\bf C-USA} & {\bf Independent}\\
Clemson & Illinois & Baylor & BostonCollege & AlabamaBirmingham & CentralFlorida\\
Duke & Indiana & Colorado & MiamiFlorida & Army & Connecticut\\
FloridaState & Iowa & IowaState & Pittsburgh & Cincinnati & Navy\\
GeorgiaTech & Michigan & Kansas & Rutgers & EastCarolina & NotreDame\\
Maryland & MichiganState & KansasState & Syracuse & Houston & UtahState\\
NorthCarolina & Minnesota & Missouri & Temple & Louisville & \\
NorthCarolinaState & Northwestern & Nebraska & VirginiaTech & Memphis & \\
Virginia & OhioState & Oklahoma & WestVirginia & SouthernMississippi & \\
WakeForest & PennState & OklahomaState &  & Tulane & \\
 & Purdue & Texas &  &  & \\
 & Wisconsin & TexasA\&M &  &  & \\
    &  & TexasTech &  &  & \\[2em]
{\bf MAC} & {\bf MW} & {\bf Pac 10} & {\bf SEC} & {\bf Sunbelt} & {\bf WAC}\\
Akron & AirForce & Arizona & Alabama & ArkansasState & BoiseState\\
BallState & BrighamYoung & ArizonaState & Arkansas & Idaho & FresnoState\\
BowlingGreenState & ColoradoState & California & Auburn & LouisianaLafayette & Hawaii\\
Buffalo & NevadaLasVegas & Oregon & Florida & LouisianaMonroe & LouisianaTech\\
CentralMichigan & NewMexico & OregonState & Georgia & MiddleTennesseeState & Nevada\\
EasternMichigan & SanDiegoState & SouthernCalifornia & Kentucky & NewMexicoState & Rice\\
Kent & Utah & Stanford & LouisianaState & NorthTexas & SanJoseState\\
Marshall & Wyoming & UCLA & Mississippi &  & SouthernMethodist\\
MiamiOhio &  & Washington & MississippiState &  & TexasChristian\\
NorthernIllinois &  & WashingtonState & SouthCarolina &  & TexasElPaso\\
Ohio &  &  & Tennessee &  & Tulsa\\
Toledo &  &  & Vanderbilt &  & \\
WesternMichigan &  &  &  &  &\\[1em]
\end{tabular}
}
\caption{\label{table:conferences} The teams belonging to each conference. Note
that the dataset from \cite{Girvan2002-mo} erroneously assigns Texas Christian
to C-USA. Texas Christian was in fact in the WAC in the year 2000, and we have
made this correction before performing our analysis.}
\end{table}

The input to the algorithm is the adjacency matrix of the football graph, shown
in \Cref{fig:football_smoothing}(a). Rearranging the rows and columns of the
adjacency matrix according to conference membership as shown in
\Cref{fig:football_smoothing}(b) reveals the network's cluster structure. Note
that the algorithm \emph{does not} have access to this rearranged adjacency or
the conference membership of each team; it is shown here only for the
convenience of the reader. Smoothing was performed with the neighborhood size
parameter $C = 0.09$; the parameter was chosen by hand to produce a good
clustering. The output $\hat P$ of the network smoothing step is shown in
\Cref{fig:football_smoothing}(c); this matrix after rearranging by conference
membership is shown in \Cref{fig:football_smoothing}(d).

The effect of neighborhood smoothing is to propagate trends in scheduling to all
teams within a conference. For instance, consider the ACC and Big East
conferences. As \Cref{fig:football_smoothing}(b) shows, in this season there
were five games played between these conferences. Most ACC teams played at least
one Big East opponent, but some ACC teams played no Big East opponent. After
applying neighborhood smoothing, however, the estimated probability that any ACC
team should play any Big East team is uniformly nonzero, as shown in
\Cref{fig:football_smoothing}. That is, even if an ACC team played no Big East
opponent, the algorithm smooths the estimate of the probability of such a game
to be consistent with the other teams in the conference.

In the clustering step, single-linkage clustering is applied to $\hat P$,
interpreting it as a similarity matrix. The resulting dendrogram is shown in
\Cref{fig:football_clustering}. In general the clustering recovers the
conferences with high accuracy. In addition, because the clustering is a tree
and not a flat partitioning of the teams, more structure is evident. For
instance, the clusters corresponding to the MW (Mountain West) conference and
the Pac 10 are joined at a high level. This is because the Mountain West and Pac
10 are comprised of teams from the western U.S. and who play one another
frequently as such.

\begin{figure}[p!]
    \centering
    \begin{subfigure}[t]{.45\textwidth}
        \centering
        \includegraphics[scale=.25]{./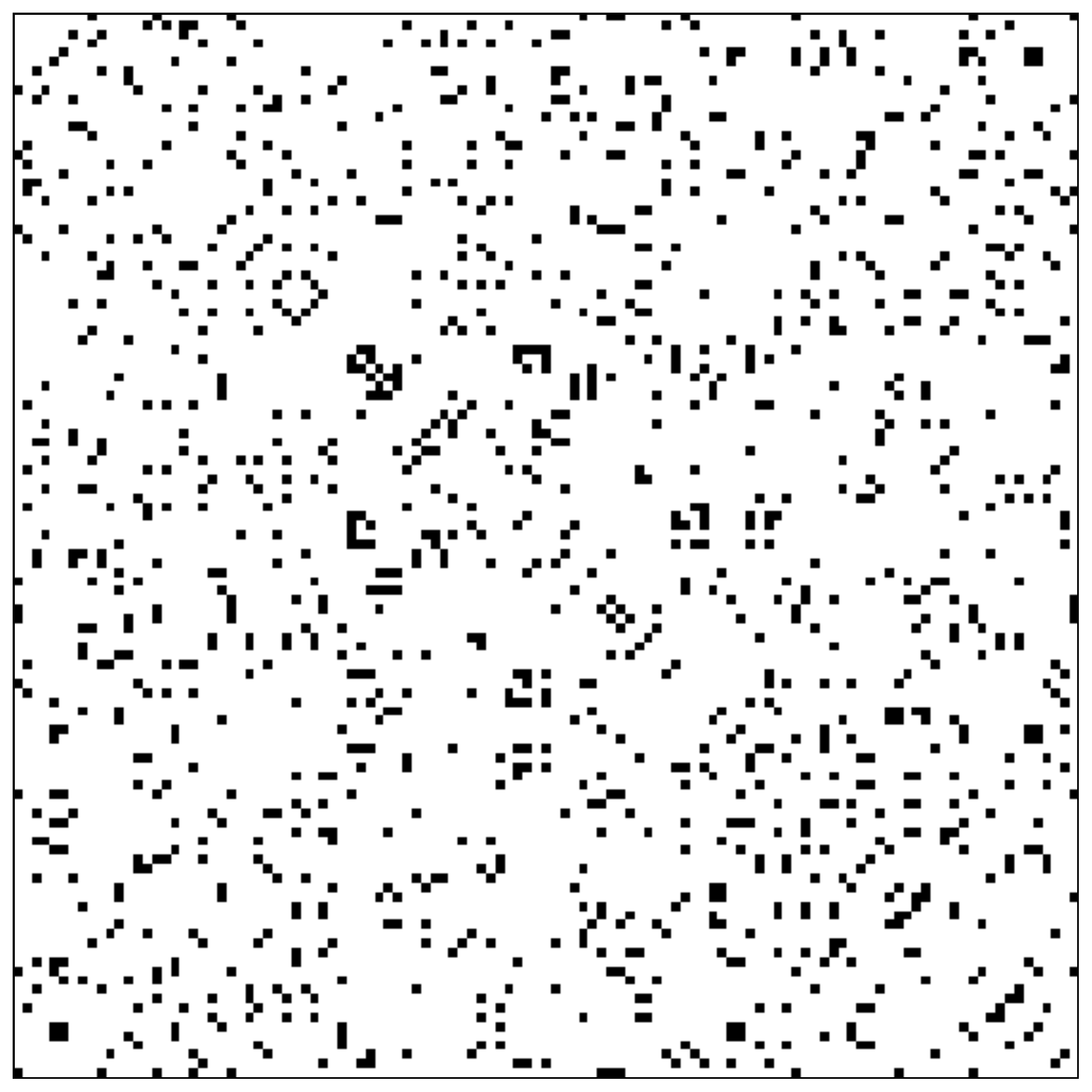}
        \caption{The input adjacency matrix.}
    \end{subfigure}\hfill%
    \begin{subfigure}[t]{.45\textwidth}
        \centering
        \includegraphics[scale=.25]{./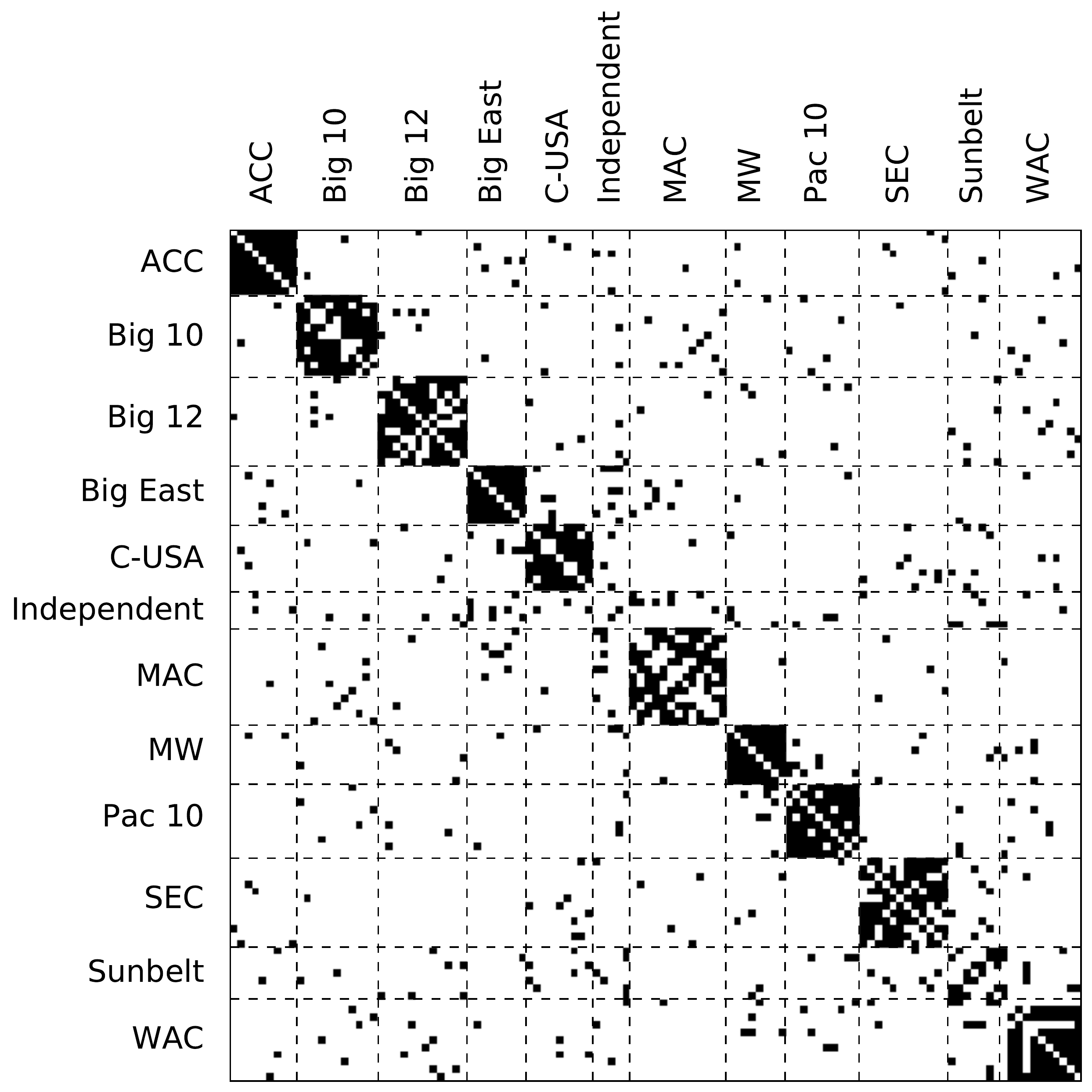}
        \caption{The input adjacency matrix, rearranged according to conference
        membership.}
    \end{subfigure}\\[1em]
    \begin{subfigure}[t]{.45\textwidth}
        \centering
        \includegraphics[scale=.25]{./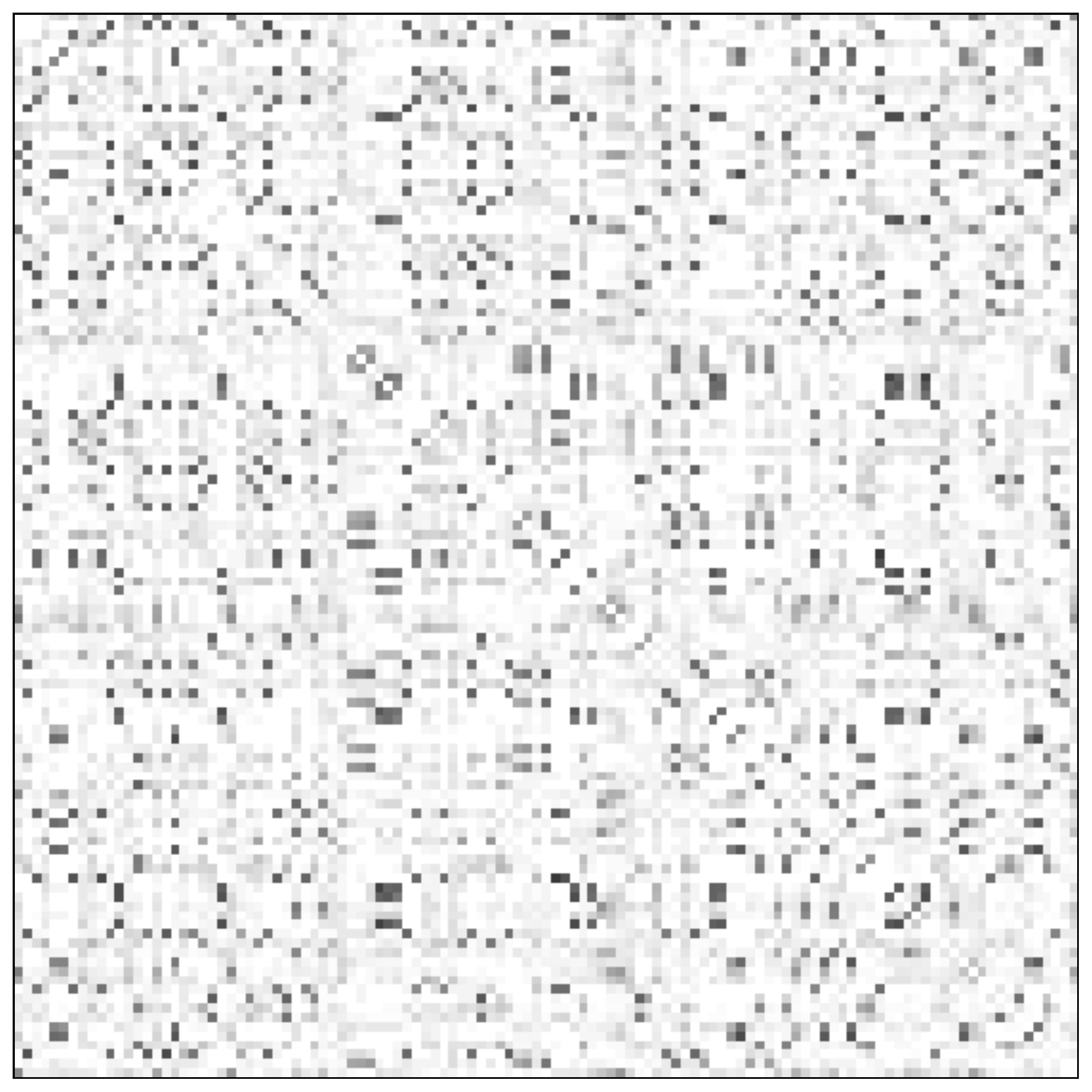}
        \caption{The result of neighborhood smoothing.}
    \end{subfigure}\hfill%
    \begin{subfigure}[t]{.45\textwidth}
        \centering
        \includegraphics[scale=.25]{./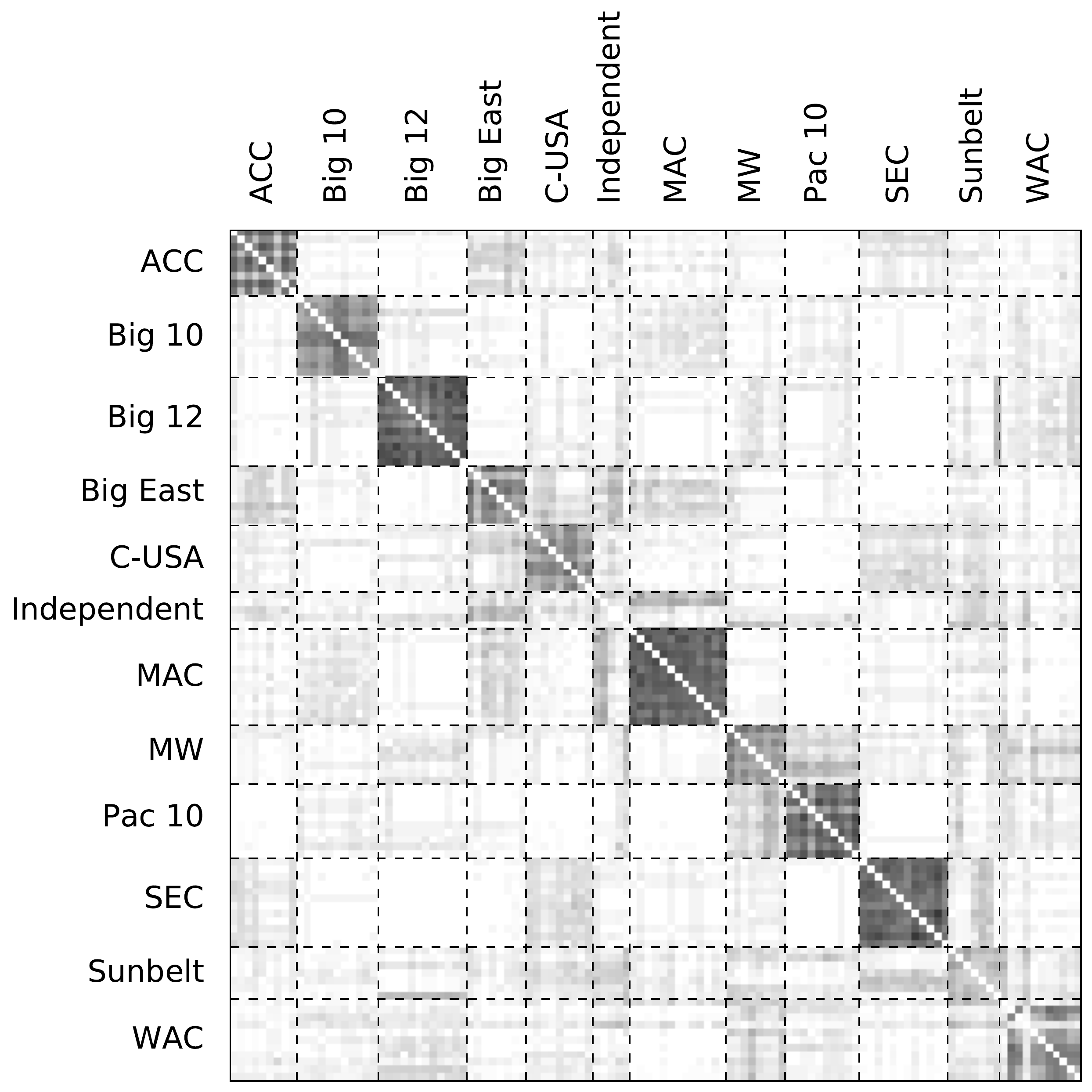}
        \caption{The result of neighborhood smoothing, rearranged according to
        conference membership.}
    \end{subfigure}

    \caption{\label{fig:football_smoothing} The neighborhood smoothing step as
    applied to the football network. The smoothing algorithm only has access to
    the input adjacency matrix as shown in (a), and not to the re-arranged matrix
    shown in (b).}
\end{figure}

\begin{figure}[h!]
    \centering
    \includegraphics[width=\textwidth]{./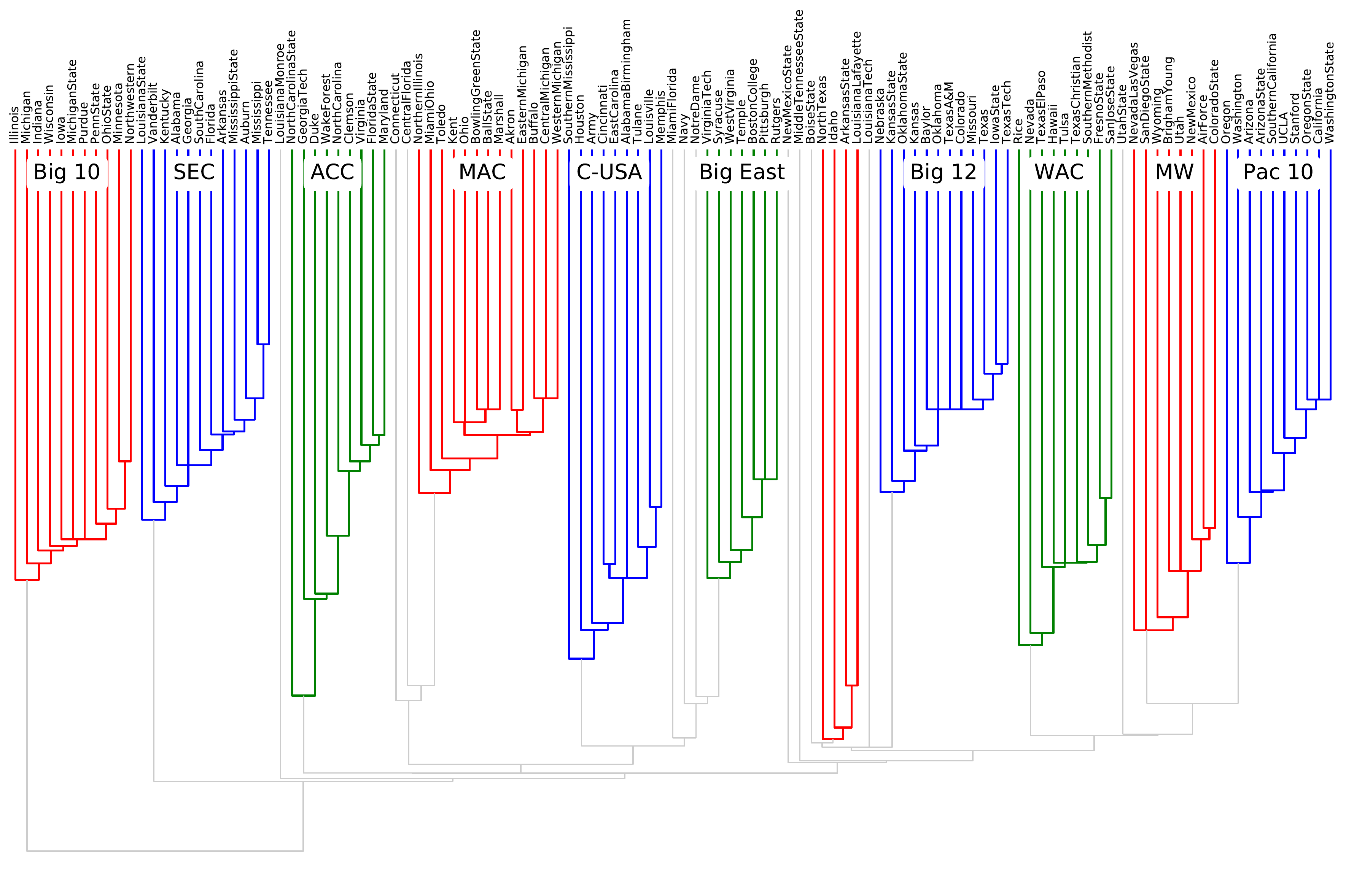}
    \caption{\label{fig:football_clustering} The result of \Cref{fig:algorithm}
        as applied to the football network. Nodes joining at higher levels of
        the tree are more similar. If all of the leaf nodes in a subtree belong
        to the same conference, every edge in the subtree is marked with the
        same color.  Different colors are used to distinguish such subtrees, but
        the particular color used is not meaningful. The conference labels in
        the figure are used to show where the majority of that conference's
        teams are in the clustering. Not marked is the Sun Belt conference, the
        majority of whose teams are placed between the Big East and Big 12, and
        the independent teams which belong to no conference in particular.}
\end{figure}

\clearpage

\subsection{Synthetic network sampled from a graphon}

In this experiment we apply \Cref{fig:algorithm} to a network sampled from the
graphon shown in \Cref{fig:tricky_graphon}. This graphon was chosen to
demonstrate a non-trivial case where a simple clustering method may yield the
incorrect result. The graphon consists of three large blocks along the diagonal
which take value 0.7. The first two of these blocks are joined by a small region
whose value is 0.5. As such, the cluster tree of this graphon is as shown in
\Cref{fig:tricky_cluster_tree}.

The adjacency matrix of a graph sampled from this graphon is shown in
\Cref{fig:tricky_adj}. The matrix in the figure has been rearranged in order to
show the cluster structure of the graph; The matrix given as input to the
smoothing algorithm is a permutation of this matrix. Smoothing was applied with
a neighborhood size parameter of $C = 0.1$. The result is shown in
\Cref{fig:tricky_p_hat}.

\begin{figure}
    \centering
    \begin{subfigure}[t]{.45\textwidth}
    \centering
    \includegraphics[width=.8\textwidth]{./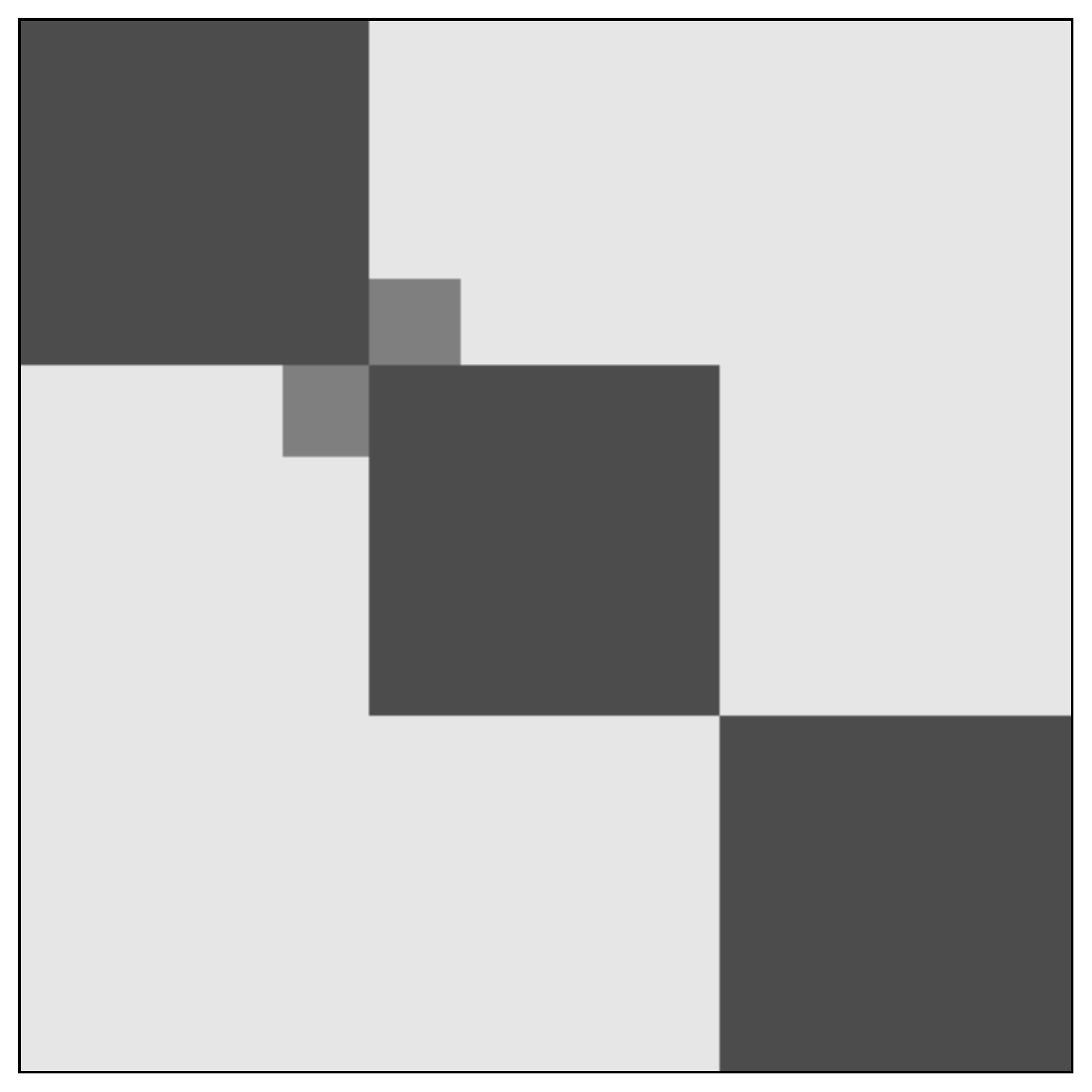}
    \caption{\label{fig:tricky_graphon} The graphon used in the synthetic
        experiment. The graphon takes on three values: The darkest region has a
        height of 0.7; the small, medium-dark blocks are of height 0.5; the
        remaining light area has value 0.1.  }
    \end{subfigure}%
    \hfill
    \begin{subfigure}[t]{.45\textwidth}
    \centering
    \includegraphics[width=.8\textwidth]{./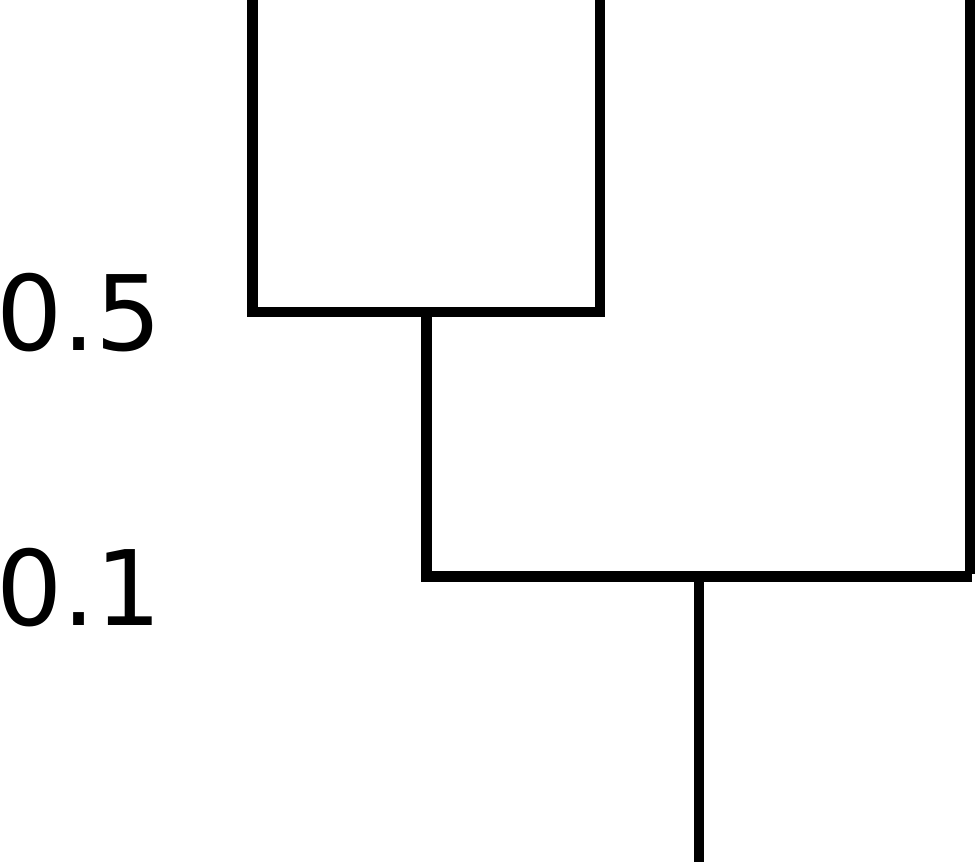}
    \caption{\label{fig:tricky_cluster_tree} The cluster tree of this graphon.
        The two leftmost blocks join at a height of 0.5. These join with the
        remaining block at 0.1.}
    \end{subfigure}\\[1em]
    \begin{subfigure}[t]{.45\textwidth}
    \centering
    \includegraphics[width=.8\textwidth]{./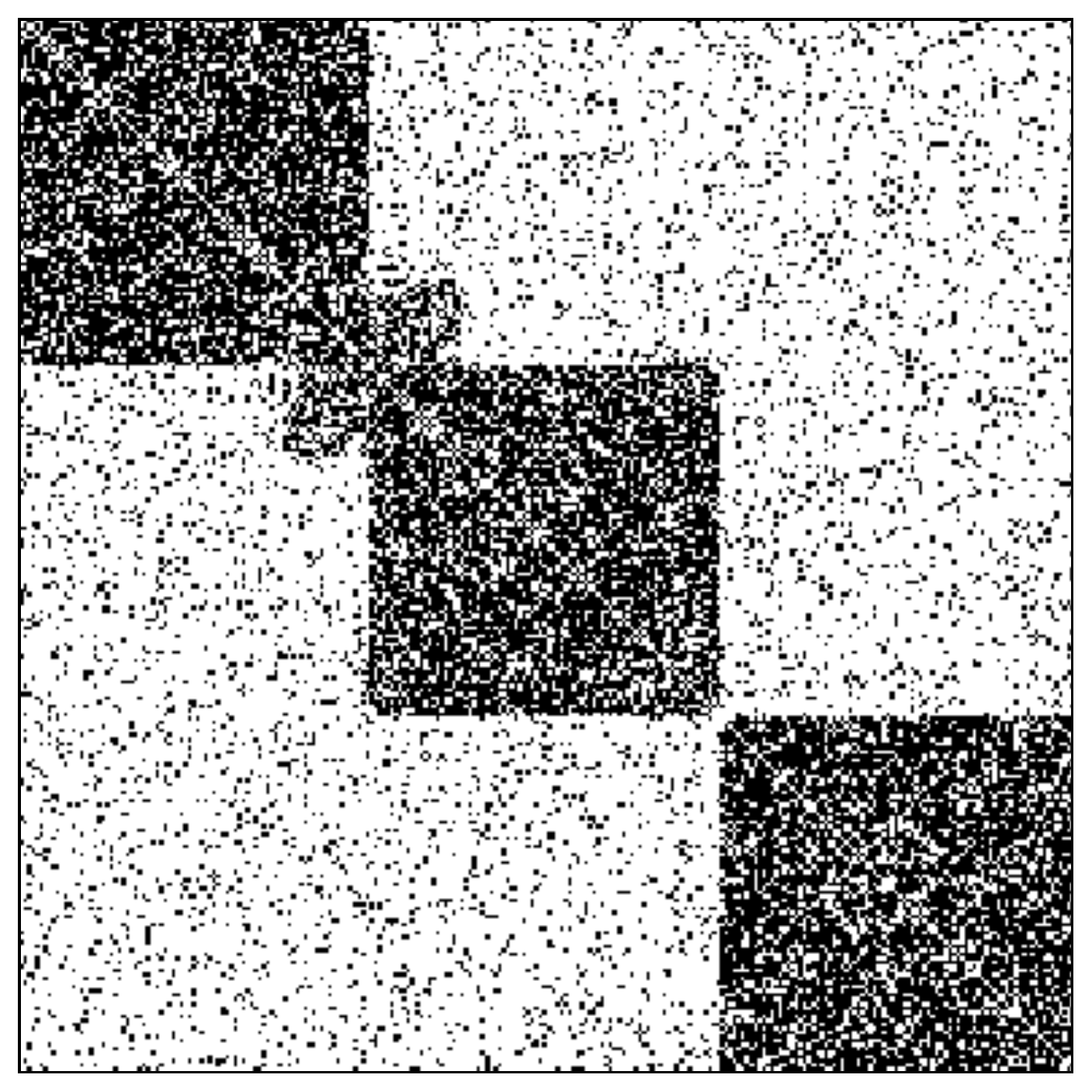}
    \caption{\label{fig:tricky_adj} An adjacency matrix sampled from the
        graphon, rearranged for the presentation (the algorithm receives a
    random permutation of this matrix.}
    \end{subfigure}%
    \hfill
    \begin{subfigure}[t]{.45\textwidth}
    \centering
    \includegraphics[width=.8\textwidth]{./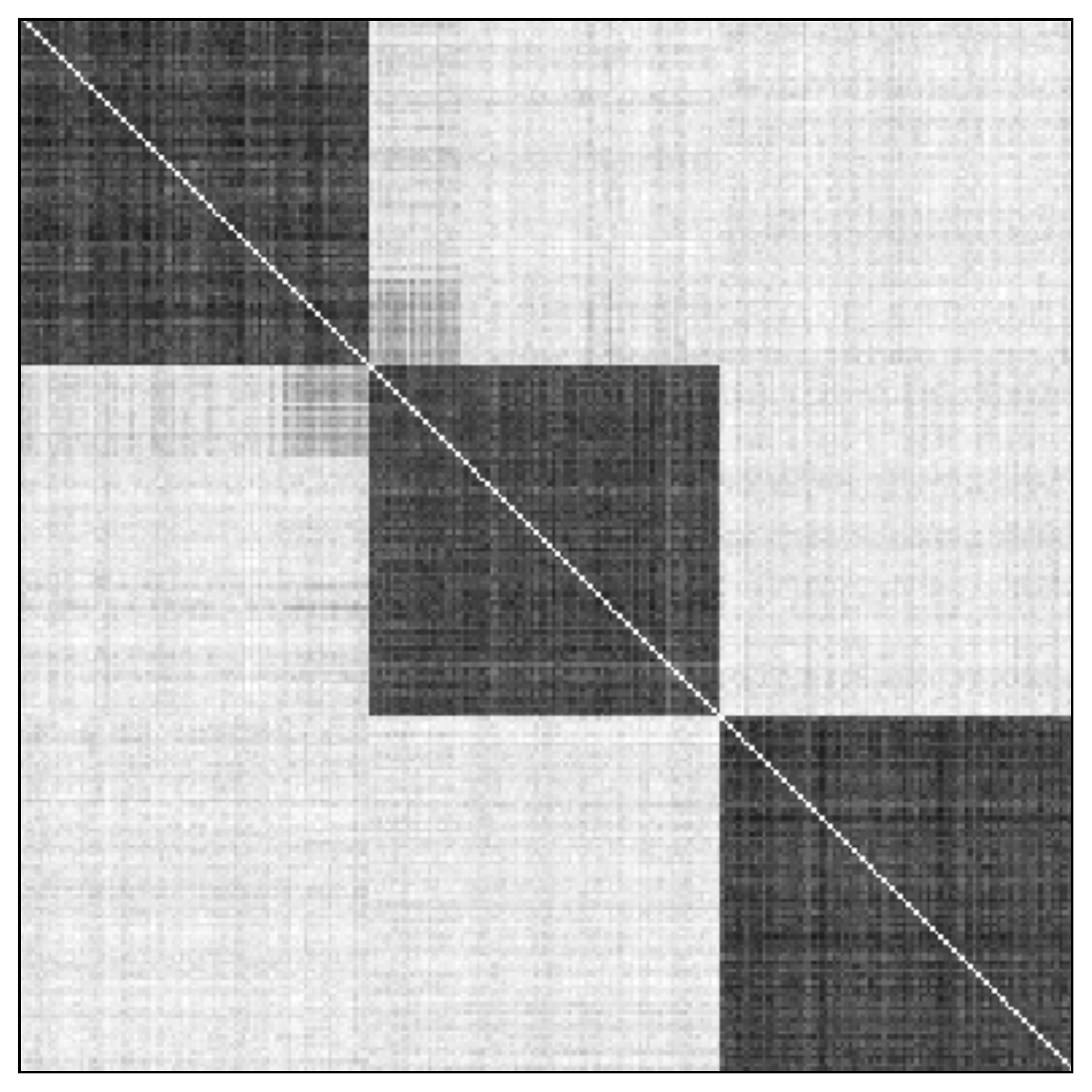}
    \caption{\label{fig:tricky_p_hat} The smoothed estimate of edge
        probabilities computed from the adjacency matrix at left.}
    \end{subfigure}
    \caption{\label{fig:tricky}}
\end{figure}

In the cluster step, single linkage is applied to the smoothed estimate of edge
probabilites. The resulting dendrogram is shown in \Cref{fig:tricky_clustering}.
Three major clusters are evident in the tree, two of which are joined at a
noticeably higher level. As we would expect from a consistent clustering method,
the dendrogram resembles the ground-truth cluster tree shown in
\Cref{fig:tricky_cluster_tree}.

On the other hand, one simple approach to network clustering fails. In this
approach, we use the pairwise distance between columns of the adjacency matrix
as input to single-linkage clustering. That is, for every $i, j \in \{1, \ldots,
n\}$, we use the matrix $D$ whose $i,j$ entry is $\|A_i - A_j\|$, where $A_i$
and $A_j$ are the $i$th and $j$th columns of $A$, respectively, and $\|\cdot\|$
is a suitable norm -- here, we use the 2-norm. Such a simple approach can often
work in practice; for example, this method works well on the football network in
the previous section. However, as the results shown in
\Cref{fig:tricky_distance_clustering} demonstrate, it does not work as well for
recovering the graphon cluster tree. Though the method appears to recover three
clusters, it does not join two of them at a significantly higher level.
Therefore the resulting tree does not resemble the graphon cluster tree. In
fact, is easily seen that this method is not consistent in the sense described
earlier.

\begin{figure}
    \begin{subfigure}[t]{\textwidth}
    \includegraphics[width=\textwidth]{./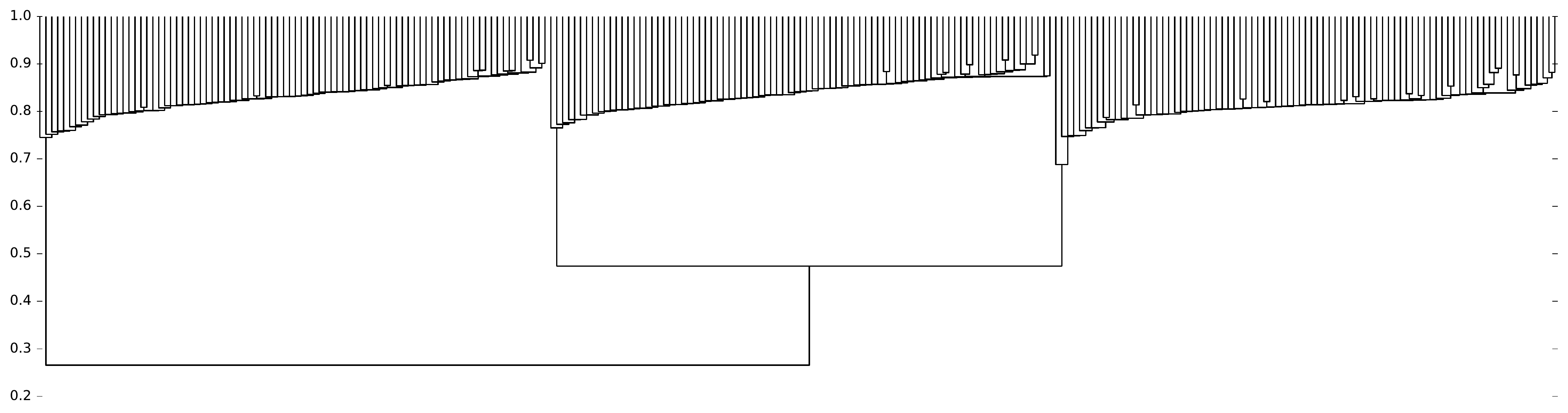}
    \caption{\label{fig:tricky_clustering} The result of applying
    \Cref{fig:algorithm} to the synthetic network generated from a graphon.}
\end{subfigure}\\[1em]
    \begin{subfigure}[t]{\textwidth}
    \includegraphics[width=\textwidth]{./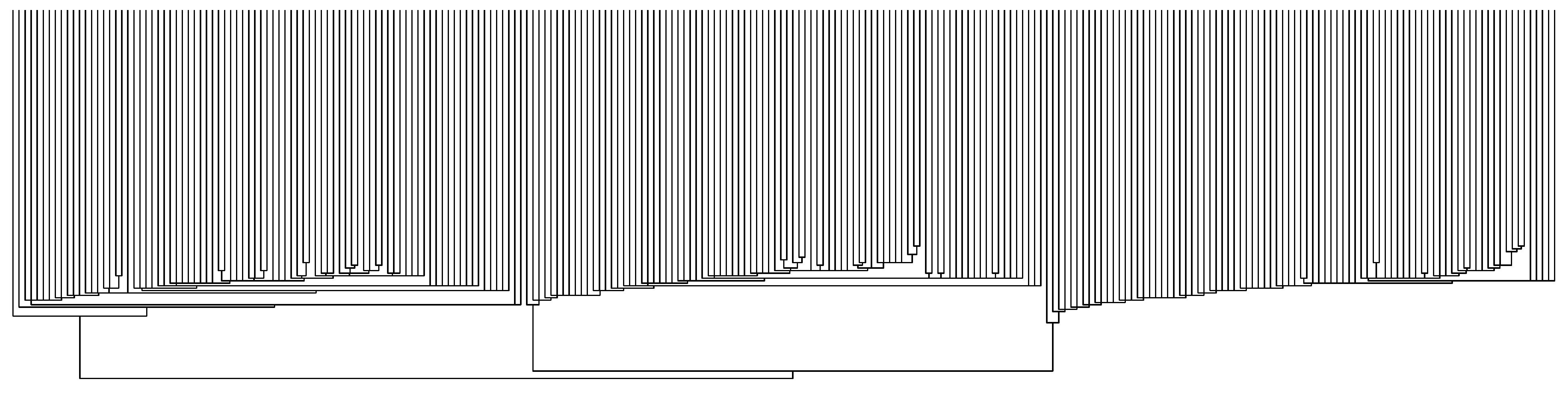}
    \caption{\label{fig:tricky_distance_clustering} The result of a simple,
        inconsistent clustering algorithm which applies single-linkage to the
    pairwise distances between the columns of the adjacency matrix.}
    \end{subfigure}
    \caption{\label{fig:tricky_clusterings}}
\end{figure}

\end{document}